\documentclass[smallcondensed]{svjour3}     
\setcounter{tocdepth}{3}
\RequirePackage{fix-cm}
\usepackage{multirow}
\usepackage{algorithm}
\usepackage{algorithmic}
\usepackage{rotating}
\usepackage{subcaption}
\usepackage{booktabs}
\usepackage{graphicx}
\usepackage{mathtools}
\usepackage{slashbox}
\usepackage{empheq}
\usepackage{makecell}
\usepackage{float}
\usepackage{wrapfig}

\usepackage{hyperref}

\usepackage{xcolor}
\hypersetup{
	colorlinks,
	linkcolor={red},
	citecolor={black},
	urlcolor={blue}
}
\usepackage{makecell}
\usepackage{amssymb}
\usepackage{amsmath}

\newcommand{\imageo}{{\tt ImagesI}}
\newcommand{\imaget}{{\tt ImagesII}}
\newcommand{\imageth}{{\tt ImagesIII}}
\newcommand{\digito}{{\tt DigitI}}
\newcommand{\digitt}{{\tt DigitII}}
\newcommand{\cancer}{{\tt BrCancer}}

\newcommand{\arr}{{\tt Arrythmia}}
\newcommand{\wine}{{\tt Wine}}
\newcommand{\yeast}{{\tt Yeast}}

\newcommand{\hide}[1]{}

\newcommand{\method}{{\sc x-PACS}} 
\newcommand{\p}{{\em pack}}
\newcommand{\ps}{{\em packs}}
\newcommand{\pg}{{\em packing}}
\newcommand{\sub}{{\sc SubClus}}

\newcommand{\ben}{\begin{enumerate}}
	\newcommand{\een}{\end{enumerate}}
\newcommand{\bit}{\begin{itemize}}
	\newcommand{\eit}{\end{itemize}}

\newcommand{\LINEIF}[2]{\STATE \algorithmicif\ {#1}\ \algorithmicthen\ {#2} }

\newcommand{\mA}{\mathcal{A}} 
\newcommand{\mF}{\mathcal{F}} 
\newcommand{\mN}{\mathcal{N}} 
\newcommand{\mD}{\mathcal{D}} 
\newcommand{\mP}{\mathcal{P}} 
\newcommand{\mR}{\mathcal{R}} 
\newcommand{\mS}{\mathcal{S}}

\newcommand{\mE}{\mathcal{E}}
\newcommand{\mT}{\mathcal{T}}

\newcommand{\mc}{\mathbf{c}} 
\newcommand{\mx}{\mathbf{x}} 
\newcommand{\mw}{\mathbf{w}} 
\newcommand{\mM}{\mathbf{M}} 
\newcommand{\mU}{\mathbf{U}} 
\newcommand*\widefbox[1]{\fbox{\hspace{1em}#1\hspace{1em}}}

\usepackage{multibib}
\newcites{latex}{Appendix References}%
\usepackage{amssymb}
\usepackage{balance}



\begin{document}


\title{Explaining Anomalies in Groups with Characterizing Subspace Rules}

\author{Meghanath Macha \and Leman Akoglu}


\institute{Meghanath Macha and Leman Akoglu  \at
              Heinz College, Carnegie Mellon University \\
              5000 Forbes Avenue, Pittsburgh, PA 15213\\
              \email{\{meghanam, lakoglu\}@andrew.cmu.edu}           
         }

\date{Received: date / Accepted: date}

\maketitle

\begin{abstract} 

\sloppy{
Anomaly detection has numerous applications and has been studied vastly.
We 
consider a complementary problem that has a much sparser literature: anomaly \textit{description}.
Interpretation of anomalies 
is crucial for practitioners for sense-making, troubleshooting, and planning actions. 
To this end, we present a new approach called \method~(for eXplaining Patterns of Anomalies with Characterizing Subspaces), which ``reverse-engineers'' the known anomalies by identifying (1) the groups (or patterns) that they form, and (2) the characterizing subspace and feature rules that separate each anomalous pattern from normal instances.
Explaining anomalies in groups not only saves analyst time and gives insight into various types of anomalies, but also draws attention to potentially critical, repeating anomalies. In developing \method, we first construct a desiderata for the anomaly description problem.
 From a descriptive data mining perspective, our method exhibits five desired properties in our desiderata. Namely, it can unearth anomalous patterns
 ($i$) of  {multiple different types},
($ii$) hidden in {arbitrary subspaces} of a high dimensional space, 
($iii$) {interpretable} by human analysts,
($iv$)  {different from normal} patterns of the data, and finally
($v$) succinct, providing a short data description.
No existing work on anomaly description satisfies all of these properties simultaneously.
Furthermore, \method~is highly parallelizable; it is linear on the number of data points and exponential on the (typically small) largest characterizing subspace size. 
The anomalous patterns that \method~finds constitute interpretable ``signatures'', and 
while it is not our primary goal,  they can be used for anomaly detection.
Through extensive experiments on real-world datasets, we show the effectiveness and superiority of \method~in anomaly explanation over various baselines, and demonstrate its competitive detection performance as compared to the state-of-the-art. 
}
 
\end{abstract}


\section{Introduction}
\label{sec:introduction}

Given a large dataset containing normal and {labeled} anomalous points, how can we \textit{characterize} the anomalies?
What combinations of features and feature values make the anomalies stand out?
Are there {\em anomalous  patterns}, that is, do anomalies form \textit{groups}?
How many different types of anomalies (or groups) are there, and how can we  \textit{describe} them succinctly for downstream investigation and decision-making by analysts?

Anomaly mining is important for numerous applications in security, medicine, finance, etc., for which many \textit{detection} methods exist \cite{books/sp/Aggarwal2013}.
In this work, we consider a complementary problem to this vast body of work: the problem of anomaly \textit{description}.
Simply put, we aim to find human-interpretable explanations to already identified anomalies.
Our goal is ``reverse-engineering''  known anomalies by unearthing their hidden characteristics---those that make them stand out. 

The problem arises in  a variety of scenarios, 
in which we obtain labeled anomalies, albeit no description 
of the anomalies that could facilitate their interpretation. 
Example scenarios are those where
\bit
 \item[(1)] the detection algorithm is a ``black-box'' and only provides labels, due to intellectual property or security reasons (e.g., Yelp's review filter \cite{conf/icwsm/MukherjeeV0G13}), 
 \item[(2)]  the detection algorithm does not produce an interpretable output and/or cannot explicitly identify anomalous patterns (e.g., ensemble detectors like bagged LOF \cite{conf/kdd/LazarevicK05} or isolation forest \cite{conf/icdm/LiuTZ08}), and 
 \item[(3)]  the anomalies are identified via external mechanisms (e.g., when software or compute-jobs on a cluster  crash,  loan customers default,  credit card transactions get reported by card owners as fraudulent, products get reported by consumers as faulty, etc.).
This setting also arises when security experts set up {``honeypots''} to attract malicious users, and later study their operating mechanisms (often manually). Examples 
include fake followers of honeypot Twitter accounts \cite{conf/icwsm/LeeEC11} and fraudulent bot-accounts that click honeypot ads \cite{conf/sigcomm/DaveGZ12}.
\eit


Explaining anomalies is extremely useful in practice as anomalies are to be investigated by human analysts in almost all scenarios. Interpretation of the anomalies help the analysts in sense-making and knowledge discovery, troubleshooting and decision making (e.g., planning and prioritizing actions), and building better prevention mechanisms (e.g., policy changes).

Our work taps into the gap between anomaly detection and its end usage by analysts, and introduces \method~for \textit{characterizing the anomalies in high-dimensional datasets}.
 Our emphasis is explaining the anomalies {\em in groups}\footnote{In this text, phrases `anomalous pattern', `clustered anomalies', and `group of anomalies' are interchangeable.}.
We model the anomalies to consist of ($i$) various patterns (i.e., sets of clustered anomalies) and ($ii$) outliers (i.e., scattered anomalies different from the rest).
For example in fraud, malicious agents that follow similar strategies, or those who work together in ``coalition'', exhibit similar properties and form anomalous groups. 
Bots deployed for e.g., click or email spam also tend to produce similar footprints as they follow the same source of command-and-control.
At the same time, there may be multiple groups of fraudsters or bots with different strategies.

Explaining anomalies in groups has three key advantages: (1) it saves investigation time by providing a compact explanation, rather than the analyst having to go through anomalies one by one,
(2) it provides insights into the characteristics of different anomaly types,
and (3) importantly, it draws attention to anomalies that form patterns, which are potentially more critical as they are {repetitive}.

To lay out the challenges from a data mining perspective, we first introduce a list of desired properties (Desiderata 1--5) that  approaches to the problem of anomaly explanation should satisfy. We then summarize our contributions.

\subsection{Desiderata for Anomaly Description}
\label{ssec:des}

In a nutshell, anomaly explanation methods 
should effectively characterize different kinds of anomalies present in the data, 
handle high dimensional datasets, and produce human-interpretable explanations that are distinct from normal patterns as well as succinct in length.

 {D1~~\bf Identifying \textit{different types} of anomalies:} Anomalies are generated by mechanisms other than the normal.
Since such mechanisms can vary (e.g., different fraud schemes), it is likely for the anomalies to form multiple patterns in potentially different feature subspaces. A description algorithm should be able to characterize all types of anomalies.

{D2~~\bf Handling \textit{high-dimensionality}:} Data instances typically have tens or even hundreds of features.
It is meaningful to assume that the anomalies in a pattern exhibit only a (small) fraction of features in common. In other words, anomalies are likely to ``hide'' in sparse subspaces of the full space.

{D3~~\bf \textit{Interpretable} descriptions:} It is critical that the explanation of the anomalies can be easily understood by analysts.
In other words, descriptions should convey what makes a group of instances anomalous in a human-interpretable way.

{D4~~\bf \textit{Discriminative (or detection)} power:} Explanations of anomalies should not also be valid for normal points. In other words, descriptions should be discriminative and separate the anomalies from the normal points sufficiently well.
As a result, they could also help detect future anomalies of the same type.

{D5~~\bf \textit{Succinct} descriptions:} It is particularly important to have simple and concise representations, for ease of visualization and avoiding information overload. This follows the Occam's razor principle.

\subsection{Limitations of Existing Techniques}
\label{ssec:cont}

Providing interpretable explanations for anomalies is a relatively new area of study compared to anomaly detection. However, the problem has similarities to description based techniques for imbalanced datasets. Almost all existing work in the anomaly detection literature assume anomalies to be scattered, and try to explain them one at a time \cite{conf/icde/DangANZS14,conf/pkdd/DangMAN13,conf/cikm/KellerMWB13,conf/vldb/KnorrN99,conf/aaai/KuoD16,pevny2014interpreting}. Related work on collective data description \cite{conf/pkdd/GornitzKB09,journals/ml/TaxD04}, including rare class characterization \cite{conf/sdm/HeC10,conf/icdm/HeTC10}, assume a single pattern and/or do not look for subspaces. Other closely related areas are subgroup discovery and inductive rule learners. Subgroup discovery techniques  \cite{gamberger2002expert,herrera2011overview,klosgen1996explora,klosgen2002census,conf/cikm/LoekitoB08,vreeken2011krimp} aim to describe individual classes while inductive rule learners \cite{cohen1995fast,clark1989cn2,deng2014interpreting,friedman2008predictive,hara2016making} 
focus on describing multiple classes with the aim of generalization rather than explanation. Another related line of work involves techniques for explaining black-box classifiers \cite{fong2017interpretable,koh2017understanding,montavon2017methods,ribeiro2016should} where the emphasis is on explaining the prediction of a single instance rather than a group of instances.
Moreover, none of these work has an explicit emphasis on succinct, minimal descriptions.

To the best of our knowledge and as we expand in related work in \S \ref{sec:related}, there is no existing work that provides a principled and general approach to the anomaly description problem that meet all of the goals in our desiderata adequately. 
(See Table \ref{tab:related} for an overview and comparison of related work.)

\subsection{Summary of Contributions}
\label{ssec:cont}
Our work sets out to fill the gap, with the following main contributions:

\bit
\setlength{\itemsep}{0.05in}
\item {\bf Desiderata for Anomaly Description:} We introduce a new desiderata and target five rules-of-thumb (D1--D5) for designing our approach. 

\item {\bf Description-in-Groups ({\sc DiG}) Problem:} We formulate the explanation problem as one of identifying the various groups that the anomalies form (D1) within low-dimensional subspaces (D2).

\item {\bf Description Algorithm \method:} We introduce a new algorithm that produces interpretable  rules (D3)---intervals on the features in each subspace, that are also discriminative (D4)---characterizing the anomalies in the group within a subspace but as few normal points as possible.

\item {\bf A New Encoding Scheme:} We design a new encoding-based objective for describing the anomalies in groups,
based on the minimum description length (MDL) principle \cite{Rissanen78}. Through non-monotone submodular optimization we carefully select the minimal subspace rules (D5) that require the fewest `bits' to collectively describe all the anomalies. 
\eit

\noindent
\textbf{Reproducibility:} All of our code and datasets are open-sourced at {{\url{https://github.com/meghanathmacha/xPACS}}}.


\hide{
	{\bf Example:~} Consider two types of fraudulent accounts that post fake reviews on an online marketplace, like Amazon, to promote products.
	\textit{Type 1):} 20 accounts created by the seller of a set of 5 products.
	S/he uses those accounts to post 5-star reviews and copy-pastes some of the reviews across accounts to save time.
	\textit{Type 2):} 50 accounts that belong to separate employees in a ``review-farm'' company who are paid by various sellers to write fake reviews on a daily basis. To camouflage, they vary their ratings between 3-5 stars and do not copy paste any text.
	
	Consider a long list of user account features extracted from review data that includes the following three: ($i$) maximum number of reviews per day, ($ii$) entropy of review ratings, ($iii$) maximum text-similarity of reviews to others. 
	As shown in Fig. \ref{fig:eg}, 
	normal accounts tend to have a varying number of reviews/day, where the rating entropy tends to increase with increasing number of reviews, and text similarity does not exceed a certain value.
	The 70 suspicious accounts above can be succinctly explained under two anomalous patterns:
	\textit{1)} accounts with review similarity greater than 0.95 and \textit{2)} accounts with up to 40--55 reviews/day and rating entropy 0.2--0.4. 
	These patterns have small characterizing subspaces, respectively 1- and 2-dimensional.
}


%
%
%
%

%
%
\section{Overview and Problem Statement}
\label{sec:overview}

	In a nutshell, \textit{our goal is to identify a few, small micro-clusters of anomalies hidden in arbitrary feature subspaces 
		that collectively and yet succinctly represent the anomalies and separate them from the normal points}.
	Specifically, our proposed 
	\method~finds a small set of low-dimensional hyper-ellipsoids (i.e., micro-clusters corresponding to \textit{anomalous patterns} each enclosing a subset of the anomalies),
	and reveals scattered anomalies (i.e., outliers not contained in any ellipsoid).
	
	Features that are part of the subspace in which a hyper-ellipsoid lies constitute its \textit{characterizing subspace}.
	Ranges of values these features take are further characterized by the location (center and radii) of a hyper-ellipsoid within the subspace.
	Each hyper-ellipsoid is simply a ``{\em pack} of anomalies'' (hence the name \method\footnote{{\sc X-} refers to the number of packs, which we automatically identify via our data encoding scheme (\S \ref{ssec:summarize}). 
		We use this naming convention after {\sc X-means} \cite{pelleg00xmeans}, which finds the number of k-means clusters automatically in an information-theoretic way.}).
	The rest of the paper uses `hyper-ellipsoid' or `\p'~in reference to anomalous patterns.
	
\subsection{Example \method~input-output} In Fig.~\ref{fig:eg} we show an example of the input and output of \method. We consider the face images dataset, in which \method~identifies a minimum-description packing with two anomalous patterns and an outlier. In Fig.~\ref{fig:ega}, we visualize the dataset, where pixels are dimensions/features, that contains 9 labeled `anomalies': [images 1--8] of 2 types (people w/ sunglasses or people w/ white t-shirt or both) + [image 9] an outlier (one person w/ beard). We also show [image 10], which is representative of 82 normal samples (people w/ black t-shirt w/out beard or sunglasses). In Fig.~\ref{fig:egb}, we display the anomalous patterns found by \method~(characterizing subspaces are 1-d, feature rules/intervals shown at the bottom with arrows---the smaller, the darker the pixel) together explain anomalies 1--8 succinctly; 1-d pack (left) encloses images \{1--6\}, 1-d pack (right) encloses images \{1,2,5,7,8\}. Corresponding features/pixels highlighted on enclosed images. In Fig.~\ref{fig:egc}, we plot the description length (in bits, see \S\ref{sec:mdlcoding}) of anomalies individually (0 packs), vs. w/ 1--5 packs. \method~automatically finds the best number of patterns (=2) that describe the anomalies and reveals the (unpacked) outlier [image 9].

%

\begin{figure*}[!t]
        \centering
    \begin{subfigure}[t]{0.35\textwidth} 

        \centering
        	 \hspace{-0.2in}
              \includegraphics[width = 0.95\textwidth]{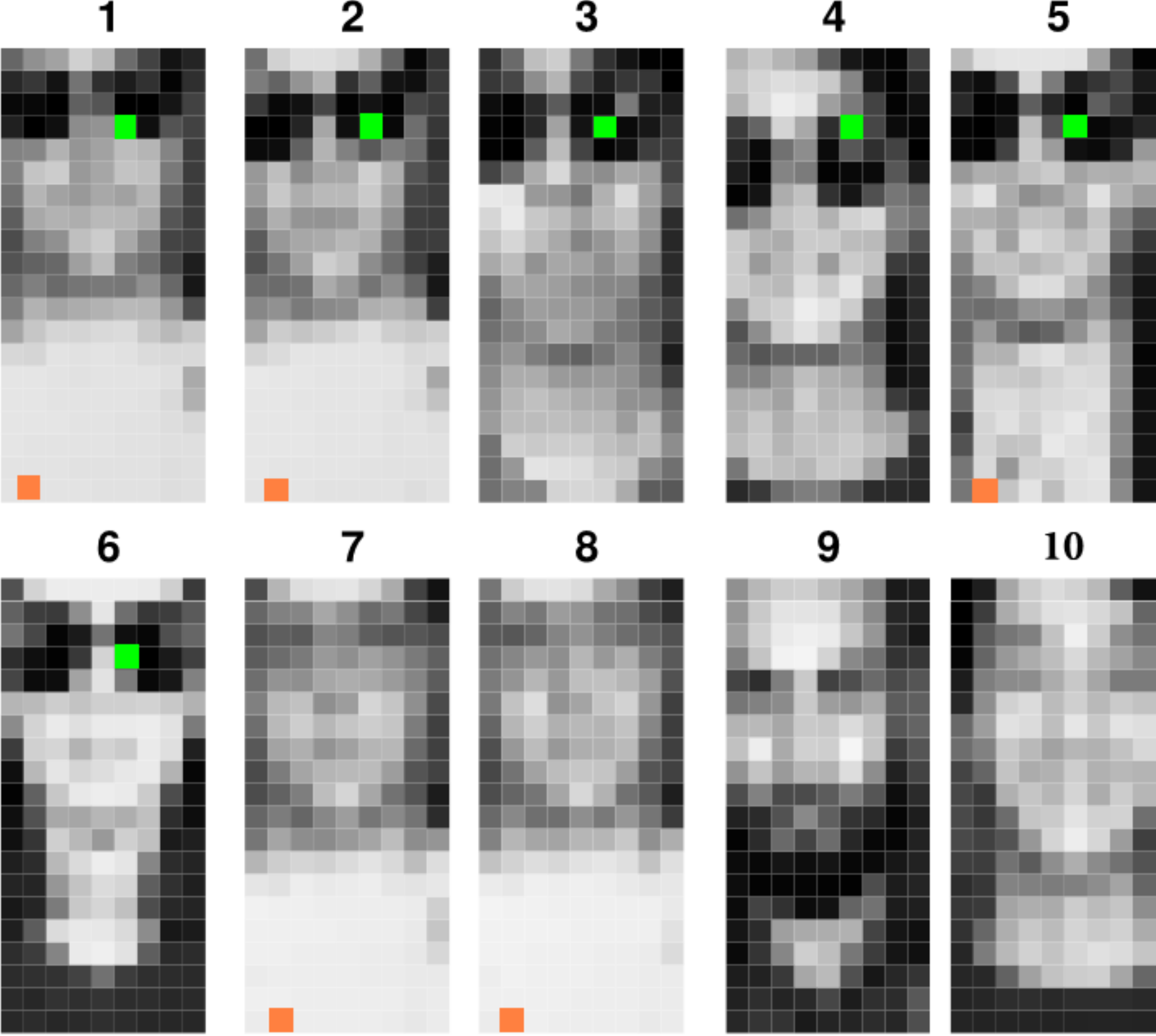}
    \hspace{-0.2in}     \caption{[1--9]: anomalies, \\and [10]: typical normal image}
        \label{fig:ega}
    \end{subfigure}%
    \begin{subfigure}[t]{0.26\textwidth} 
    
        \centering
         \includegraphics[width = 0.75\textwidth]{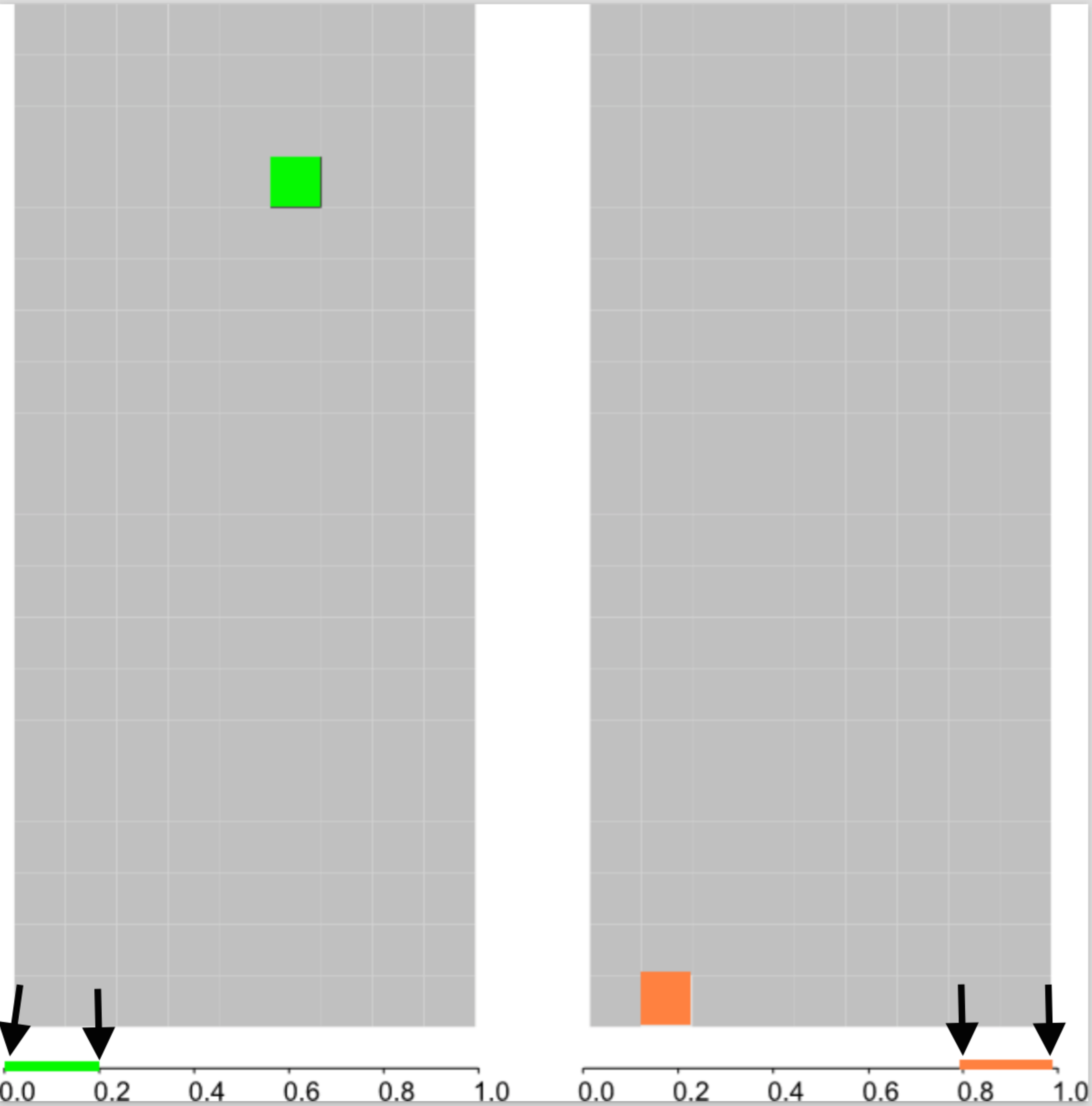}
         
        \caption{2 anomalous patterns found by \method}
        \label{fig:egb}
    \end{subfigure}%
    \begin{subfigure}[t]{0.41\textwidth} 
   
        \hspace{-0.1in}
              \includegraphics[width=0.98\textwidth]{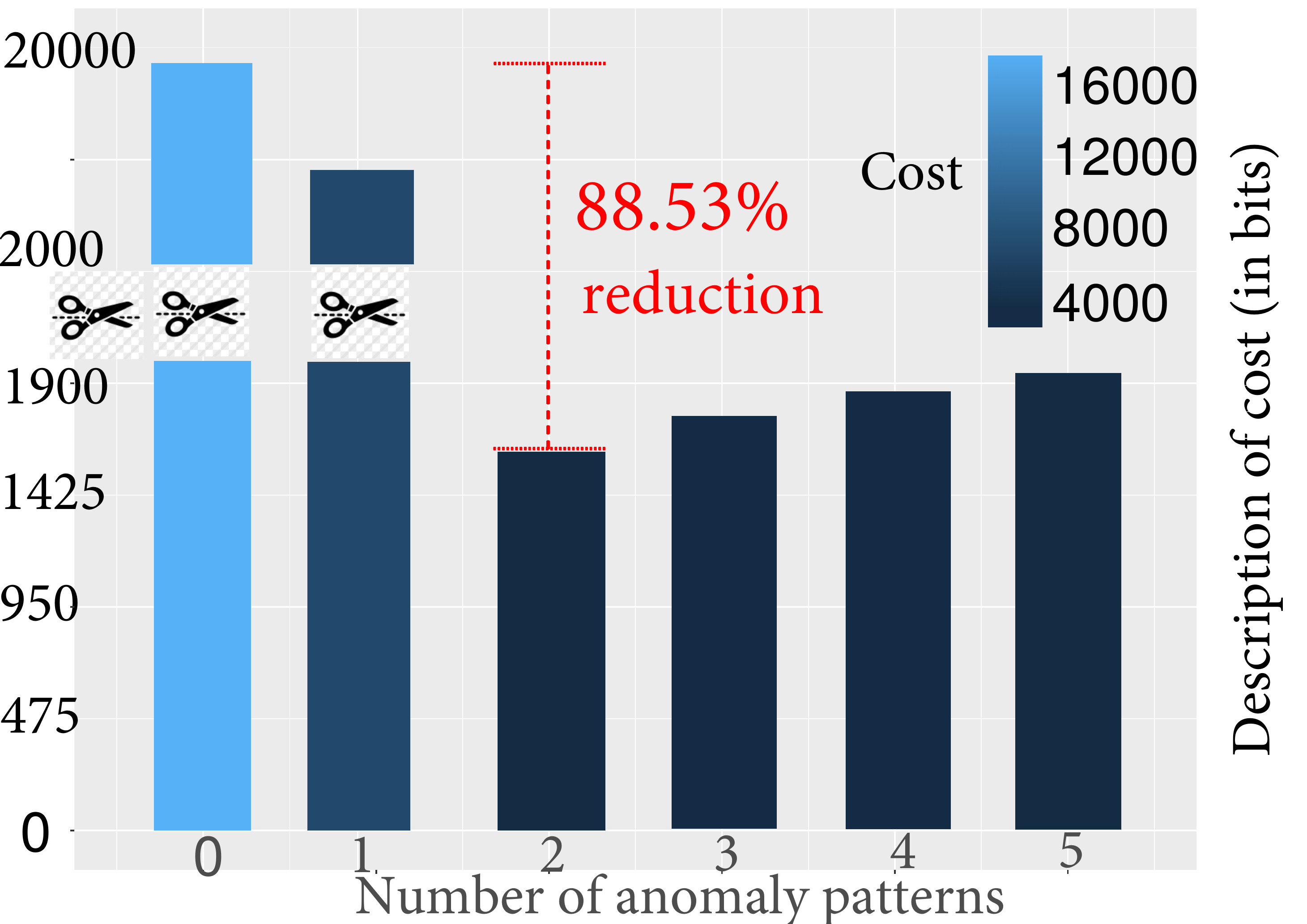}  
        \caption{description cost vs. \#\ps}
         \label{fig:egc}
    \end{subfigure}
           \hspace{0.1in}\caption{(best in color) Example \method~input--output.}
           \label{fig:eg}
\end{figure*}

\subsection{Main Steps}
	
	Our \method~consists of three main steps, each aiming to meet various criteria in our desiderata (D1--D5).
	\begin{enumerate}
		\setlength{\itemsep}{0.05in}
		\item
		First we employ subspace clustering to automatically identify
		multiple clusters of anomalies (D1) embedded in various feature subspaces.
		Advantages of subspaces are two-fold: handling ``curse of dimensionality'' (D2) and explaining each pattern with only a few features (D5).
		
		\item
		In the second step, we represent anomalies in each subspace cluster by an axis-aligned hyper-ellipsoid.
		Ellipsoids, in contrast to hyperballs, allow for varying spread of anomalies in each dimension. Axis-alignment ensures interpretable explanation with original features, which typically have real meaning to a user (D3).
		Moreover, we introduce a convex formulation to ensure that the ellipsoids are ``pure'' and enclose very few non-anomalous points, if at all, such that the characterization is discriminative (D4).
		
		\item
		Final step is summarization, where we strive to generate minimal descriptions for ease of comprehension (D5). To decide which patterns describe the anomalies most succinctly, we introduce an encoding scheme based on the Minimum Description Length (MDL) principle \cite{Rissanen78}.
		Our encoding-based objective lends itself to non-monotone submodular function maximization. Using an algorithm with approximation guarantees we identify a short list of patterns (hyper-ellipsoids) that are 
		($i$) compact with small radii, i.e., range of values that anomalies take per feature in the characterizing subspace is narrow;  
		($ii$) non-redundant, which ``pack'' (i.e., enclose) mostly different anomalies in various subspaces, and
		($iii$) pure, which enclose either none or only a few normal points.
		Importantly, the necessary number of packs is automatically identified based on the MDL criterion.
		
	\end{enumerate}

	{\bf Remark:} Note that while \method~identifies descriptive patterns of the anomalies, those can also be used for detection.
	Each pattern, along with its characterizing features and its enclosing boundary within that subspace can be seen as a discriminative {\em signature} (or set of rules), and can be used to label future instances---a new instance that falls within any of the \ps~is labeled as anomalous. Instead of a single signature or an abstract classifier function or model, however, \method~identifies multiple, interpretable signatures.


\subsection{Notation and Definitions}


Input dataset is denoted with $\mD=\{(\mx_1,y_1),\ldots,(\mx_m,y_m)\}$, containing $m$ points in $d$-dimensions, where $\mF$ depicts the feature set. A subset $\mA\subset \mD$ of points are labeled as $y_{\mA}=$ `anomalous', $|\mA|=a$. The rest are $y_{\mD\backslash \mA}=$ `normal' points, denoted by $\mN$, $|\mN|=n$, $a+n=m$.

Our goal is to find ``enclosing shapes'', called \ps, that collectively contain as many of the anomalies as possible. While arbitrary shapes would allow for higher flexibility, we restrict these shapes to the hyper-ellipsoids family for ease of interpretation.
This is not a strong limitation, however, since anomalous patterns are expected to form compact micro-clusters in some feature subspaces, rather than lie on arbitrarily shaped manifolds. A \p~is formally defined as follows.
  
\begin{definition}[pack]
	\label{def:pack}
	A \p~$p_k$ is a hyper-ellipsoid in a feature subspace $\mF_k\subseteq \mF$, $|\mF_k| = d_k$,  characterized by its center $\mc_k \in \mathbb{R}^{d_k}$ and matrix $\mM_k \in \mathbb{R}^{d_k\times d_k}$ where
	$$
	p_k(\mc_k,\mM_k) = \{ \mx| \; (\mx-\mc_k)^T \mM_k^{-1} (\mx-\mc_k) \leq 1 \} \;.
	$$ 
	We denote the anomalies that $p_k$ encloses by $\mA_k \subseteq \mA$, and the normal points that it encloses by $\mN_k\subset \mN$.
\end{definition}

\begin{definition}[packing]
A \pg~$\mP$ is a collection of \ps~as defined above; $\mP = \{p_1(\mc_1,\mM_1), \ldots, p_K(\mc_K,\mM_K)\}$ with size $K$.
\end{definition}

\subsection{Problem Statement}

Based on the above definitions, our description-in-groups problem is formally:

\begin{problem}
	\label{prob}
	\textbf{Given} a dataset $\mD \in \mathbb{R}^{m\times d}$ containing $a$ anomalous points in $\mA$ and $n$ non-anomalous or normal points in $\mN$, $a\ll n$; 
	
	\textbf{Find} a set of anomalous patterns (\ps) $\mathcal{P} = \{p_1, p_2, \ldots, p_K\}$, each containing/enclosing a subset of the anomalies $\mA_k$,
	where $\bigcup_{1\leq k\leq K} \mA_k \subseteq \mA$,
	
	\textbf{such that} $\mP$ provides the minimum description length $L(\mA|\mD,\mP)$ (in bits) for the anomalies in $\mD$. (We introduce our MDL-based encoding scheme and cost function $L(\cdot)$ later in \S \ref{ssec:summarize}.)
\end{problem}

Note that while packs enclose different subsets of anomalies in general, any two packs can have some anomalous points in common (since an anomaly can be explained in different ways), i.e., $\mA_k\cap \mA_l \neq \emptyset \;\; \exists k,l$. Packs can also share common features in their subspaces (as different types of anomalies may share some common characteristics), i.e., $\mF_k\cap \mF_l \neq \emptyset \;\; \exists k,l$.
Moreover, the enclosing boundary of a \p~ may also contain some non-anomalous points.
These issues related to the redundancy and purity of the packs would play a key role in the ``description cost'' of the anomalies. When it comes to identifying a small set of \ps~out of a list of candidates, we formulate an encoding scheme as a guiding principle to selecting the smallest, least redundant, and the purest collection of \ps~that would yield the shortest description of all the anomalies.


\section{\method: Explaining Anomalies in Groups}
\label{sec:projections}

Next we present the details of \method, which consists of three building blocks:
\bit
\setlength{\itemindent}{.15in}
\setlength{\itemsep}{0.05in}
\item [\S\ref{ssec:rectangle}] \textbf{Subspace Clustering}: Identify clusters of anomalies in various subspaces
\item [\S\ref{ssec:refine}] \textbf{Refinement}: Transform box-like subspace clusters to pure and compact hyper-ellipsoids (or \ps)
\item [\S\ref{ssec:summarize}] \textbf{Summarization}: Select subset of \ps~that yields the minimum description length of anomalies
\eit
We present our algorithms for each of these next.

\subsection{Subspace Clustering: Finding Hyper-rectangles}
\label{ssec:rectangle}

In our formulation, we allow for anomalies to form multiple patterns, intuitively each containing anomalies of a different kind.
We model anomalous patterns as compact ``micro-clusters'' in various feature subspaces.

In the first step, we use a subspace clustering algorithm, similar to CLIQUE \cite{conf/sigmod/AgrawalGGR98} and ENCLUS \cite{conf/kdd/ChengFZ99}, that discovers subspaces with high-density anomaly clusters in a bottom-up, Apriori fashion.
There are two main differences that we introduce. First, while prior techniques focus on a density (minimum count or mass) criterion, we use two criteria: ($i$) mass and ($ii$) purity, in order to find clusters that respectively contain many anomalous points, but also a low number of normal points. 
Second, we do not enforce a strict grid over the features but find varying-length high-density intervals through density estimation in a data-driven way. 

Simply put, 
the search algorithm starts with identifying 1-dimensional intervals in each feature that meet a certain mass threshold.
These intervals are then combined to generate 2-dimensional candidate rectangles.  In general, $k$-dimensional hyper-rectangles are generated by merging $(k-1)$-dimensional ones that meet the mass criterion in a hierarchical fashion.
Thanks to the monotonicity property of  mass, the search space is pruned efficiently.
Hyper-rectangles generated during the course of the bottom-up algorithm that meet \textit{both} the mass and purity criteria are reported as clusters.
A hyper-rectangle is formally defined as follows.

\begin{definition}[hyper-rectangle]
	\label{def:hrec}
	Let $\mF = f_1 \times f_2 \times \ldots \times f_d$ be our original $d$-dimensional numerical feature space.
	A hyper-rectangle $R = (s_1, s_2, \ldots, s_{d'})$, $d'\leq d$, resides in a space $f_{t_1}\times f_{t_2} \times \ldots \times f_{t_{d'}}$ where $t_i<t_j$ if $i<j$, and has $d'$ sides, $s_z = [lb_z,ub_z]$, that correspond to individual intervals with lower and upper bounds in each dimension.
	A point $\mathbf{x}=\langle x_1, x_2, \ldots, x_d \rangle$ 
	is said to be contained or enclosed in hyper-rectangle $R = (s_1, s_2, \ldots, s_{d'})$, if $lb_z\leq \mathbf{x}_{t_z} \leq ub_z$ $\;\forall z =\{1,\ldots, {d'}\}$.
\end{definition}

		\begin{algorithm}[!t]{\caption{{\sc SubClus}~($\mD, ms,\mu$)}
				\label{alg:subclus}}
			{
				\begin{algorithmic}[1]
					\REQUIRE dataset $\mD=\mA\cup \mN \in \mathbb{R}^{m\times d}$ with labeled anomalous and normal points, mass threshold $ms \in \mathbb{Z}$, purity threshold $\mu \in \mathbb{Z}$
					\ENSURE set of hyper-rectangles $\mR = \{R_1, R_2, \ldots\}$ each containing min. $ms$ anomalous \& max. $\mu$ normal points \\
					\STATE Let $\mR^{(k)}$ denote $k$-dimensional hyper-rectangles. 
					Initialize $\mR^{(1)}$ by kernel density estimation with varying quantile thresholds in $q=\{80,85,90,95\}$, set $k=1$
					
					\FOR{{\bf each} hyper-rectangle $R \in \mR^{(k)}$}
					\IF {$\text{mass}(R) \geq ms$}
					\LINEIF{$\text{impurity}(R) \leq \mu$}{$\mR_{pure}^{(k)} = \mR_{pure}^{(k)} \cup R$\\ {\bf else } $\mR_{\neg pure}^{(k)} = \mR_{\neg pure}^{(k)} \cup R$}
					\ENDIF
					\ENDFOR
					\STATE $\mR=\mR\cup \mR_{pure}^{(k)}$
					\STATE $\mR^{(k+1)} := \text{generateCandidates}(\mR_{pure}^{(k)} \cup \mR_{\neg pure}^{(k)} )$
					\LINEIF{$\mR^{(k+1)}=\emptyset$}{
						{\bf return} $\mR$} 
					\STATE  $k=k+1$, go to step 2
					
				\end{algorithmic}
			}
		\end{algorithm}

The outline of our subspace clustering  is  in Algorithm \ref{alg:subclus}.
It takes  dataset $\mD$ as input with anomalous and normal points, a mass threshold $ms$ equal to the minimum number of required anomalous points and a purity threshold $\mu$ equal to the maximum number of allowed normal points to be contained inside, and returns hyper-rectangles that meet the desired criteria.

To begin (line 1), we find 1-dimensional candidate hyper-rectangles, equivalent to intervals in individual features.
To create promising candidate intervals initially, we find dense intervals with many anomalous points.
To this end, we perform kernel density estimation (KDE\footnote{KDE involves two parameters - the number of points sampled to construct the smooth curve and the kernel bandwidth. We set the sample size to 512 points and use the Silverman's rule of thumb \cite{silverman2018density} to set the bandwidth.}) on the anomalous points and extract the intervals of significant peaks.\footnote{For categorical features, we would instead use histogram density estimation.} This is achieved by extracting the contiguous intervals in each dimension with density larger than the $q$-th quantile of all estimated densities. $q$ is varied in $[80,95]$ to obtain candidate intervals of varying length. An illustration is given in Fig. \ref{fig:kde}. Since multiple peaks may exist, multiple intervals can be generated per dimension  as $q$ is varied. 


\begin{figure}[h]
	\centering
	\includegraphics[trim=0cm 1cm 0cm 0cm,  width=0.7\textwidth]{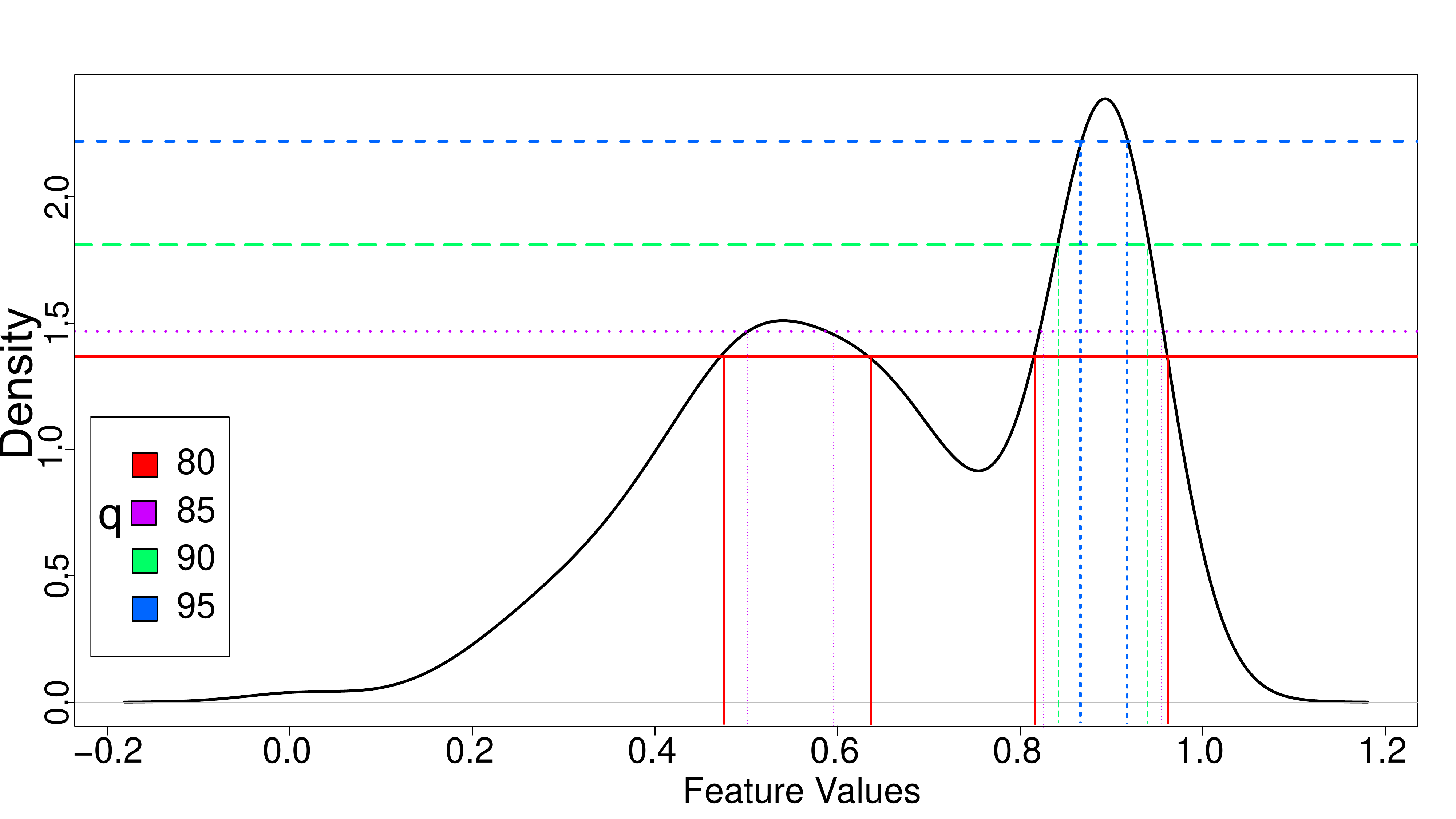}
		\caption{Identifying candidate hyper-rectangles in 1-d (equivalent to intervals) by KDE for varying quantile thresholds $q$.}
	\label{fig:kde} 
\end{figure}

At any given level (or iteration) of the Apriori-like {\sc SubClus} algorithm, we scan all the candidates at that level (line 2--6) and filter out the ones that meet the mass criterion (line 3).
Those that pass the filter are later merged to form candidates for the next level. Others with mass less than required are discarded, with no implications on accuracy. 
The correctness of the pruning procedure follows from the {\em downward closure property} of the mass criterion: for any $k$-dimensional hyper-rectangle with mass $\geq ms$, its projections in any one of $(k- 1)$-dimensions must also have mass $\geq ms$.

At each level, we also keep track of the hyper-rectangles that meet both the mass and the purity criteria (line 4). Purity exhibits the {\em upward closure property}: for any $(k-1)$-dimensional hyper-rectangle that is pure (i.e. contains $\leq \mu$ normal points), any $k$-dimensional hyper-rectangle that subsumes it is also pure.
This property could help us stop growing pure candidates by excluding them from the candidate generation step and speeding up the termination. While correct, however, such early-termination would prevent us from finding even purer hyper-rectangles later up in the hierarchy. 
To obtain as many candidate \ps~as possible, we continue our search for all hyper-rectangles that meet the mass criterion, and use the purity criterion for selecting the ones to be output (line 7).

The algorithm proceeds level by level. Having identified $k$-dimensional hyper-rectangles that satisfy the mass criterion, denoted $\mR_{\geq ms}^{(k)}=\mR_{pure}^{(k)} \cup \mR_{\neg pure}^{(k)}$ (respectively for pure and not-pure sets), $(k+1)$-dimensional candidates are generated (line 8) in two steps: join and prune.
The join step combines hyper-rectangles having first $(k-1)$ dimensions as well as sides in common.
That is, if $(s_{u_1},s_{u_2},\ldots,s_{u_k})$ and $(s_{v_1},s_{v_2},\ldots,s_{v_k})$ are two $k$-dimensional hyper-rectangles in $\mR_{\geq ms}^{(k)}$, we require $u_i = v_i$ and $s_{u_i} = s_{v_i}$ $\forall i \in \{1,\ldots,(k-1)\}$ and $u_k < v_k $ to form candidate $(k+1)$-dimensional hyper-rectangles of the form $(s_{u_1},s_{u_2},\ldots,s_{u_k},s_{v_k})$.
The prune step discards all $(k+1)$-dimensional hyper-rectangles that have a $k$-dimensional projection outside $\mR_{\geq ms}^{(k)}$. Again, the
correctness of this procedure follows from the downward closure property of mass.

\hide{
Finally, we filter out dominated hyper-rectangles before we return all of them that meet our mass and purity criteria (line 9). We define a hyper-rectangle $r \in S$ to be dominated, if there is some higher dimensional hyper-rectangle $r'$ in $S$ that $r$ subsumes, i.e. $r'\subseteq r$, where $mass(r') = mass(r)$.
In other words, provided equal mass, we prefer higher dimensional hyper-rectangles.
The reasons are two-fold: First,
$impurity(r') \leq impurity(r)$ as $r'\subseteq r$, i.e. $r'$ is likely also purer by the upward closure property of purity. Second, given that the two explain the same anomalies, we prefer the one with the more specific/detailed description.
}

{\bf Choice of $(ms,\mu)$:} To obtain hyper-rectangles of varying size and quality, packing potentially different anomalies (and non-anomalies), we run Algorithm \ref{alg:subclus} with ``conservative'' parameters, i.e., low $ms$ and high $\mu$. As such, to generate a good volume of candidates, we set $(ms,\mu)$ as the median of the number of anomalous points, normal points from the 1-dimensional hyper rectangles. Setting a higher $ms$ and lower $\mu$ would prune more (and potentially undesirably many) candidates in exchange of reduced time. We use the median to strike a balance between the quality and running time. As we describe later in \S \ref{ssec:summarize}, all these candidate packs are forwarded to a selection algorithm, which carefully chooses the subset that yields the shortest description of all the anomalies.
As such, even though there are parameters input to Algorithm \ref{alg:subclus}, we do not expect them from the user, rather we vary and set those so as to generate various candidate packs. Having more candidates is likely to increase our chance of finding a combination that explains the anomalies the best (i.e., fewest bits).

To conclude the description of the first step of our proposed approach, we note that other subgroup discovery techniques that aim to find subgroups in a given class, such as Krimp \cite{vreeken2011krimp}, could be used as an alternative to Algo. \ref{alg:subclus} provided necessary modifications are incorporated to enforce the purity criterion.


\subsection{Refining Hyper-rectangles into Hyper-ellipsoids}
\label{ssec:refine}

Grid or interval-based subspace clustering algorithms are limited to finding box-shaped rectangular clusters,  
and they may miss clusters inadequately oriented or shaped.
To allow  more flexibility, we refine each hyper-rectangle found by \sub~into a hyper-ellipsoid (which we call a \p, recall Definition \ref{def:pack}).
An ellipsoid with center $\mc$ is written as 
$$
p(\mc,\mM) = \{ \mx| \; (\mx-\mc)^T \mM^{-1} (\mx-\mc) \leq 1 \}
$$
for positive semi-definite matrix $\mM \succ 0$.

Given a hyper-rectangle $R$, let us denote the anomalous points it contains by $\mx_i \in \mA$ for $i=1, \ldots, a_R$ (See Def.n \ref{def:hrec})
and anomalous points outside $R$ by $\mx_j \in \mA$ for $j=a_{R+1}, \ldots, a$.
The normal points are denoted by $\mx_l \in \mN$ for $l = 1,\ldots, n$.

When we convert a given $R$ to an ellipsoid, we would like all $\mx_i$'s (anomalous points) it already contains to reside inside the ellipsoid. In contrast, we would like all $\mx_l$'s (normal points) to remain outside the ellipsoid.   
The refinement is achieved by enclosing as many as the other anomalous points ($\mx_j$'s) that are in the vicinity of $R$ inside the ellipsoid as well. Those would be the points that were left out due to axis-aligned interval-based box shapes that hyper-rectangles are limited to capture. An illustration is given in Fig. \ref{fig:refine}.

\begin{figure}[h]
	\centering
	\includegraphics[width = 0.55\textwidth]{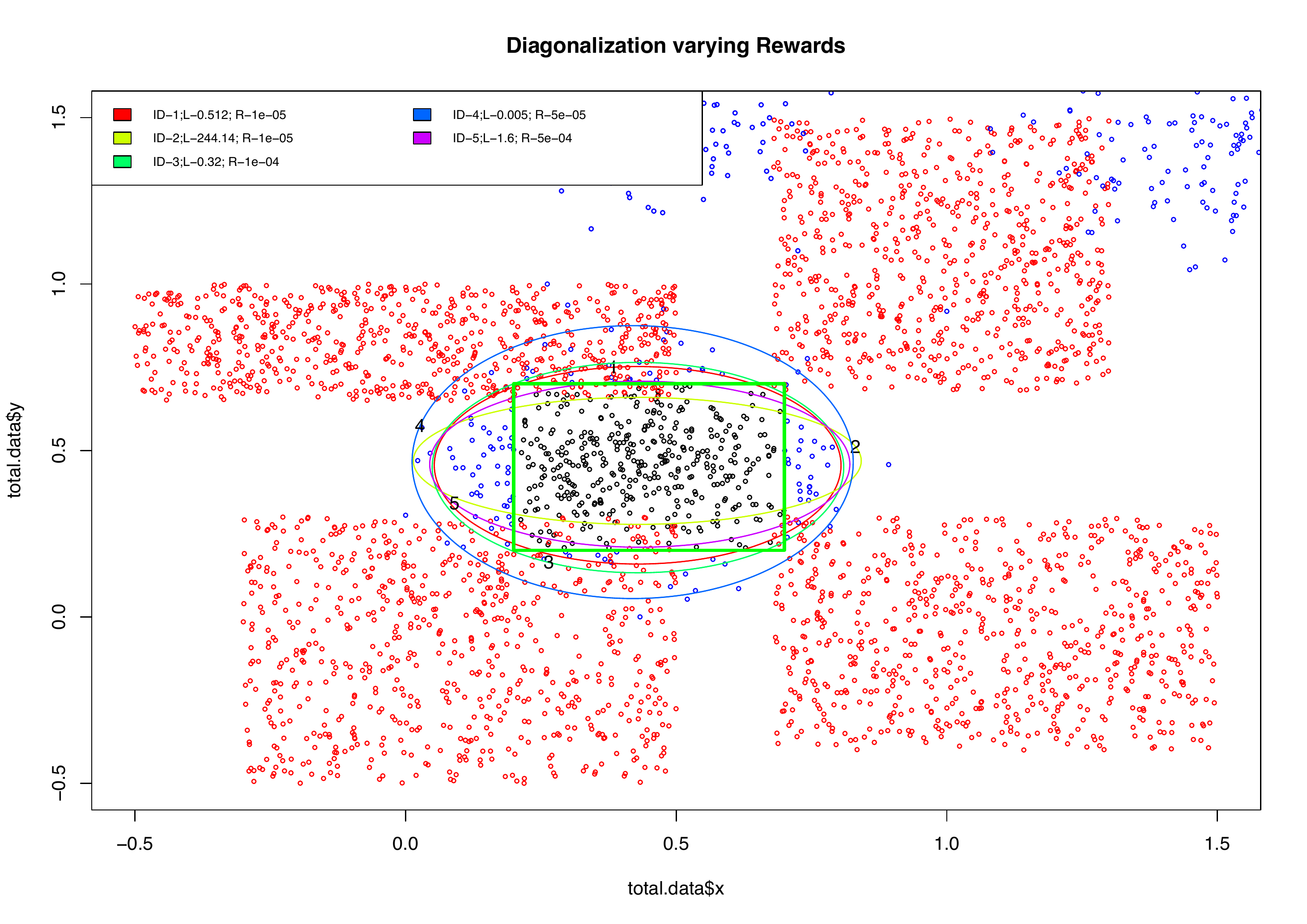}
	\caption{Example illustration of refining hyper-rectangles to ellipsoids in 2-d. Anomalous points (black) captured by {\sc SubClus} (Alg. \ref{alg:subclus}) in a (green) rectangle, other anomalous points (blue) in the vicinity, and normal points (red).}
	\label{fig:refine} 
	\vspace{-0.25in}
\end{figure}

First we describe our approach for $\mx_i$'s and $\mx_l$'s, the positive and negative points that we respectively aim to include and exclude.
The goal is to find a discriminating function $h(\cdot)$ where $h(\mx_i) > 0$ and $h(\mx_l)<0$.
To this end, we use the quadratic function $h(\mx) = \mx^T\mU\mx + \mw^T\mx + w_0$, with parameters $\mathbf{\Theta}=\{\mU,\mw,w_0\}$.
We solve for $\mathbf{\Theta}$ by setting up an optimization problem based on a semi-definite program (SDP), that satisfies $\mx_i^T\mU\mx_i + \mw^T\mx_i + w_0 >0$ for all $i$ and $\mx_l^T\mU\mx_l + \mw^T\mx_l + w_0 <0$ for all $l$. Most SDP solvers do not work well with strict inequalities, thus we modify to a non-strict feasibility problem by adding a margin, and solve (for each $R$):

{{
\begin{align*}
\min_{\mU,\mw,w_0} & \;\; \sum \varepsilon_i + \lambda \sum \varepsilon_l \\
s.t. \;\; & \;\; \mx_i^T\mU\mx_i + \mw^T\mx_i + w_0 \geq 1 - \varepsilon_i, \;\;\;\; i=1,\ldots,a_R \\
& \;\; \mx_l^T\mU\mx_l + \mw^T\mx_l + w_0 \leq -1 + \varepsilon_l, \;\; l=1,\ldots,n \\
& \;\; \mU \preceq -I, \;\; \varepsilon_i \geq 0, \;\; \varepsilon_l \geq 0
\end{align*}
}}
\noindent
where $\mU$ is a negative semi-definite matrix. 
We can show that $(\mU,\mw,w_0)$ define an ellipsoidal enclosing boundary, 
wrapping $\mx_i$'s inside and leaving $\mx_l$'s outside, for which we allow some slack $\varepsilon$. $\lambda$ is to account for the imbalance between the number of positive and negative samples.
The optimization problem is convex, which we solve using an efficient off-the-shelf solver, where each hyper-rectangle output by \sub~can be processed independently.

Having set up our refinement step as a convex quadratic discrimination problem, we next describe how we incorporate $\mx_j$'s (anomalous points outside $R$) into the optimization.
Intuitively, we would like to include as many other anomalies as possible inside the ellipsoid, but only those that are nearby $\mx_i$'s and not necessarily those that are far away. 
In other words, we only want to ``recover'' the $\mx_j$'s surrounding a given $R$ and not grow the ellipsoid to include far away $\mx_j$'s to the extent that it would end up including many normal points as well.

To this end, we treat $\mx_j$'s similar to $\mx_i$'s but incur a lower penalty of excluding an $\mx_j$ than excluding an $\mx_i$ or including an $\mx_l$. The optimization is re-written as

\vspace{0.1in}
\begin{empheq}[box=\widefbox]{align}
\min_{\mU,\mw,w_0} & \;\; \sum \varepsilon_i + \alpha \sum \varepsilon_j + \lambda \sum \varepsilon_l \nonumber\\
{s.t.\;\;\;}  &  \mx_i^T\mU\mx_i + \mw^T\mx_i + w_0 \geq 1 - \varepsilon_i, \;\;\;\; i=1,\ldots,a_R \nonumber\\
 &  \mx_j^T\mU\mx_j + \mw^T\mx_j + w_0 \geq 1 - \varepsilon_j, \;\;\; j=a_{R+1},\ldots,a \nonumber\\ 
&  \mx_l^T\mU\mx_l + \mw^T\mx_l + w_0 \leq -1 + \varepsilon_l, \;\; l=1,\ldots,n\nonumber \\
&  \mU \preceq -I, \;\; \varepsilon_i \geq 0, \;\; \varepsilon_j \geq 0, \;\; \varepsilon_l \geq 0 \nonumber
\end{empheq}
\vspace{0.05in}

Here, setting $\alpha$ (penalty constant for $\mx_j$'s) smaller than both $1$ and $\lambda$ is likely a good choice. However, we do not know which $(\alpha,\lambda)$ pair would provide a good trade-off in general. Therefore, we sweep over a grid of possible values\footnote{We use $\alpha=\{10^{-6}, 10^{-5},\ldots, 1\} \times \lambda=\{10^{-3}, 10^{-2},\ldots, 10^3\}$.} and generate various ellipsoids, as illustrated for the example case in Fig. \ref{fig:refine}. 
A last but important step is to sweep over the collection to discard \textit{dominated} packs.
Specifically,
 we output only the set of $p$'s in the Pareto frontier w.r.t. mass versus purity. In this set there are \textit{no two packs where one strictly dominates the other}---by enclosing \textit{both} higher number of anomalous points (higher mass) and lower number of normal points (higher purity).

We refine a hyper-rectangle
$R = (s_1, s_2, \ldots, s_{d'})$ into an ellipsoid within the same subspace, in other words, $\mU\in \mathbb{R}^{d'\times d'}$ and $\mw\in \mathbb{R}^{d'}$. For interpretability, we constrain $\mU$ to be diagonal to obtain \textit{axis-aligned} ellipsoids as shown in Fig. \ref{fig:refine}, since the original features have meaning to the user.\footnote{If the anomalous patterns are to be used for detection, we estimate a full $\mU$ matrix (i.e., possibly rotated ellipsoid).} 

Our explanation consists of one rule on each feature in the subspace.
A feature rule is a $\pm$ radius interval around the ellipsoid's center. Formally:

\begin{definition}[Feature rules] Given an axis-aligned ellipsoid $p(\mc,\mM)$ in a subspace $f_{t_1} \times \ldots \times f_{t_{d'}}$, a rule on feature $t_z$ is an interval  $(\mc[z]-${\em radius}$_z, \mc[z]+${\em radius}$_z)$,
	where {\em radius}$_z = {\sqrt{\mM_{zz}}}$, $\;\forall z =\{1,\ldots, {d'}\}$.
	Conjunction of all $d'$ feature rules constitute the {\em signature} of $p$.
\end{definition}


To wrap up, we show how to compute $\mc$ and $\mM^{-1}$ from $(\mU,\mw,w_0)$ to obtain the center and radii for an ellipsoid, using which we generate the feature rules.

\noindent
{\em Obtaining $\mc$:} At the boundary of the ellipsoid, $h(\mx)=0$ and inside $h(\mx)>0$.
Center is the point where $h(\mx)$ is the maximum. Hence;
\begin{equation}
\label{eq:c}
\mc := \max_{\mx} \;\; \mx^T\mU\mx + \mw^T\mx + w_0 = - \frac{1}{2} \mU^{-1} \mw
\end{equation}
{\em Obtaining $\mM^{-1}$:} 
\begin{align}
\mx^T (-\mU) \mx - \mw^T\mx - w_0 & < 0 \\\nonumber
\mx^T (-\mU) \mx + 2 \mc^T \mU \mx - w_0 & < 0 \;\;\; \text{using Eq. \eqref{eq:c}}\\\nonumber
(\mx - \mc)^T (-\mU) (\mx - \mc) + \mc^T \mU \mc - w_0 & < 0\\\nonumber
(\mx - \mc)^T \frac{-\mU}{(w_0-\mc^T \mU \mc)} (\mx - \mc)    & < 1  \implies \mM^{-1} = \frac{-\mU}{(w_0-\mc^T \mU \mc)}\nonumber
\end{align}
{\em Obtaining radii:}
\begin{align}
(\mx - \mc)^T \mM^{-1} (\mx - \mc) = \sum_{z=1}^{d'} (\mx[z]-\mc[z])^2 (\mM^{-1})_{zz} \leq 1 \nonumber
\end{align}

To compute radius in dimension $z$, we find point $\mx$ where $\mx[z']=\mc[z'],\; \forall z'\neq z$, and
$(\mx[z]-\mc[z])^2 \frac{1}{\mM_{zz}}  = 1$. It is easy to see that radius$_z = \big|{\mx[z]-\mc[z]}\big| = \sqrt{\mM_{zz}}$.

\subsection{Summarization: Pack Selection for Shortest Description}
\label{ssec:summarize}

Our ultimate goal is to find anomalous patterns that explain or summarize the given anomalies in the dataset as succinctly as possible. Intuitively, ``good'' patterns enclose similar groups of points and hence help compress the data. To this end, we formulate our summarization objective by an encoding scheme and then devise an algorithm that carefully chooses a few patterns, in particular \ps~produced in \S \ref{ssec:refine}, that yield the minimum encoding length. In the following, we describe our encoding scheme, followed by the proposed subset selection algorithm.

\subsubsection{\bf MDL formulation for encoding a \underline{given} \pg}
\label{sec:mdlcoding}

Our encoding scheme involves a Sender (us) and a Receiver (remote). We assume both of them have access to dataset $\mD \in \mathbb{R}^{m\times d}$ but only the Sender knows the set of anomalous points $\mA$. The goal of the Sender is to transmit (over a channel) to the Receiver the information about which points are the anomalies  
\textit{using as few bits as possible}.
Na\"ively encoding all feature values of every anomalous point {\em individually} would cost $|\mA|d\log_2f$ bits.\footnote{Value of $f$ is chosen according to the required floating point precision in the normalized feature space $\mathbb{R}^{d}$.} 
The idea is that by encoding the enclosing boundary of \ps~(ellipsoids) found in \S \ref{ssec:refine}, we (the Sender) could  have the Receiver identify the anomalies \textit{in groups}, which could save bits.

Obviously we would want to avoid ``noisy'' \ps~that include many normal points---that would necessitate spending extra bits for encoding those exceptions (i.e. ``telling'' the Receiver which points in a pack are {\em not} anomalies). Moreover, we would want to avoid using \ps~that encode largely overlapping group of anomalies, as bits would be wasted to redundancy.
While identifying the \pg~that yields the fewest bits is the main problem, we first lay out our description length objective, for a \underline{given} \pg~$\mP = \{p_1(\mc_1,\mM_1), \ldots, p_K(\mc_K,\mM_K)\}$:
  
\bit
\setlength{\itemindent}{-.1in}
\setlength{\itemsep}{0.05in}
\item Transmit number of \ps~  $ = \log^\star K$\footnote{Cost of encoding an arbitrary integer $K$ is $L_{\mathbb{N}}(K) = \log^\star (K) + \log_2(c)$, where $c
	\approx 2.865064$ and  $\log^\star (K) = \log_2(K) + \log_2(\log_2(K)) +
	\ldots$ summing only the positive terms \cite{Rissanen78}. We drop $\log_2(c)$ as it is constant for
	all {\pg}s.
	}
\item For each pack $p_k \in {\mP}$:

\bit
\setlength{\itemindent}{.05in}
\setlength{\itemsep}{0.05in}
\item Transmit number of dimensions  $=\log^\star  d_k$, $d_k \leq d$
\item Transmit \textit{identity} of dimensions  $=\log_2  \binom{d}{d_k}$ 
\item Transmit the center $\mc_k$  $=d_k \log_2 f$ 
\item Transmit $\mM_k$ $= d_k^2\log_2 f$ ($d_k\log_2 f$ if diagonal)
\item Transmit \underline{exceptions} (i.e., non-anomalies in $p_k$):
\bit
\setlength{\itemindent}{.1in}
\setlength{\itemsep}{0.05in}
\item number of normal points in $p_k$  $=\log^\star n_k$
\item \textit{identity} of normal points; by forming all possible subsets of size $n_k$ of $m_k$ (total number of points in $p_k$) $=\log_2 \binom{m_k}{n_k}$ (based on a canonical ordering of subsets, where points are ordered by distance to center)\footnote{Another way to identify the normal points in a \textit{pack}:
	sort points by their distance to center and send the index of normal points in this list of length $m_k$. This costs more for $n_k\geq 2$: $n_k\log_2 m_k > \log_2 \frac{m_k^{n_k}}{n_k!} > \log_2 \binom{m_k}{n_k}$.}
\eit
\eit
\eit
\noindent
Total  cost of encoding with \pg~${\mP}$ is then
\begin{equation}
\label{cost}
\ell(\mP) = \log^\star K + \sum_{k = 1}^{K} L(p_k), \; \;\;\text{where}
\end{equation} 
\begin{equation}
L(p_k) = \log^\star d_k + \log_2 \binom{d}{d_k} + d_k(d_k+1)\log_2 f  + \log^\star  n_k + \log_2 \binom{m_k}{n_k}
\end{equation}

\subsubsection{\bf MDL \underline{objective function}}
\label{sec:mdlobj}
Our objective is to find a \pg, that is to identify a subset of \ps, which provides the minimum encoding length. 
However, we do not assume that all anomalies would be covered by a packing, i.e., $\bigcup_k \mA_k \subseteq \mA$, as there could be anomalous points (outliers) that do not belong in any pattern but lie away from the others.
The outliers $\mA \backslash \{\bigcup_k \mA_k\}$ are yet to be encoded individually. 

\noindent
Description length of all anomalies $\mA$ with \pg~$\mP$ is

$$
L(\mA|\mD,\mP) = \big(|{\mA}| - |\bigcup_{p\in \mP} \mA_p|\big) d\log_2 f + \big[ \log^\star |\mP| + \sum_{p\in \mP} L(p) \big]
$$

where the second term [in brackets] is $\ell(\mP)$: cost of transmitting $\mP$ (and the anomalies covered by it) by Eq. \eqref{cost}, and the first term is the cost of individually encoding the remaining anomalies not covered by $\mP$.

Notice that the objective of finding a subset $\mS$ that minimizes the description length is equivalent to selecting a \pg~that reduces the na\"ive encoding cost of $|\mA|d\log_2f$ the most, i.e.:
\begin{empheq}[box=\fbox]{equation}
\label{objective}
\max\limits_{\mS}\;  R_\ell(\mS) \;=\;  |\bigcup_{p\in \mS} \mA_p| c_u - \log^\star |\mS| - \sum_{p\in \mS} L(p)   + \big[ \log^\star |\mE| + \sum_{p'\in \mE} L(p') \big]  
\end{empheq}
where $c_u = d\log_2 f$ is a constant unit-cost to encode a point, and set $\mathcal{E}$ denotes all the ellipsoids returned from the second part (refinement), as such, $\mS\subseteq \mE$. 
First three terms of the objective capture the overall {\em reduction} in encoding cost due to the packing with ellipsoids in $\mS$. 
We can read it as aiming to {\em find a packing that covers as many anomalies as possible (expressive), while having small model cost (low complexity)---containing only a few packs in low dimensions.}
The constant term [in brackets] ensures that $R_\ell(\mS)$ is a {non-negative} function.

\subsubsection{\bf Subset \underline{selection algorithm} for MDL \pg}

To devise a subset selection algorithm, we start by studying the properties of our objective function $R_\ell$, such as submodularity and monotonicity that could enable us to use fast heuristics with approximation guarantees.
Unfortunately, $R_\ell$ is not submodular as it is given in Eq. \eqref{objective}. However, with a slight modification where we fix the solution size (number of output \ps) to $|\mS|=K$, such that the second term is constant $\log^\star K$, the function becomes submodular, as we show below.

\begin{theorem}
	Our cardinality-constrained objective set function $R'_\ell(\mS)$ is \textit{submodular}. That is, for all subsets $\mS\subseteq \mT\subseteq \mE$ and \ps~ $p\in \mE\backslash \mT$, it holds that
	$$
	R'_\ell(\mS\cup \{p\}) - R'_\ell(\mS) \geq R'_\ell(\mT\cup \{p\}) - R'_\ell(\mT) \;.
	$$
\end{theorem}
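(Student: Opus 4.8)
The plan is to exploit the fact that submodularity is preserved under three operations---adding a constant, adding a modular (additively separable) set function, and scaling by a positive constant---so that the entire statement reduces to the textbook submodularity of a set-coverage function. First I would rewrite $R'_\ell$ under the cardinality constraint $|\mS|=K$ as a sum of three pieces: (i) a constant $-\log^\star K + \big[\log^\star |\mE| + \sum_{p'\in\mE} L(p')\big]$ that does not depend on which subset $\mS$ is chosen; (ii) a modular term $-\sum_{p\in\mS} L(p)$, since each $L(p)$ depends only on the intrinsic quantities $d_k,n_k,m_k$ of the individual \p~and not on the other selected \ps; and (iii) the scaled coverage term $c_u\, F(\mS)$, where $F(\mS) = |\bigcup_{p\in\mS} \mA_p|$ and $c_u = d\log_2 f > 0$.

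Next I would compute the discrete marginal gain directly. For any $\mS$ and $p\notin\mS$, the constant piece contributes nothing to the difference $R'_\ell(\mS\cup\{p\}) - R'_\ell(\mS)$, and the modular piece contributes exactly $-L(p)$, which is independent of the surrounding set. Hence $R'_\ell(\mS\cup\{p\}) - R'_\ell(\mS) = c_u\big(F(\mS\cup\{p\}) - F(\mS)\big) - L(p)$, and analogously for $\mT$. Substituting both expressions into the submodularity inequality, the two $-L(p)$ terms cancel, and since $c_u>0$ the claim reduces to showing $F(\mS\cup\{p\}) - F(\mS) \geq F(\mT\cup\{p\}) - F(\mT)$ for all $\mS\subseteq\mT\subseteq\mE$.

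Finally I would prove this coverage inequality. The marginal gain of adding $p$ equals the number of newly covered anomalies, $F(\mS\cup\{p\}) - F(\mS) = \big|\mA_p \setminus \bigcup_{q\in\mS} \mA_q\big|$. Because $\mS\subseteq\mT$ implies $\bigcup_{q\in\mS} \mA_q \subseteq \bigcup_{q\in\mT} \mA_q$, subtracting off the larger union can only remove at least as many elements, giving $\big|\mA_p \setminus \bigcup_{q\in\mS} \mA_q\big| \geq \big|\mA_p \setminus \bigcup_{q\in\mT} \mA_q\big|$, which is precisely the required diminishing-returns inequality.

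The argument is largely mechanical; the only point that genuinely needs checking---and the one I would flag as the main (mild) obstacle---is confirming that $L(p)$ is truly context-free, i.e., that the per-pack code length depends only on the \p~itself and not on which other \ps~lie in $\mS$. This is where the separable structure of the encoding matters: the number of exceptions $n_k$, the total enclosed points $m_k$, and the dimensionality $d_k$ are all intrinsic to a single \p, so $-\sum_{p\in\mS} L(p)$ is modular and vanishes in the second difference. Once that is verified, the reduction to coverage submodularity is immediate and the theorem follows.
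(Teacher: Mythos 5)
Your proposal is correct and follows essentially the same route as the paper: both cancel the constant and per-pack cost terms (the paper's ``equivalent terms and constants'') and reduce the claim to the submodularity of the coverage function $Cover(\mS)=|\bigcup_{p\in\mS}\mA_p|$. The only difference is one of detail---you explicitly verify that $L(p)$ is context-free and prove the coverage inequality via $\big|\mA_p\setminus\bigcup_{q\in\mS}\mA_q\big|\geq\big|\mA_p\setminus\bigcup_{q\in\mT}\mA_q\big|$, whereas the paper simply invokes the (standard) submodularity of $Cover$.
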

\begin{proof}
 Let $Cover(\mS) = |\bigcup_{p\in \mS} \mA_p|$ return the number of anomalies contained by the union of \ps~in $\mS$. 
Canceling the equivalent terms and constants on each side of the inequality, we are left with 
$Cover(\mS\cup \{p\}) - Cover(\mS) \geq Cover(\mT\cup \{p\}) - Cover(\mT)$.
The inequality follows from the submodularity property of the $Cover$ function. \qed
\end{proof}

It is also easy to see that $R'_\ell$ is not monotonic.

\begin{theorem}
	Our modified objective set function $R'_\ell(\mS)$ is \textit{non-monotonic}. That is, there exists $\exists \mS\subseteq \mT$ where $R'_\ell(\mT) < R'_\ell(\mS)$.
\end{theorem}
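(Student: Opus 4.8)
The plan is to compute the marginal value of adding a single pack to a set and then exhibit a pair where this marginal is strictly negative. Because $R'_\ell$ replaces the size-dependent term $\log^\star|\mS|$ of Eq.~\eqref{objective} by the constant $\log^\star K$, and the bracketed quantity $\big[\log^\star|\mE| + \sum_{p'\in\mE} L(p')\big]$ is also constant, every instance-independent term cancels when I form a difference. For any $\mS\subseteq\mE$ and any $p\in\mE\backslash\mS$ I therefore obtain
\[
R'_\ell(\mS\cup\{p\}) - R'_\ell(\mS) = \big(Cover(\mS\cup\{p\}) - Cover(\mS)\big)\,c_u - L(p) \;,
\]
where $Cover(\cdot)$ is the coverage function from the submodularity proof. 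The first summand counts the \emph{newly} covered anomalies (scaled by the unit cost $c_u$) and is nonnegative, while the second is the encoding cost of $p$.

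The key observation is that this marginal becomes strictly negative as soon as $p$ covers no anomaly beyond those already covered by $\mS$. I would therefore build the witnessing pair directly: pick any pack $p'\in\mE$ together with a distinct pack $p\in\mE$ whose enclosed anomalies are subsumed, $\mA_p\subseteq\mA_{p'}$ (equivalently $\mA_p\subseteq\bigcup_{q\in\mS}\mA_q$ with $\mS=\{p'\}$), and set $\mS=\{p'\}\subseteq\mT=\{p',p\}$. Then $Cover(\mT)=Cover(\mS)=|\mA_{p'}|$, the coverage difference vanishes, and the identity above collapses to $R'_\ell(\mT)-R'_\ell(\mS)=-L(p)$. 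It remains only to note $L(p)>0$, which is immediate from the definition of $L(p_k)$: each of its terms is nonnegative and the dimension and center terms are strictly positive, so $L(p)>0$ for every pack. Hence $R'_\ell(\mT)<R'_\ell(\mS)$ with $\mS\subseteq\mT$, establishing non-monotonicity.

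The one step I would be most careful about is justifying the \emph{existence} of such a redundant pack $p$, i.e., that the candidate set $\mE$ really contains two packs with nested anomaly coverage. This is generic given how $\mE$ is produced: the refinement sweep over the $(\alpha,\lambda)$ grid followed by Pareto filtering on mass-versus-purity retains many ellipsoids that differ in shape and purity yet enclose nested or overlapping anomaly subsets, so a subsumed pack exists in any nontrivial instance. If a parameter-free phrasing is preferred, the same cancellation shows monotonicity fails whenever an added pack's marginal coverage times $c_u$ falls below its own cost $L(p)$, which must occur once the coverage saturates; this is the robust version of the argument and avoids leaning on a particular candidate-generation assumption.
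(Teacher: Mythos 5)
Your proof is correct and takes essentially the same route as the paper's: both exhibit nested packings $\mS\subset\mT$ with equal coverage, so that the coverage term cancels while the strictly positive cost $L(p)$ of the added pack is incurred, forcing $R'_\ell(\mT) < R'_\ell(\mS)$ (the paper's footnote makes the same point about adding a pack that covers no new anomalies). Your extra attention to whether a redundant pack actually exists in $\mE$ is slightly more careful than the paper, which takes such a pair for granted, but it does not constitute a different argument.
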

\begin{proof}
	For $\mS\subseteq \mT$, $Cover(\mT) \geq Cover(\mS)$ due to monotonicity of $Cover$ function.
	On the other hand, description cost of \ps~in $\mT$ is $\sum_{p\in \mT} L(p) = \sum_{p'\in \mS} L(p') + \sum_{p''\in \mT\setminus \mS} L(p'')$ and hence is strictly greater than those of $\mS$. 
	As such, for two \textit{packing}s $\mS\subset \mT$ with the \textit{same} coverage, we would have $R'_\ell(\mT) < R'_\ell(\mS)$.\footnote{Intuitively, this is where $R_{\ell}$ drops when we add a new \textit{pack} to $\mS$ (with  positive cost) that does not cover any new anomalies.} \qed
\end{proof}

Maximizing a submodular function is NP-hard as it captures  problems such as Max-Cut and Max k-cover \cite{conf/soda/GharanV11}.
Nevertheless the structure of submodular functions makes it possible to achieve non-trivial results.
In particular, there exist approximation algorithms for \textit{non-monotone submodular} functions that are \textit{non-negative}, like our objective function $R'_\ell$.
In particular, one can achieve an approximation factor of $0.41$ for the maximization of any non-negative non-monotone submodular function \textit{without} constraints \cite{conf/soda/GharanV11}.


In our case, we need to solve our objective under the cardinality (i.e., subset size) constraint, where $|\mS|$ is fixed to some $K$ (since only then  $R_\ell$ is submodular).
To this end, we use the {\sc Random-Greedy} algorithm by Buchbinder et al. \cite{conf/soda/BuchbinderFNS14}, which provides the best known guarantee for the cardinality-constrained setting, with approximation factors in $[0.356,\frac{1}{2}-o(1)]$.
The algorithm is quite simple; at each step of $K$ iterations, it computes the marginal gain of adding a single \p~ $p\in \mE\setminus \mS$~to $\mS$ and 
selects one  among the top $K$ highest-gain \ps~uniformly at random.

{\bf Choice of $K$:} We identify $K$, the number of \ps~to describe the anomalies, automatically, best of which is unknown apriori. Concretely, we solve to obtain subset $\mS^\star_K$ each time for a fixed $K = |\mS^\star _K| = 1, 2, \ldots, a$, and return the solution with the largest objective value of 
$R_\ell(\mS^\star _K) = R'_\ell(\mS^\star _K) - \log^\star K$ in Eq. \eqref{objective}. This is analogous to model selection with regularization for increasing model size.

\subsection{Overall Algorithm \method}
\label{ssec:final}

Algorithm \ref{alg:final} puts together all three components of \method~as described through \S \ref{ssec:rectangle}--\S \ref{ssec:summarize}. We conclude  this section with the computational complexity analysis.

\begin{algorithm}[h]
{
	\caption{\method ~($\mA\cup \mN$): Explaining Anomalous Patterns\label{alg:final}}
	\begin{algorithmic}[1]
		\REQUIRE dataset $\mD=\mA\cup \mN$ with labeled anomalies
		\ENSURE set of anomalous patterns (represented as hyper-ellipsoids) \\$\mP=\{p_1(\mc_1,\mM_1), \ldots, p_K(\mc_K,\mM_K)\}$
		\STATE Set of hyper-rectangles $\mR = \emptyset$ 
		\STATE Obtain $\mR^{(1)}$ (1-d intervals) by kernel density estimation, varying cut-off threshold in $q=\{80,85,90,95\}$
		\STATE $\widehat{f}_a :=$  distribution of number of anomalies across $\mR^{(1)}$
		\STATE $\widehat{f}_n :=$  distribution of number of normal points across $\mR^{(1)}$ 
		\STATE  $\mR := \; $\sub($\mD$, $ms=q(\widehat{f}_a,50)$, $\mu=q(\widehat{f}_n),50$)) by Alg. \ref{alg:subclus} in \S \ref{ssec:rectangle}
		\STATE Set of hyper-ellipsoids $\mE = \emptyset$ 
		\FOR{$R\in \mR$}
			\STATE $\mE_R = \emptyset$ 
			\FOR{$\alpha=\{10^{-6}, 10^{-5},\ldots, 1\}$}
			\FOR{$\lambda=\{10^{-3}, 10^{-2},\ldots, 10^3\}$}
			\STATE  $\mE_R := \mE_R \; \cup $ solve  optimization problem in \S\ref{ssec:refine} for ($R,\alpha,\lambda$) 
			\ENDFOR
			\ENDFOR
			\STATE $\mE:= \mE \; \cup$ {ParetoFrontier}($\mE_R$)
		\ENDFOR
		
		\STATE \textbf{for} $K=1,\ldots, |\mA|$: select a subset $\mS^*_K\subset \mE$ of $K$ \ps~using the cardinality-constrained {\sc Random-Greedy} algorithm by Buchbinder et
al. \cite{conf/soda/BuchbinderFNS14} to optimize the description length reduction objective $R_{\ell}(\cdot)$ in \S\ref{ssec:summarize}.\\
		\RETURN $\mP:= \arg\max_{\mS^*_K} \; R'_\ell(\mS^*_K) - \log^*K$
	\end{algorithmic}
}
\end{algorithm}


{\bf Computational complexity:}
We analyze the complexity of each part separately.
Main computation of \S \ref{ssec:rectangle} is the {\sc SubClus} algorithm.
Preliminary KDE to create 1-d intervals is independently done per dimension in parallel, only on the anomalous points. We use a constant number of sampling locations, as such, KDE complexity is $O(a)$ where $a$ is the number of anomalies.
{\sc SubClus} then proceeds level-by-level and makes as many passes over the data as the number of levels.
 For a $d'$ dimensional hyper-rectangle that meets the mass and purity criteria, all its $2^{d'}$ projections in any subset of the dimensions also meet the mass criterion (although may not be pure). 
As such, running time of {\sc SubClus} is exponential in the highest dimensionality of the hyper-rectangle that meets both criteria.
Total time complexity of this step is $O(c^{d_{\max}} + md_{\max})$ for a constant $c$\footnote{For instance, if we have $t$ $d_{\max}$-dimensional hyper-rectangles, then the complexity would be  $O(t2^{d_{\max}} + md_{\max})$, we could rewrite this as $O(c^{d_{\max}} + md_{\max})$} that accounts for possibly multiple $d_{\max}$-dimensional hyper-rectangles and the smaller ones.  The second term captures the passes over the data over $d_{\max}$ levels. 
 
The main computation of \S\ref{ssec:refine} is solving the SDP optimization problem, for which we use the popular
cvx SDPT3 solver that takes $O([d_{\max}+m]^3)$ for an axis-aligned ellipsoid (or diagonal $\mU$) per iteration.\footnote{In practice, the solver converges in 20-100 iterations.}
To speed up, we filter bulk of the points beyond a certain distance of a given hyper-rectangle, since its refined hyper-ellipsoid would mostly include/exclude points  inside and nearby it. Filtering takes $O(m)$, after which we solve the SDP for a near-constant number of points.
It is easy to show that finding the Pareto frontier set of non-dominating packs (line 14)---such that no pack that has strictly larger mass {\em and} smaller impurity exists---can be done through two passes over all alternative hyper-ellipsoids generated for different $(\alpha,\lambda)$. This procedure does not change the overall  complexity but is likely to yield a much smaller set of ellipsoids per rectangle.
We refine each hyper-rectangle independently in parallel.
 
\hide{
Finally, we discuss finding the Pareto frontier set of a set of packs, which contains non-dominating packs---there exists no pack that has strictly larger mass {\em and} smaller impurity. 
Mass and impurity are integers, so for each impurity $\mu = 0\ldots \mu_{\max}$, where $\mu_{\max}=\max_i \mu_i$, 
we build a size $\mu_{\max}+1$ array $arr$ with entries holding the maximum mass for a given purity across all packs, i.e. $arr(\mu) = \max_{i:\mu_i=\mu} ms_i$, making a single pass over the packs. We also record the identity of the pack with the largest mass for a given impurity, which dominates the others with smaller mass.
In a second step, we scan $arr$ (from left to right) and return the packs that correspond to an increasing sequence of entries. Intuitively, for increasing impurity, a pack is non-dominated only if its mass is strictly larger. This procedure does not change the overall big-O complexity but is likely to yield a much smaller set of hyper-ellipsoids per rectangle (line 18).
}

The main computation in the last part is the {\sc Random-Greedy} algorithm, which makes $K$ iterations
for a given number of packs $K$.
In each iteration, it makes a pass over the not-yet-selected hyper-ellipsoids, computes the marginal reduction in bits by selecting each, and picks randomly among the top $K$ with the highest reduction. We use a size-$K$ min-heap to maintain the top $K$ as we make a pass over the packs. Worst case cost is $O(|\mE|\log K)$, multiplied by $K$ iterations.
We run {\sc Random-Greedy} for $K=1,\ldots,a$, each of which is parallelized. Total complexity of \S\ref{ssec:summarize} is $O(|\mE|a\log a)$.

The number of ellipsoids, $|\mE|$, is in the same order of the number of hyper-rectangles from \S \ref{ssec:rectangle}, i.e., $O(c^{d_{\max}})$. Thus, the overall complexity can be written as $O(md_{\max} + c^{d_{\max}}a\log a)$; linear on the number of data points $m$, near-linear on $a$, and exponential in the largest pack dimensionality $d_{\max}$.

\section{Experiments}
\label{sec:eval}

Through experiments on real-world datasets we answer the following questions.
A quick reference to the UCI datasets used in our experiments  is in Table \ref{tab:data}. Last column gives \% savings (in bits) in describing/encoding the anomalies by \method.
\bit 
\setlength{\itemsep}{-1\itemsep}
\setlength{\itemindent}{.15in}
\item[Q1.] {\bf Effectiveness:} How accurate, interpretable, and succinct are our explanations?
How do they compare to descriptions by Decision Trees?

\item[Q2.] {\bf Detection performance:} Do our explanations generalize? Can they be used as signatures to detect future anomalies?
To this end, we compare \method~to 7 different baselines.

\item[Q3.] {\bf Scalability:} How does \method's running time scale in terms
of data size and dimensionality?
\eit

\subsection{Effectiveness of Explanations}
Our primary focus is anomaly \textit{description} where we unearth interpretable characteristics for known anomalies. 
To this end, we present 6 case studies with ground truth, followed by quantitative comparison to decision trees.

\subsubsection{Case Studies\\}

Our {\bf {Image} dataset} contains gray-scale headshot images of various people. We designate the majority wearing dark-color t-shirts as the normal samples.
We create 3 versions containing different number of anomalous patterns, as we describe below.
We compare \method's findings to the ground truth.

\vspace{0.075in}
{\textbf{Case I:} \imageo~~} We label 8 images of people wearing sunglasses as anomalies as shown in (a) below, and combine them with the normal samples none of which has sunglasses.
In this simple scenario \method~successfully identifies a single, 1-d pattern shown in (b), which packs all the 8 anomalies but no normal samples. Also shown at the bottom of (b) is the interval of values, that is the 
$\pm$ radius range around the \p's center, for the corresponding dimension (the lower, the darker the pixel).

\vspace{0.075in}
\begin{tabular}{cc} 
		\includegraphics[width=0.5\textwidth]{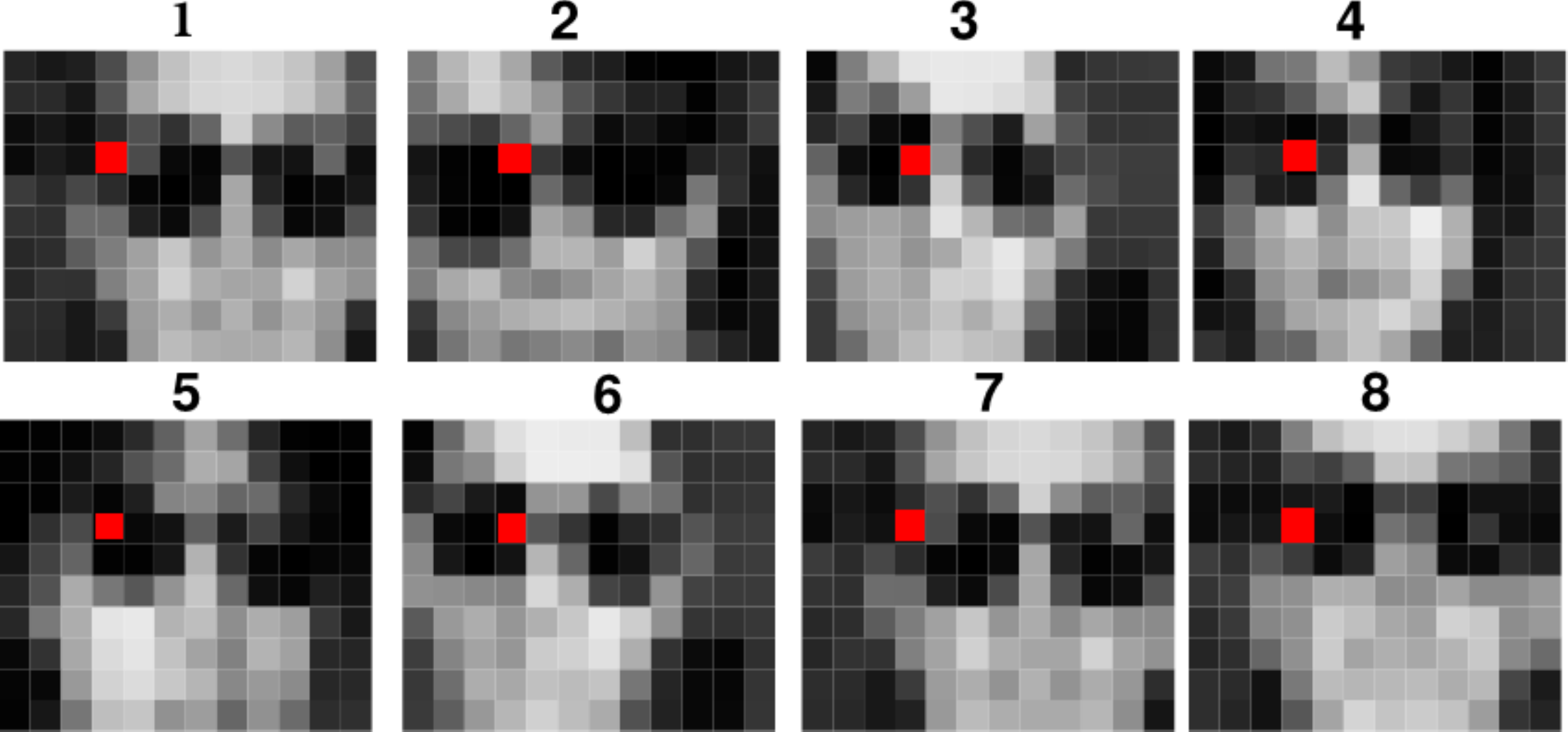}&
    	\includegraphics[width=0.25\textwidth]{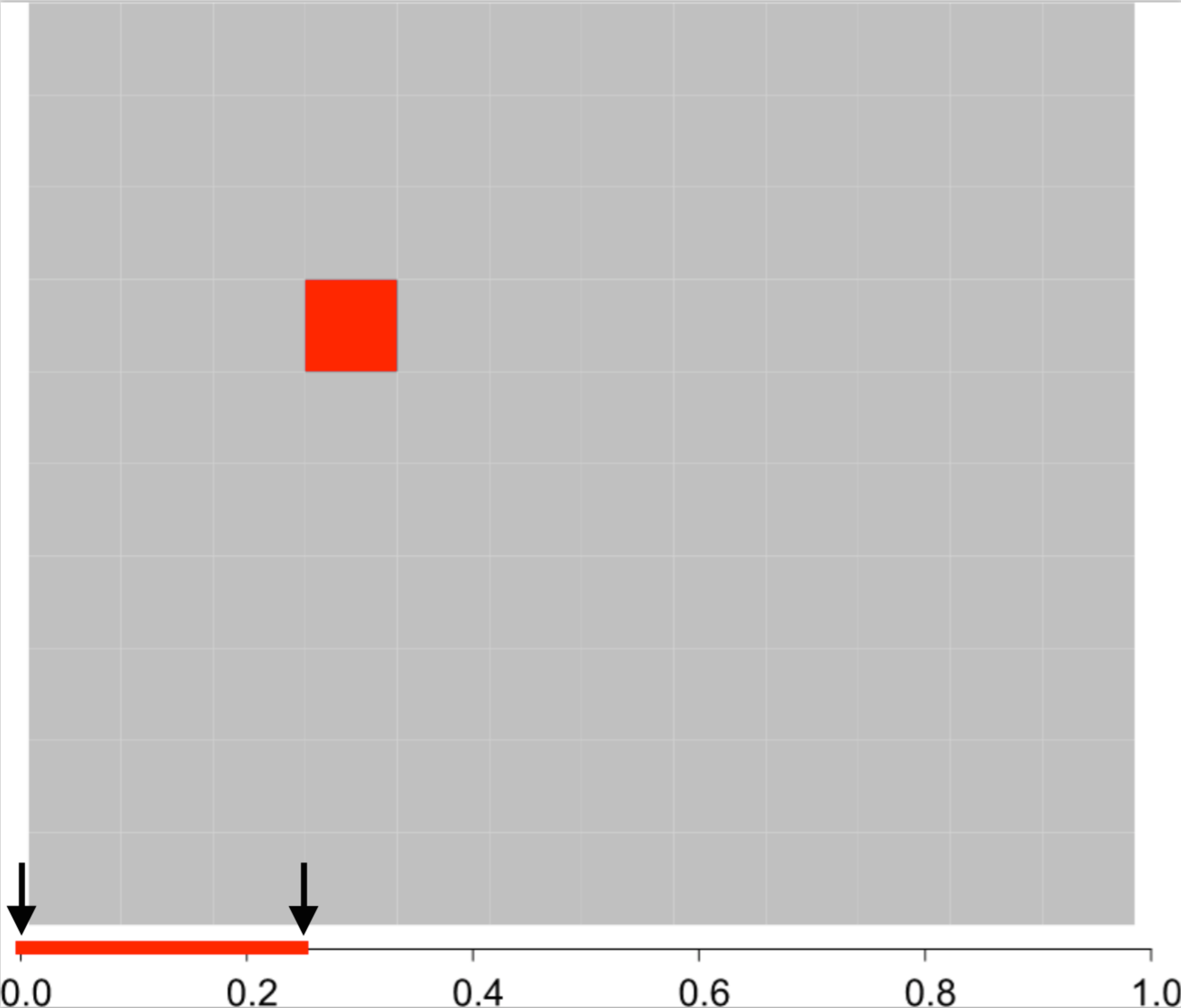} \\
		{(a) anomalies} & {(b) \method~\pg }
\end{tabular}
\vspace{0.1in}

\begin{table}[!t]
	\caption{Dataset statistics. \method~achieves significant savings (in bits) by explaining anomalies in groups.} 
	\begin{center}
		{{
				\vspace{-0.1in}
				\begin{tabular}{l|rrr||r} 
					\hline
					{\bf Name} & size $m$ & dim. $d$  & anom. $a$  & \%-savings  \\\hline
					\imageo 	&88 &120 &8 & 99.75 \\\hline 
					\imaget	&91 &180 &9 &  88.53\\\hline
					\imageth &110 &180 &12 &  99.51\\\hline
					\digito	&1371 &16 &228 & 99.83 \\\hline
					\digitt	&1266 &16 &211 &  99.72 \\\hline
					\cancer	&683 &9 &239 & 93.74 \\\hline
					\arr	&332 &172 &87 & 92.92 \\\hline
					\wine	&95 &13 &24 & 97.04 \\\hline
					\yeast	&592 &8 &129 & 98.04 \\\hline
				\end{tabular}
		}}
		\label{tab:data}
	\end{center}
	\vspace{-0.1in}
\end{table}

{\textbf{Case II:} \imaget~} Next, we construct the 9 anomalies as shown earlier in \S\ref{sec:overview} in Fig. \ref{fig:eg}: 6 wearing sunglasses, 4 white t-shirt (2 wearing both), plus 1 person with a beard (normal samples has no beard).
As detailed in the caption of the figure, \method~finds 2 pure \ps, each 1-d, that collectively describe the 8 anomalies and none of the normal samples. The bearded image does not belong to any \p~and is left out as an outlier.

{\textbf{Case III:} \imageth~} We construct the third dataset with 12 anomalies: the same 9 from \imaget~plus 3 faces (10--12) with beard as shown below.
In this case, \method~finds that characterizing the bearded images as a separate pattern is best to reduce the description cost, and outputs 3 pure, 1-d \ps~shown in (b).

\vspace{0.1in}
\begin{tabular}{cp{0.1in}c} 
	\includegraphics[width=0.3\textwidth,height=1.01in]{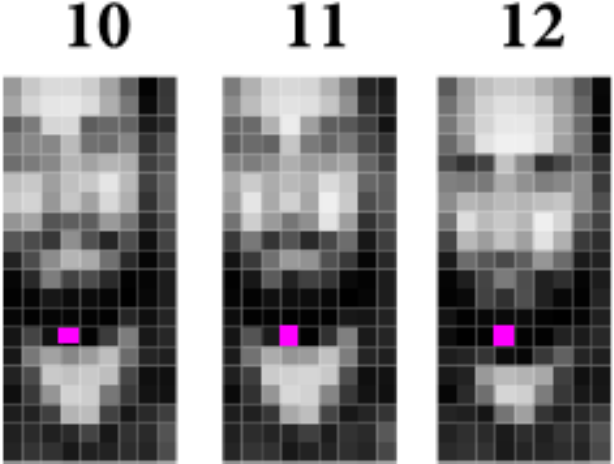}&&
	\includegraphics[width=0.43\textwidth,height=1.01in]{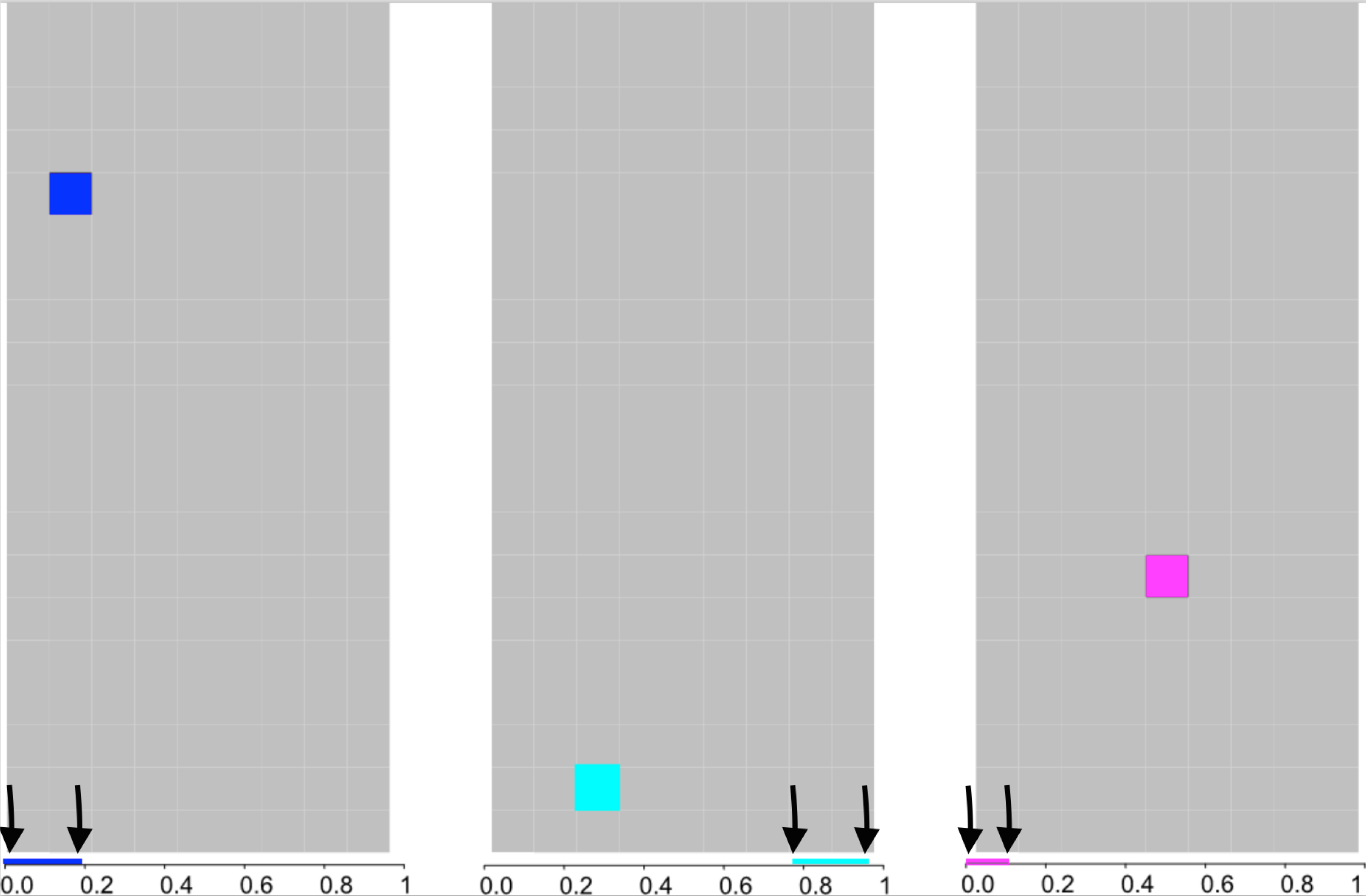} \\
	{(a) +3 anomalies} && {(b) \method~\pg }
\end{tabular}
\vspace{0.15in}

In all scenarios, \method~is able to unearth simple (low-dimensional) and pure (discriminative) characteristics of the anomalies. Also, it automatically identifies the correct number of anomalous patterns that yield the shortest data description as shown in Fig. \ref{fig:mdl}.


\begin{figure}[h]
	\centering
	\includegraphics[width=0.6\textwidth]{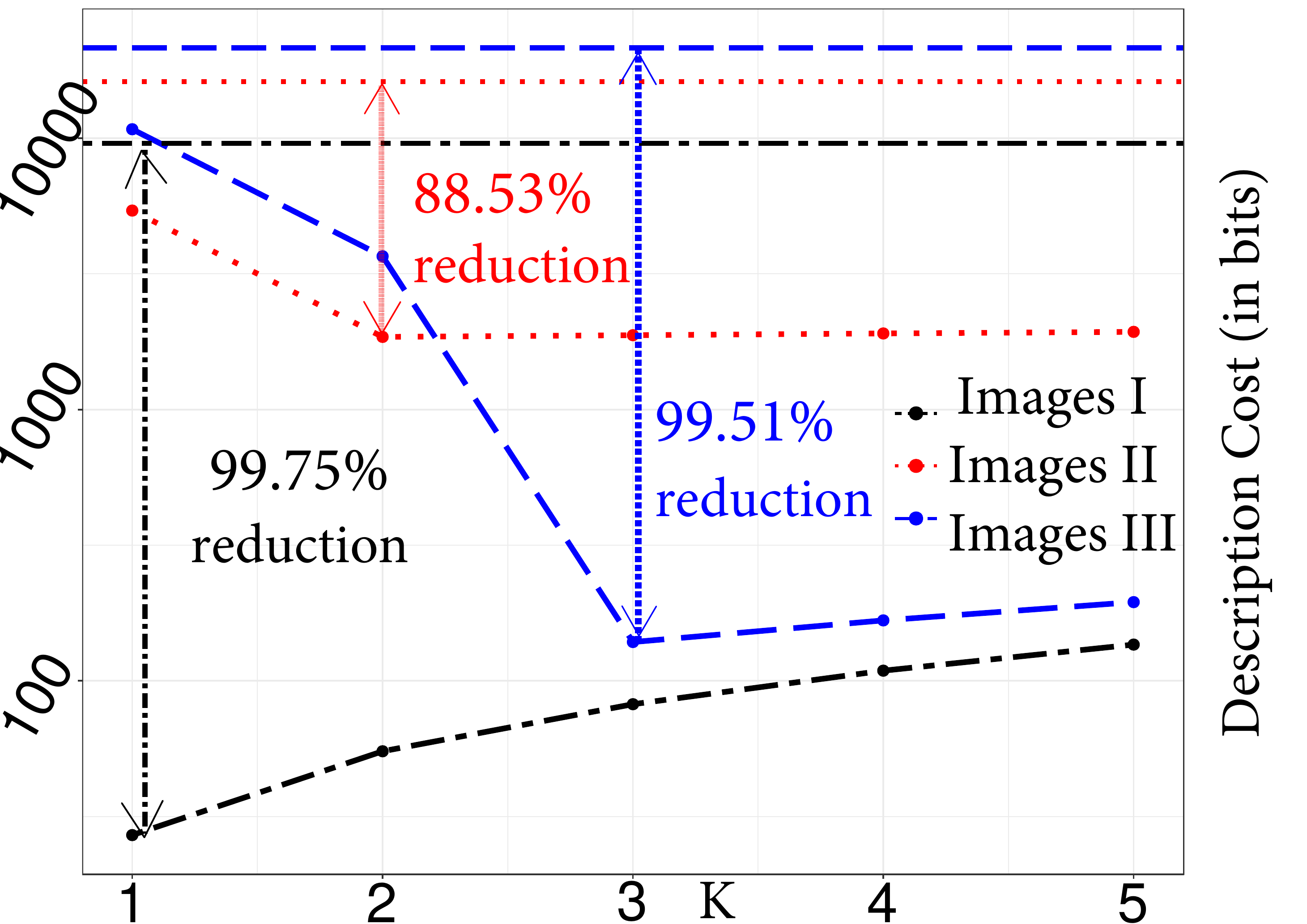}
	\caption{\method's description cost of anomalies in {image} datasets for $K=1,\ldots,5$. Na\"ive/base cost ($K=0$) is shown w/ a horizontal line per dataset. \method~finds the appropriate number of patterns automatically and  significantly reduces the description cost.}
	\label{fig:mdl} 
\end{figure}

Next we study a different domain.
The {\bf Digit dataset} contains instances of digit hand-drawings in time. Features are the $x$ and $y$ coordinates of the hand in 8 consecutive time ticks during which a human draws each digit on paper. As such, each drawing has 16 features. 

{\textbf{Case IV:} \digito~} We designate all drawings of digit `0' as normal and a sample of digit `7' as `anomalous' to study the characteristics of drawing a `7' as compared to a `0'.
8 different positions of the hand in time averaged over all corresponding samples of these two digits is shown below (a--b).

\vspace{0.075in}
\begin{tabular}{ccc} 
		\includegraphics[width=0.25\textwidth, height = 1.1in]{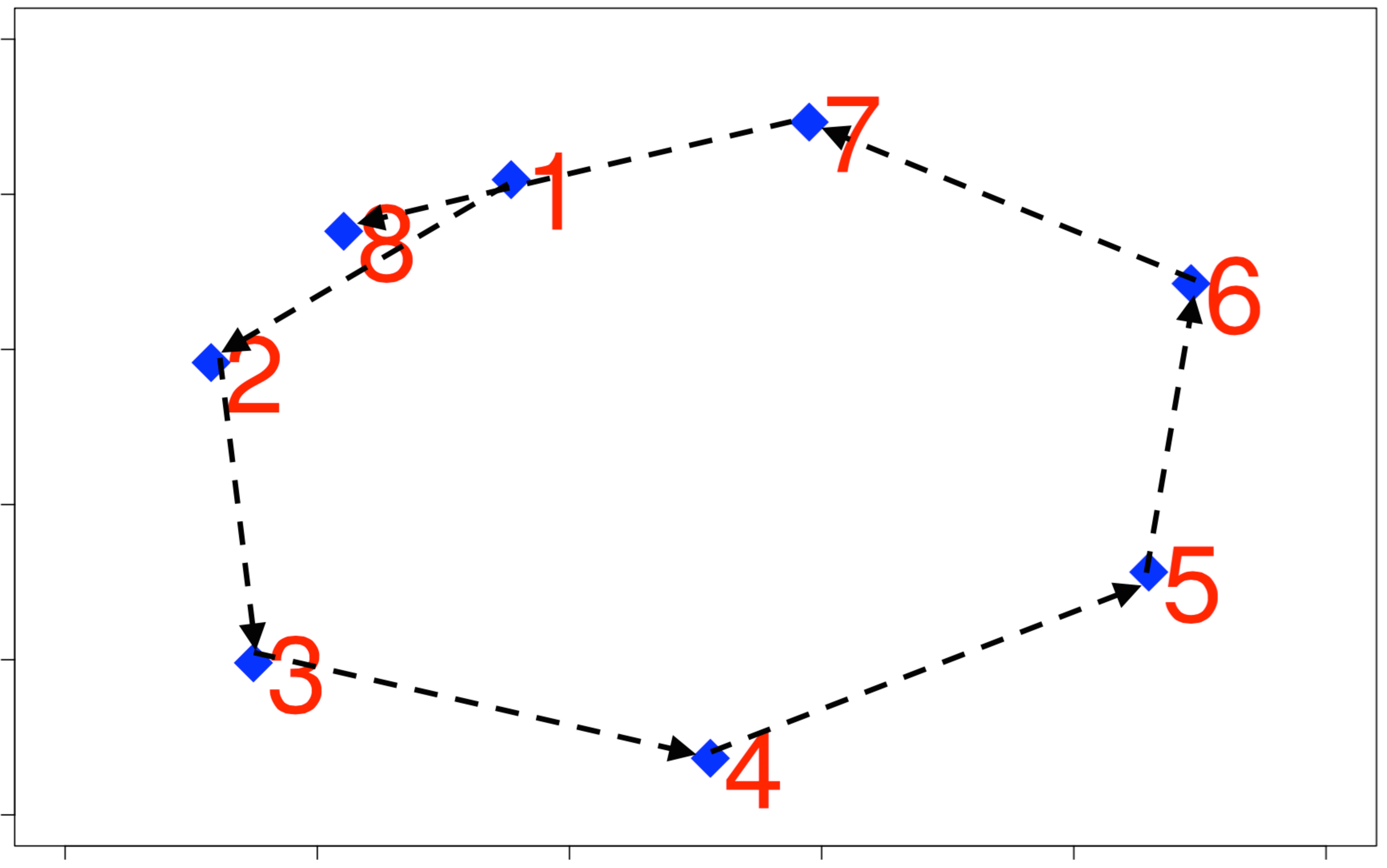}&
	\includegraphics[width=0.25\textwidth, height = 1.1in]{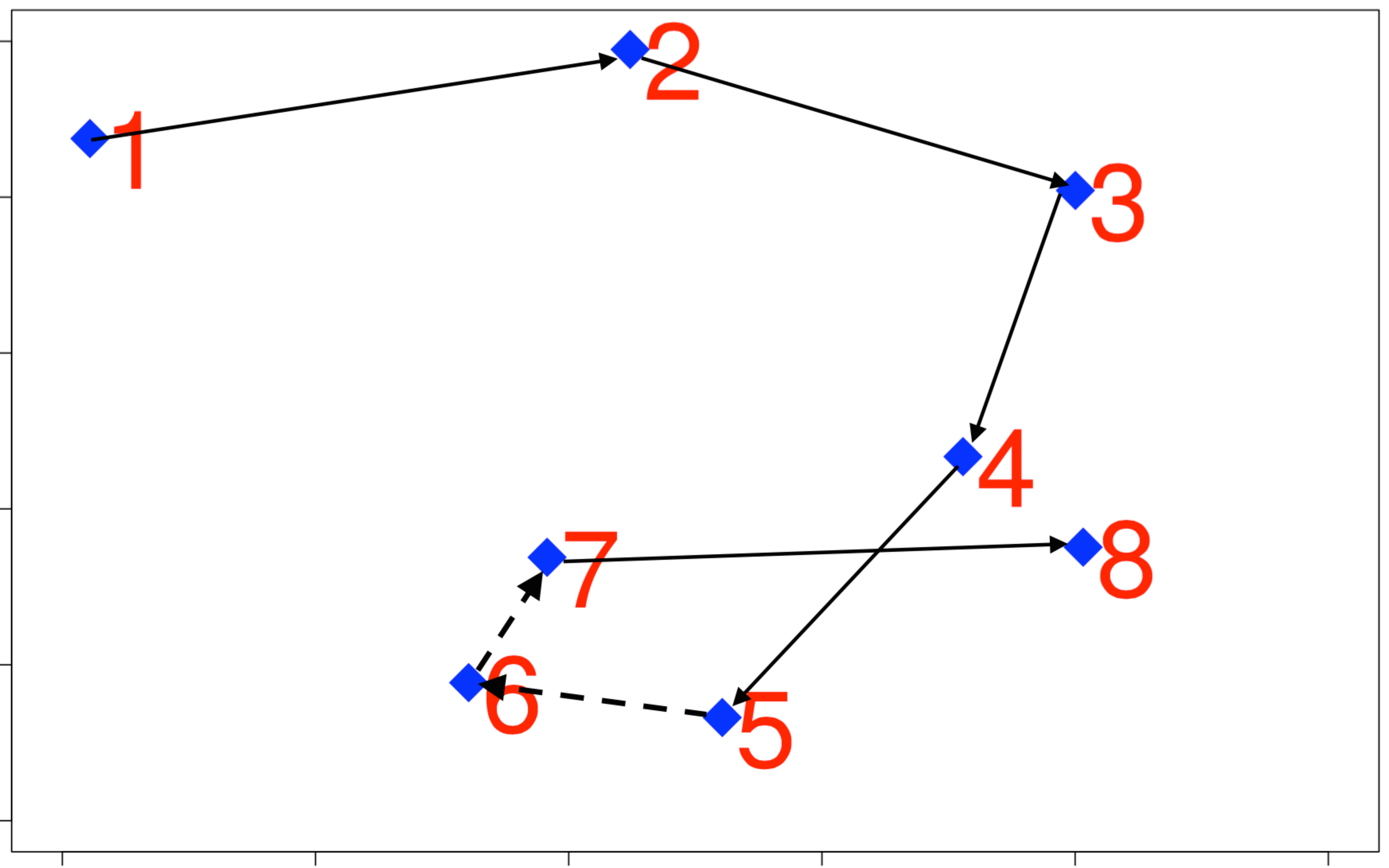} &
		\includegraphics[width=0.25\textwidth, height = 1.1in]{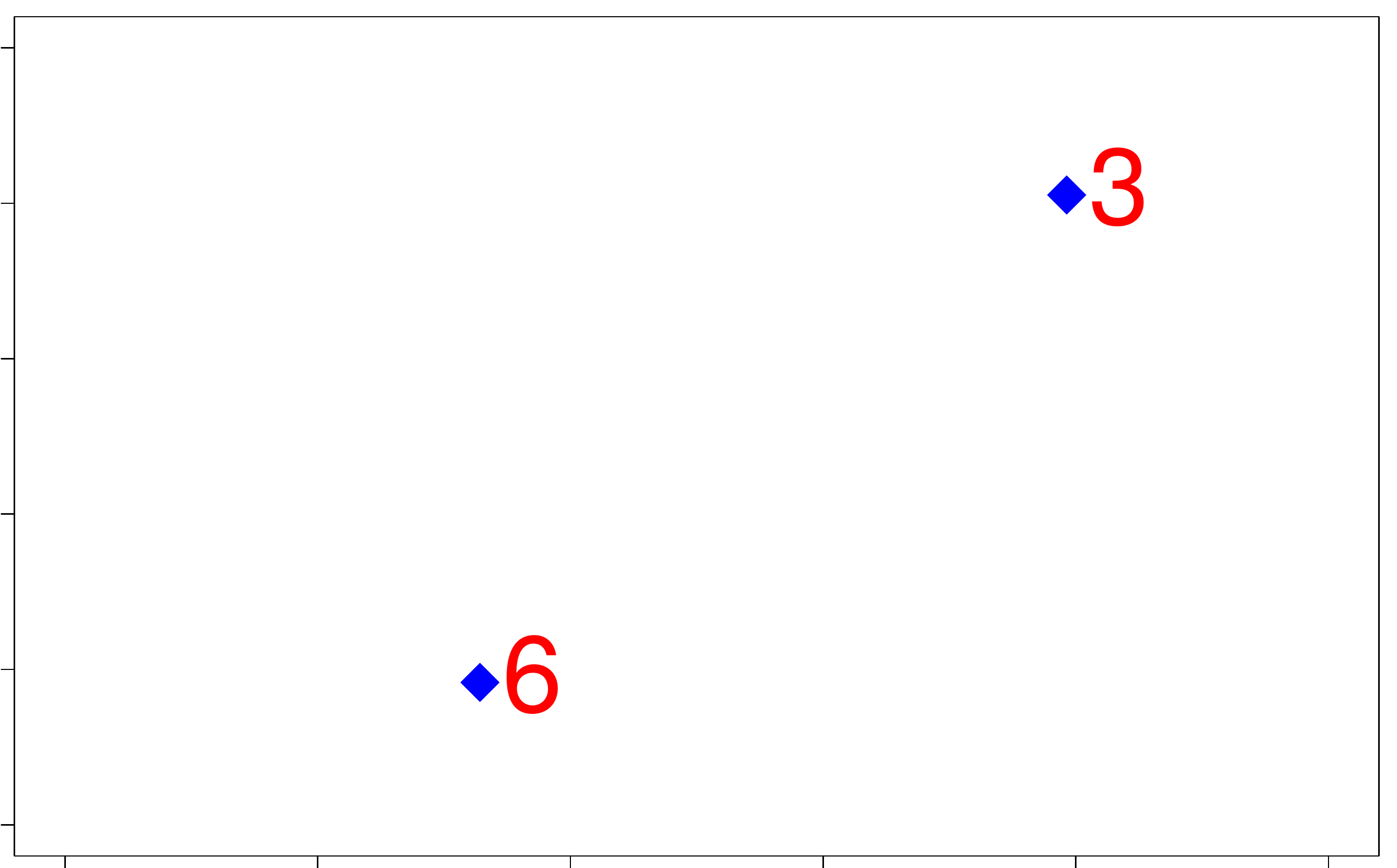}\\
		{(a) avg. `0'-drawing} &  {(b) avg. `7'-drawing} &{(c) \method~\pg }
\end{tabular}
\vspace{0.15in}

\method~identifies a single, 2-d \p~containing all 228 instances of `7's and no `0's, as 
given in Table \ref{tab:digito}, where we list the ellipsoid center and the $\pm$radius interval where the hand is positioned for the characterizing features. 
The anomalous pattern suggests right \& bottom positioning of the hand respectively at times t3 \& t6, which follows human intuition---in contrast, typical hand positions for `0' at those ticks are opposite; at the left \& top. Corresponding avg. hand positions in 2-d is shown in (c) above.

\begin{table}[h]
	\vspace{-0.15in}
	\caption{\digito~`0' vs. `7': \method~finds one 2-d \p.}
	 \vspace{-0.075in}
	\centering
	{{
			\begin{tabular}{cccc|cc}
				\hline
				packID & feature & center & interval & $|\mA_k|$ & $|\mN_k|$ \\
				\hline
				$k=$ 1 & $x@t3$ & 0.82    & (0.66, 0.98)	& 228 	& 	0 \\
			& $y@t6$ & 0.17	& (0.02, 0.31)	& 	&		\\\hline
			\end{tabular}
	}}
	\label{tab:digito}
\end{table}

{\textbf{Case V:} \digitt~~} We perform a second case study where we designate digit `8' drawings as normal and `2' and `3' as the anomalies. Avg. drawings are illustrated in (a--b) below. \method~is able to describe 210 of the 211 anomalies in a single, 4-d pack listed in Table \ref{tab:digitt} and illustrated in (c).
The single unpacked drawing is shown in (d) and looks like an odd `3'.

 \vspace{0.1in}
\begin{tabular}{cccc} 
\hspace{-0.3in}
	\includegraphics[width=0.22\textwidth, height = 1in]{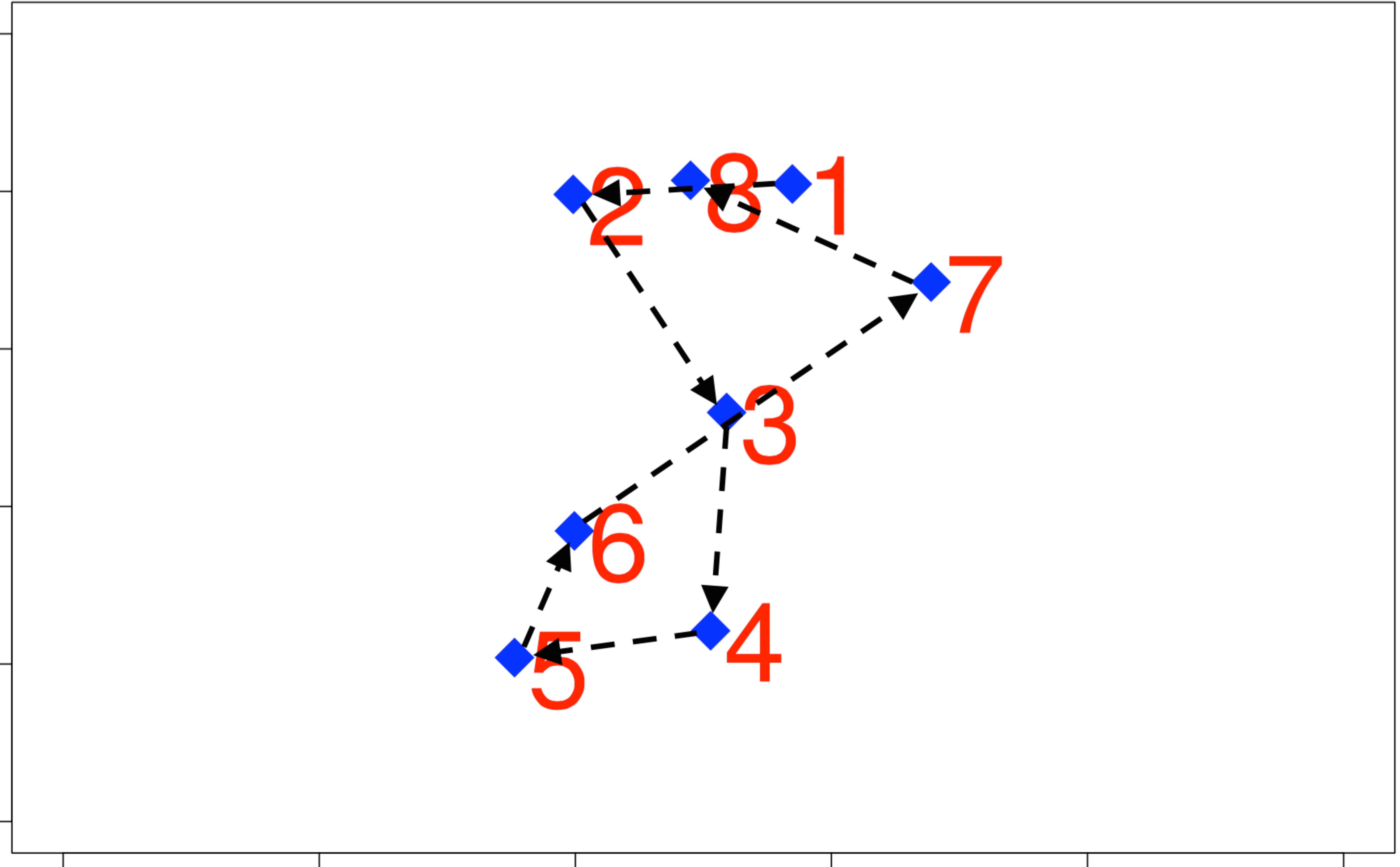}&
	\hspace{-0.1in}
	\includegraphics[width=0.22\textwidth, height = 1in]{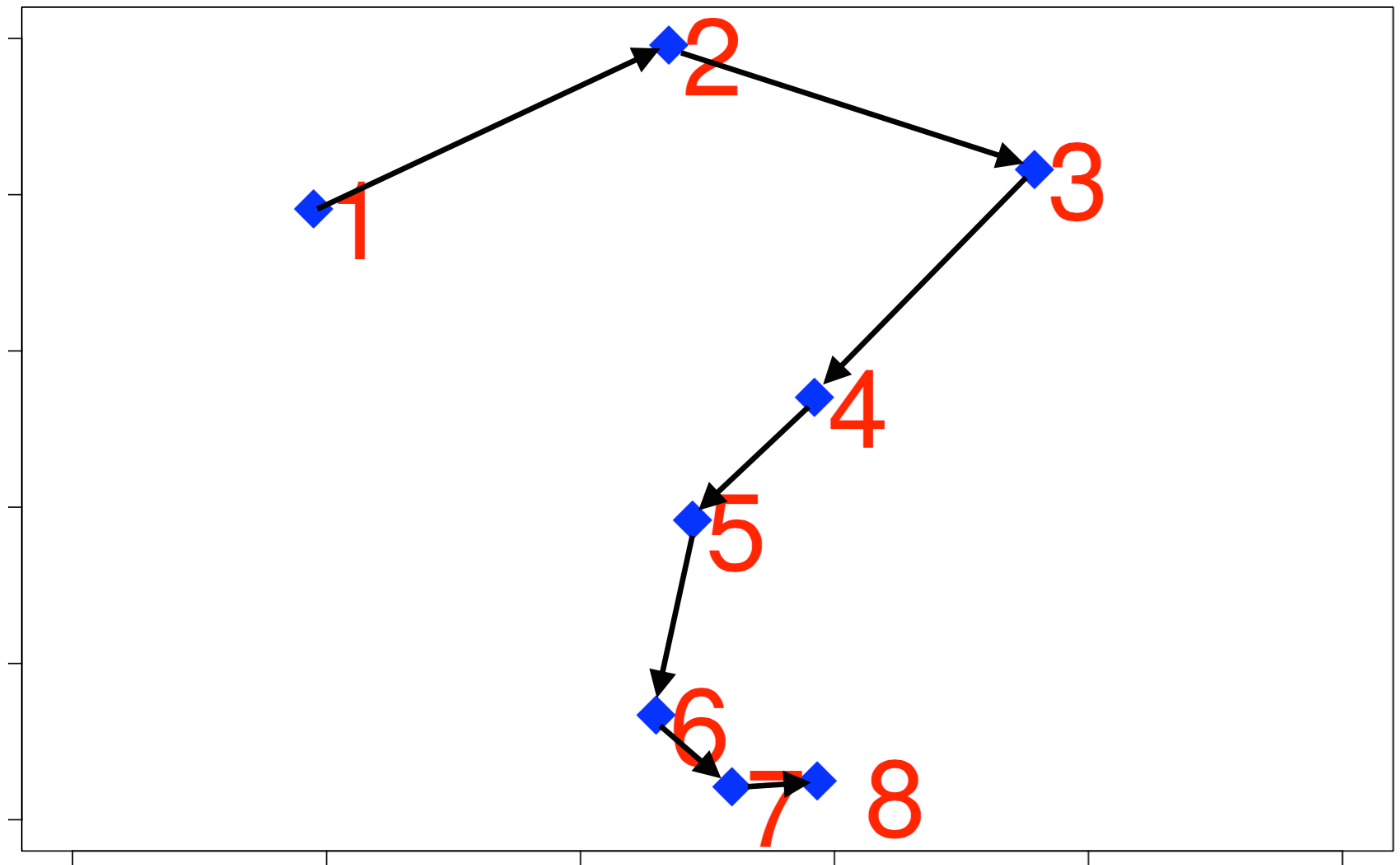} &
	\hspace{-0.1in}
	\includegraphics[width=0.22\textwidth, height = 1in]{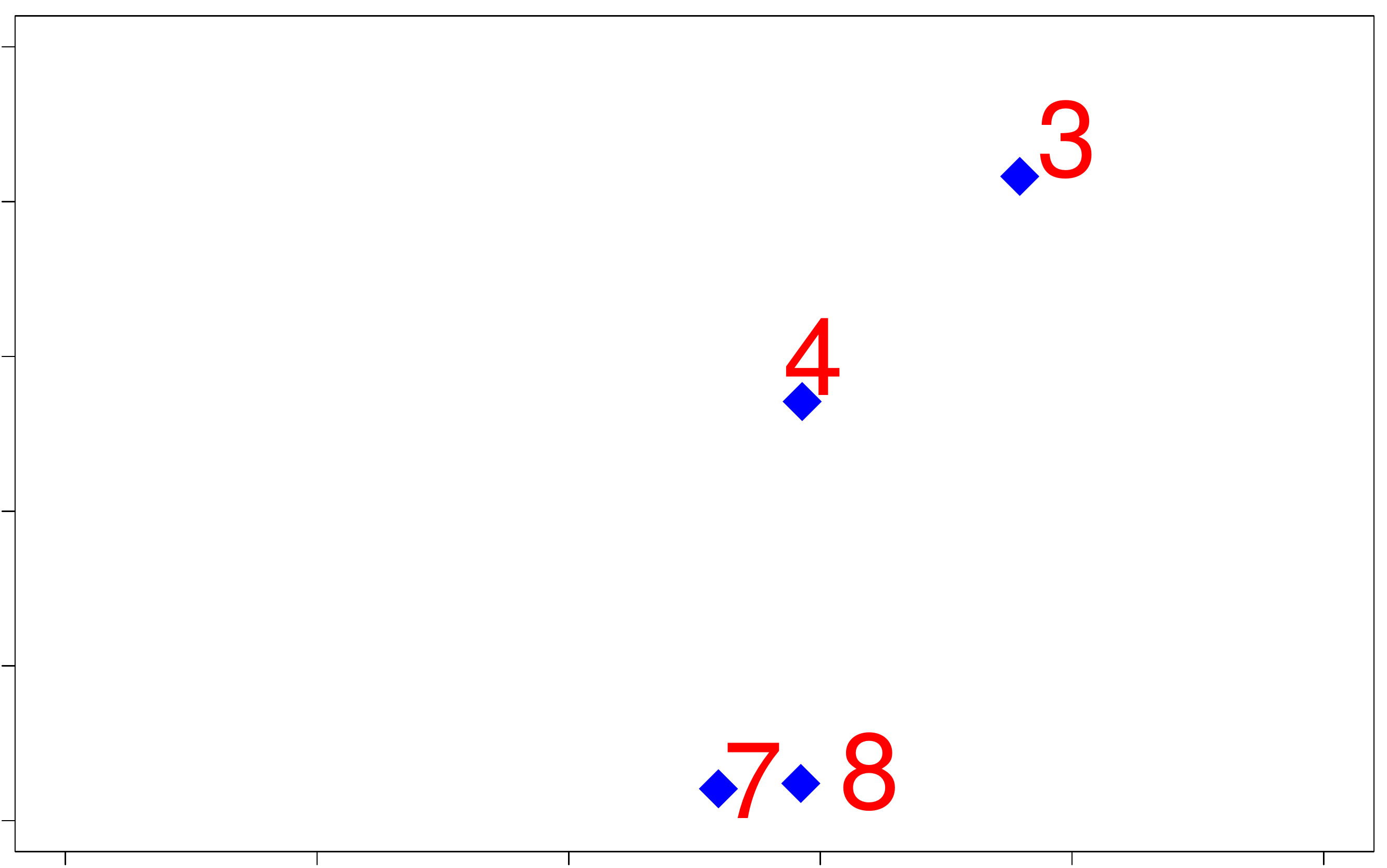} &
	\hspace{-0.1in}
	\includegraphics[width=0.22\textwidth, height = 1in]{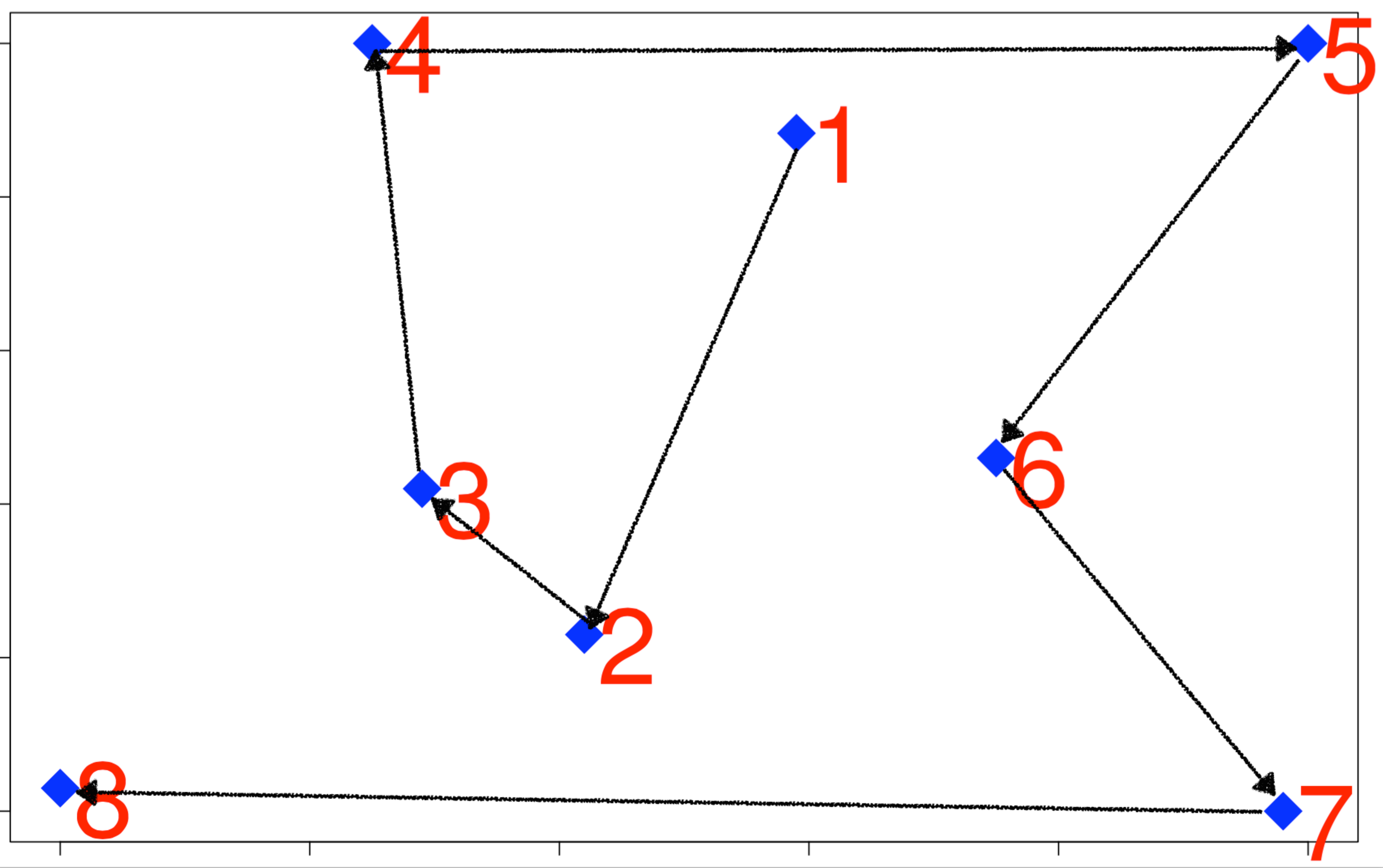}\\
		{(a) avg. `8'} &  {(b) avg. `2,3'} & {(c) \method~\pg } & {(d) outlier }
\end{tabular}
 \vspace{0.05in}

\begin{table}[h]
	\caption{\digitt~`8' vs. `2',`3': \method's one 4-d \p.}
	 \vspace{-0.075in}
	\centering
	{{
			\hspace{-0.15in}
			\begin{tabular}{cccccc}
				\hline
				packID & feature & center & interval & $|\mA_k|$ & $|\mN_k|$ \\
				\hline
			$k=$ 1 & $y@t3$ & 0.83  	&  (0.71, 0.95)	& 210 & 0 \\
		& $y@t4$ & 0.54	& (0.38, 0.69)	& 	&	\\
		& $y@t7$ & 0.04	& (0.00, 0.11)	& 	& 	\\
		& $y@t8$ & 0.05	& (0.00, 0.12)	& 	& 	\\\hline
			\end{tabular}
	}}
	\label{tab:digitt}
\end{table}

\begin{wrapfigure}{r}{0.25\textwidth}
	\vspace{-0.3in}
	\centering
	\includegraphics[width=0.23\textwidth, height = 1.1in]{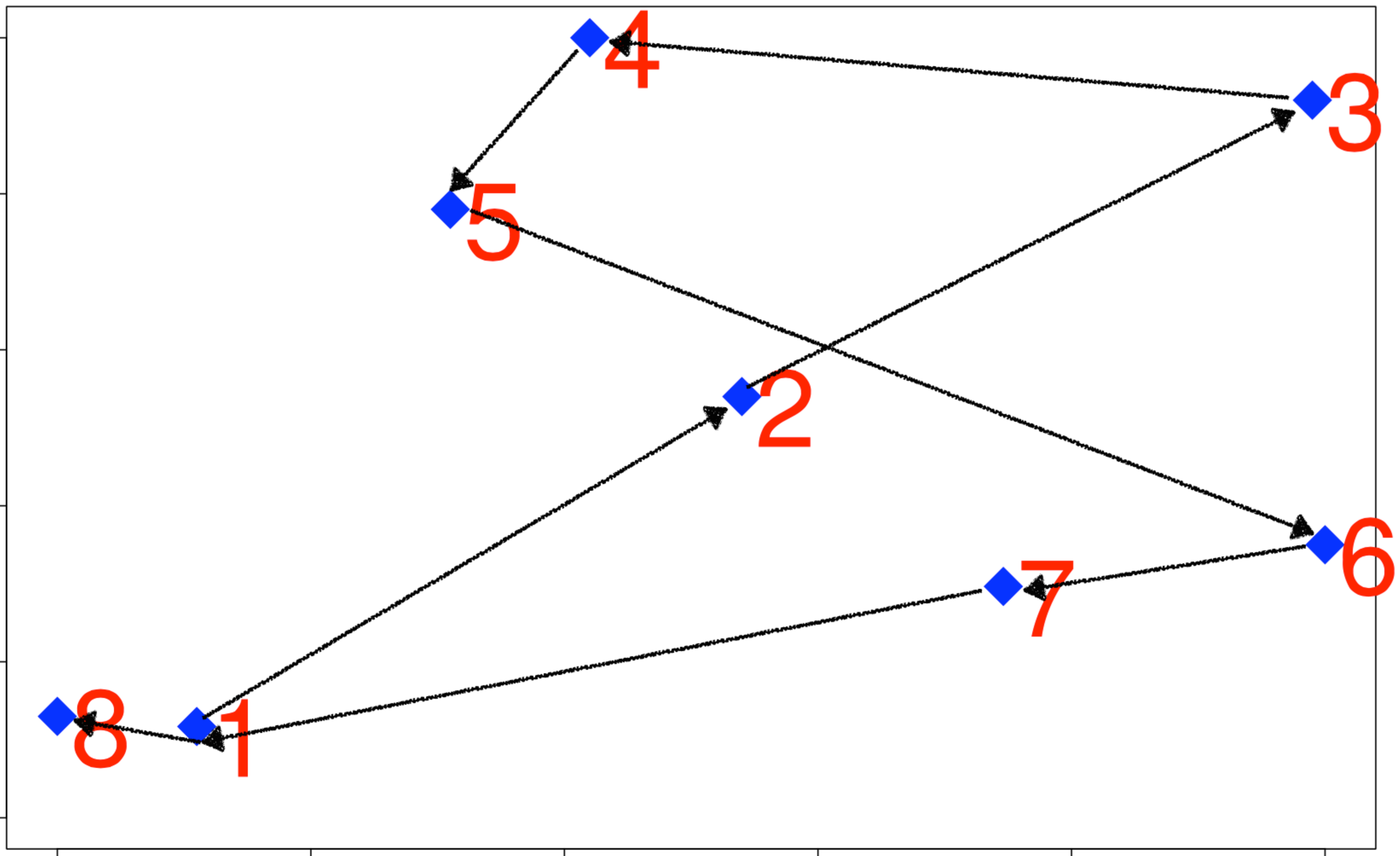}	
	\label{fig:oddone}
\end{wrapfigure}
Looking at the avg. `8' vs. `2' or `3' drawings above, it appears that a single feature like $y@t8$, i.e., vertical hand position at the end, should be discriminative alone; as `8' tends to end at the \textit{top} vs. others at the bottom.

	Interestingly, none of the 1-d \ps~on $y@t8$ is pure like the 4-d one output. A non-anomalous sample it contains is shown on the right, which is an `8' that starts and {ends} at the \textit{bottom} just like most `2' and `3's.


{\textbf{Case VI:} \cancer~}  Finally, \textbf{breast cancer dataset} contains 239 malign (anomalous) and 444 benign cancer instances. \method~finds 5 \ps~listed in Table \ref{tab:cancer}, covering a total of 226 anomalies while also including 17 unique normal points in the \pg. {\em Pack} 1  characterizes 162 cases with high `chromatin'. Second 2-d \p~suggests large `clumpthickness' and `mitoses' (related to cell division and tissue growth) for 145 cases. 
Smaller pure 1-d \ps, 4 and 5, indicate very large `cellsize' and `nucleoili'. These findings are intuitive even to non-experts like us (although we lack the domain expertise to interpret \p~3).

\begin{table}[h]
	\vspace{-0.1in}
	\caption{\cancer: \method~finds five 1-d or 2-d \ps.}
	 \vspace{-0.075in}
	\centering
			\begin{tabular}{p{0.95cm}p{2cm}p{0.75cm}p{1.6cm}p{0.4cm}p{0.4cm}}
				\hline
				packID & feature & center & interval & $|\mA_k|$ & $|\mN_k|$ \\
				\midrule
			$k=$ 1 & chromatin	& 0.76 &(0.63, 0.88) & 162 & 11 \\ \hline
	$k=$ 2 & clumpthickness & 0.94	& (0.84, 1.00)	& 145	&5	\\
	& mitoses & 0.28	& (0.00, 0.63)	& 	& 	\\\hline
	$k=$ 3 & epicellsize& 0.33 & (0.24,0.42) & 97 & 2 \\
	& barenuclei& 0.11 & (0.09,0.14) &  &  \\\hline
	$k=$ 4 & nucleoili & 0.98  & (0.93, 1.00)	& 75 & 0 \\\hline
	$k=$ 5 & cellsize & 0.99	& (0.98, 1.00)	& 67	& 0	\\
				\hline
			\end{tabular}
	 \vspace{-0.1in}
	\label{tab:cancer}
\end{table}


\begin{table}[!htp]
	\centering
	\caption{Interpretability measures (a)--(d): \method~vs. Rule learners. Also given for reference is detection performance in AUPRC (See \S\ref{ssec:detection} for details). \label{tab:interpret}}
	\begin{tabular}{c||cccc|c}
		\toprule
		{\textbf{measure}}	&{\textbf{(a) \# of}}	   
		& {\textbf{(b) avg.}} & 
		{\textbf{(c) avg.}} & 
		{\textbf{(d) avg.}} & 
		{\textbf{detection} } \\
		{\textbf{/method}}	&	{ \textbf{groups}}	   
		& {\textbf{length}} & 
		{\textbf{impurity}} & 
		{\textbf{width}} & 
		{\textbf{performance}} \\ \toprule	
		DT-5 & 4.0000 & 2.9889 & 0.0233 &  0.4769 & 0.6252 \\ \hline
		DT-4 & 3.7778 & 2.7856 & 0.0422 &  0.4801 & 0.6070 \\ \hline
		DT-3 & 3.0000 & 2.4078 & 0.0700 &  0.4812 & 0.6210 \\ \hline
		DT-2 & 2.4444 & 1.8889 & 0.1378 &  0.4872 & 0.6236 \\ \hline
		DT-1 & \textbf{1.7778} & \textbf{1.0000} & 0.4056 &  0.5017 & 0.5656 \\ \hline
		RuleFit\footnotemark & 12.0000 & 1.7800 & 0.0229 &  0.4643 & 0.8471 \\ \hline
		Ripper & 2.4444 & 1.5244 & 0.0178 &  0.3889 & 0.7244 \\ \midrule
		\method & 2.5556 & 2.1000 & \textbf{0.0152} & \textbf{0.2333} & \textbf{0.8781} \\ \bottomrule
	\end{tabular}
\end{table}
\footnotetext{Note that RuleFit is averaged over seven datasets due to underspecified regression in \arr~and~\yeast}
\begin{figure}[!th]
	\centering
	\begin{tabular}{cc}
		\includegraphics[width=.4\textwidth]{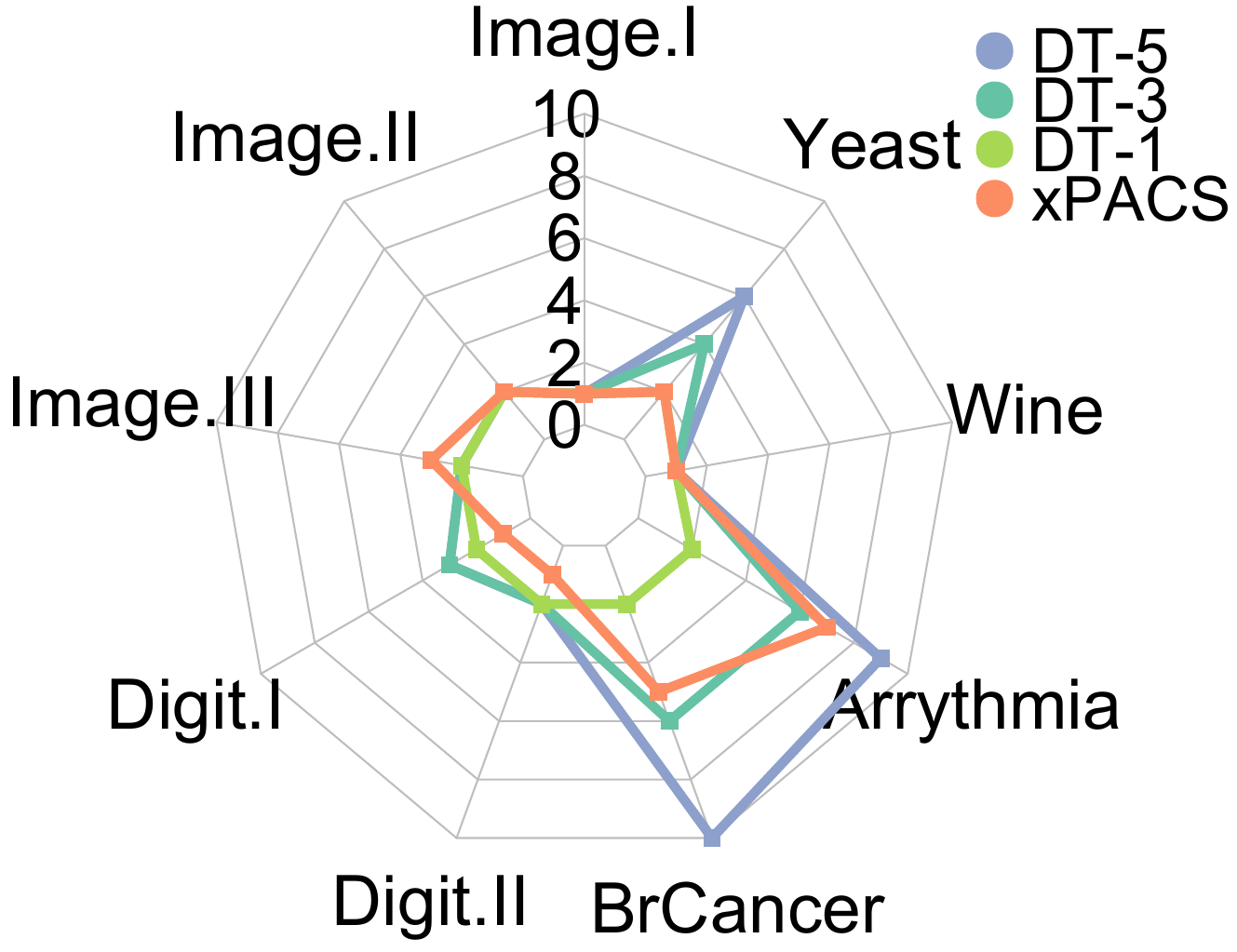}
		&
		\includegraphics[width=.4\textwidth]{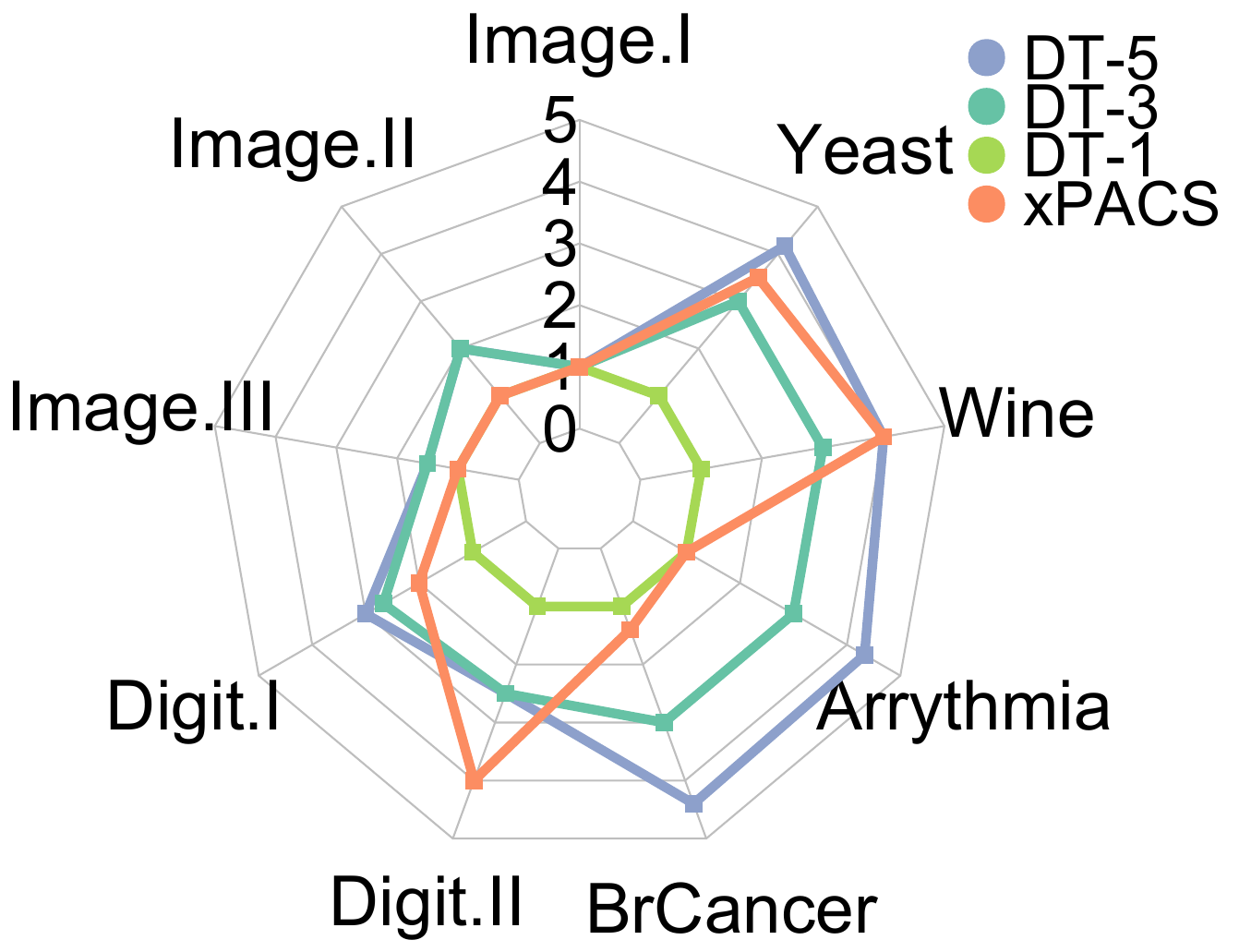}\\ \\
		(a) \# groups &
		(b) avg. length \\ \\
		\includegraphics[width=.4\textwidth]{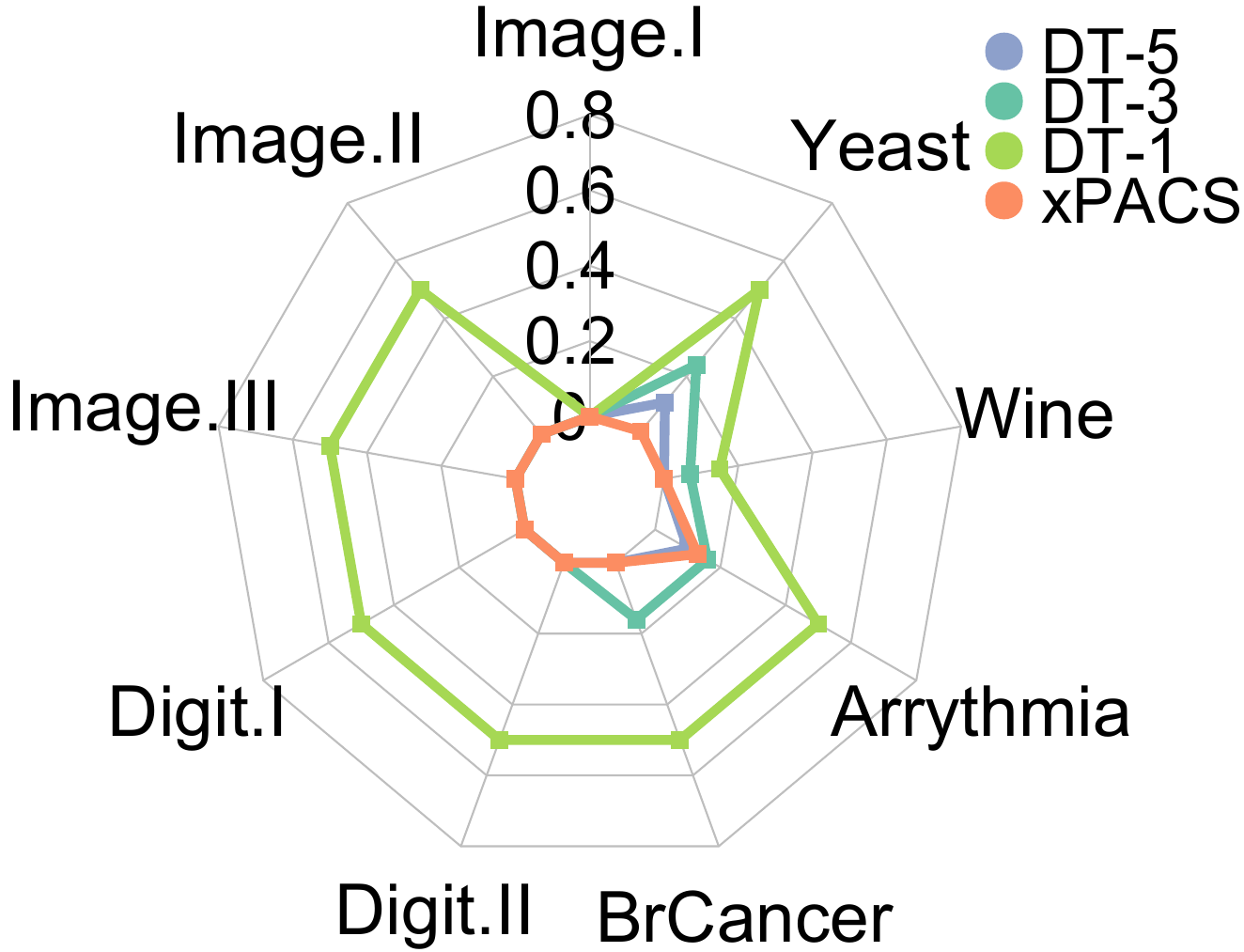}
		&
		\includegraphics[width=.4\textwidth]{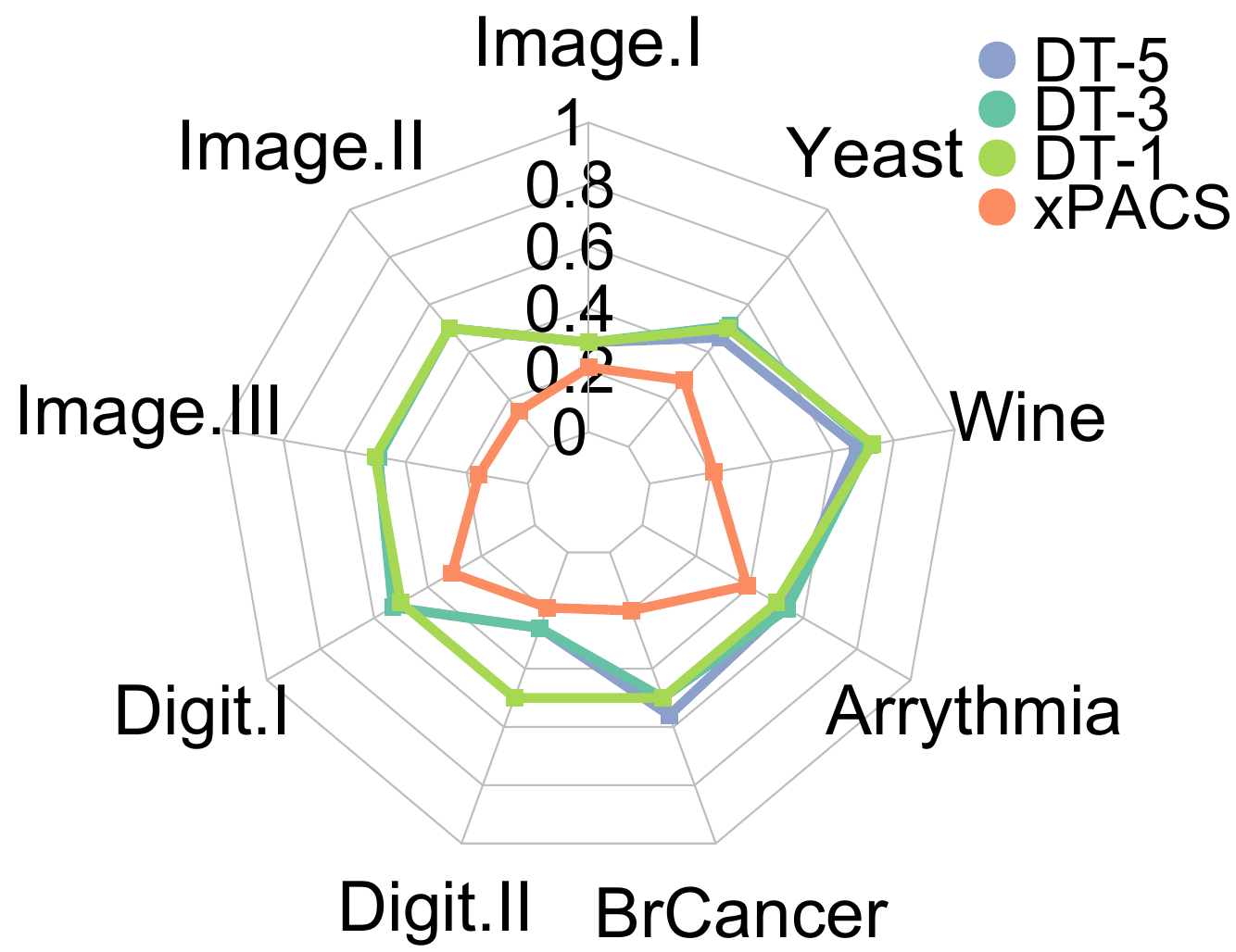}\\\\
		(c) avg. impurity &
		(d) avg. width \\\\
		\includegraphics[width=.4\textwidth]{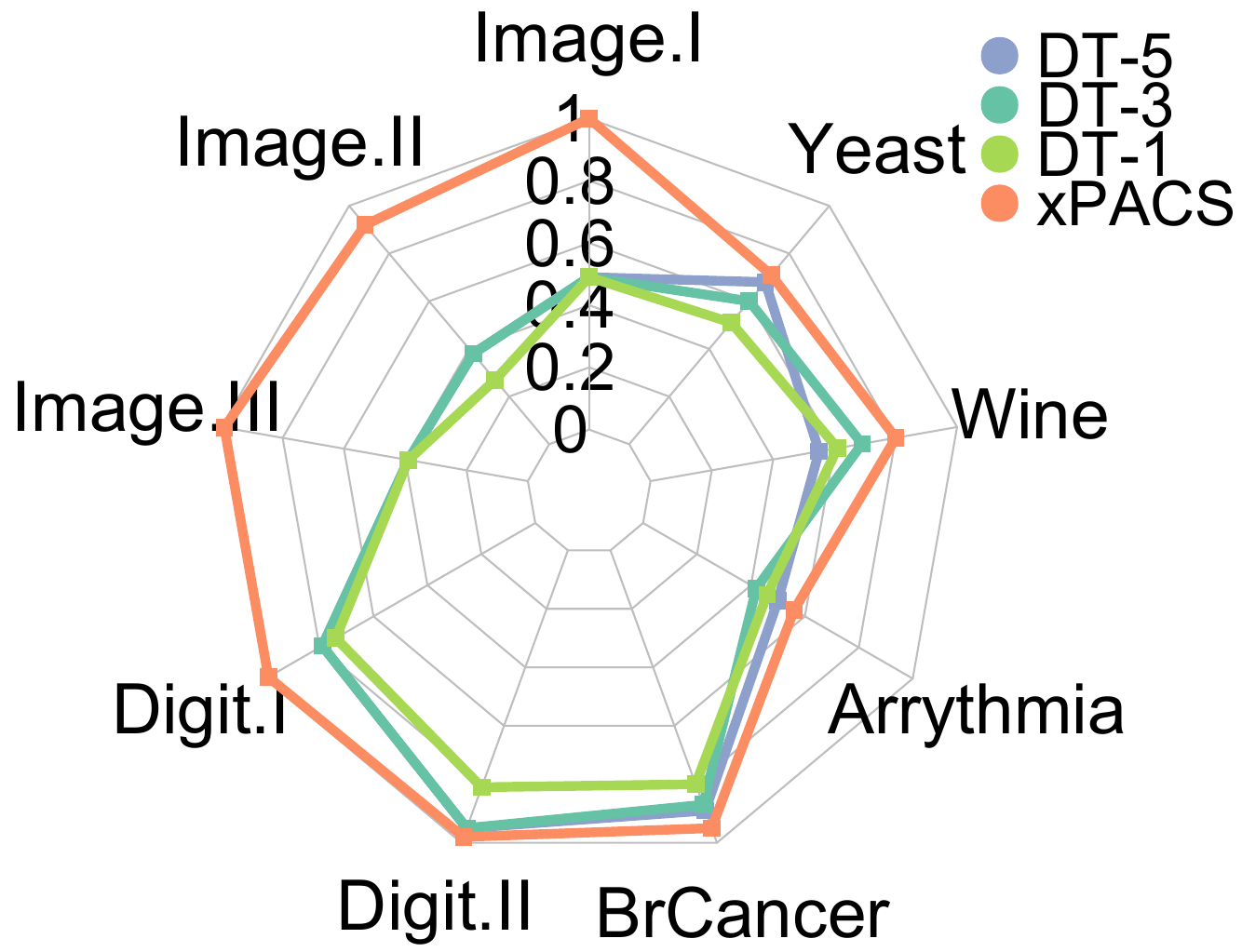}&\multicolumn{1}{b{.4\linewidth}}{\caption{\method~achieves the best balance between interpretability measures (a)--(d) [lower is better for all of them], and is significantly better at detection (e) [higher is better], as compared to several rule learners.\label{fig:radar}}}\\\\
		(e) AUPRC \\
		
	\end{tabular}
\end{figure}

\subsubsection{\method~vs. Rule Learners\\}
\label{ssec:dt}

Since in our work, we view anomalies as an already defined class, explaining anomalies is equivalent to describing an under represented target class \cite{wrobel1997algorithm}. Hence, we compare \method~to techniques that explain labeled data. To this end, we consider interpretable supervised models, specifically, inductive rule based learners that aim to extract rules from a labeled data set that are discriminative in nature. We argue that linear classifiers like logistic regression are not comparable to \method~for two key reasons. First, they {\em do not  group} the anomalies, but rather output a single separating hyperplane. Second, they {\em do not provide rules on the features}, but only feature coefficients, which could be negative (hard to interpret). Further, techniques aiming to explain black box predictions are not directly comparable to our method since most of the works aim to explain one instance at a time compared to the group wise explanations \method~provides.

We compare \method~to the following popular rule based learners. 
\begin{enumerate}
\item Decision Tree (DT): DT aims to partition (or group) the labeled data into pure leaves. 
We treat the leaves containing at least two anomalies analogous to our \ps. Each such leaf is characterized by the feature rules (or predicates) on the path from the root.
\item Ripper \cite{cohen1995fast}: Ripper is a popular inductive rule learner that sequentially mines for feature rules with high accuracy and coverage with the aim to achieve generalization. We use a publicly available implementation of Ripper in the Weka repository for our experiments and consider rules that are labeled anomalous.
\item RuleFit \cite{friedman2008predictive}: RuleFit is an ensemble learner where the base learner is a rule generated by a decision tree. A regression/classification is setup using the base learners to identify the rules that are important in discriminating the different classes. We use the publicly available RuleFit\footnote{R package pre : \url{https://CRAN.R-project.org/package=pre}} implementation and use the rules with non-zero coefficients with atleast two anomalies.
\end{enumerate}
  
To compare \method~with rule learners, it is not fair to use description length since the listed techniques do not explicitly optimize it. Instead, we use the following external interpretability measures proposed in \cite{journals/corr/LakkarajuKCL17} (all being lower the better):
(a) number of groups (anomalous packs), 
(b) avg. length of rules (pack dimensionality),
(c) avg. fraction of normal points within packs (impurity divided by $n$), and
(d) avg. interval width across feature rules.
In other words, an explanation with fewer groups, fewer rules, fewer exceptions, 
and smaller spread in features is considered more interpretable. 

DT has no means to choose the number of packs automatically.
Therefore, we report DT results for depths 1--5 as compared to \method~in Table \ref{tab:interpret},
averaged across all datasets. In addition to the interpretability measures, we report the detection performance in AUPRC (area under precision-recall curve) on held-out data (80-20 split) that quantifies the generalization of the subspace rules.
Results on individual datasets per measure are shown with radar charts in Fig. \ref{fig:radar}.
(See Table \ref{tab:big} for detailed results.)
Notice the trade-offs between the measures for DT:  while (c) and (d) tend to decrease with increasing depth, (a) and (b) increase. The lack of rule summarization in RuleFit is evident in the number of groups (a) where \method~consistently produces smaller number of explanations across various data sets. We also note that \method~produces tighter intervals (d) compared to Ripper emphasizing the concreteness of the explanations. Overall, \method~achieves the best trade-off with lower overall values  across the interpretability measures. Moreover, our signatures are significantly  better at detecting future anomalies. We present more detailed experiments on detection next.

\begin{table*}[!thb]
\vspace{-0.1in}
\caption{Rule learners and DT (with respective depths 1--5) compared to \method~across datasets on interpretability measures (a)--(d) [all lower the better] as well as detection performance AUPRC  [higher the better]. RuleFit leads to underspecified regression in \arr~and~\yeast~which we denote by NA.\label{tab:big}}
\centering
\scalebox{0.9}{\begin{tabular}{|c|c|c|c|c|c|c|c|c|c|c|}
\toprule
Measure & \begin{tabular}[c]{@{}c@{}}Dataset/\\ Model\end{tabular} & \begin{tabular}[c]{@{}c@{}}Image \\ I\end{tabular} & \begin{tabular}[c]{@{}c@{}}Image \\ II\end{tabular} & \begin{tabular}[c]{@{}c@{}}Image \\ III\end{tabular} & \begin{tabular}[c]{@{}c@{}}Digit\\  I\end{tabular} & \begin{tabular}[c]{@{}c@{}}Digit\\  II\end{tabular} & \begin{tabular}[c]{@{}c@{}}Br\\ Cancer\end{tabular} & \begin{tabular}[c]{@{}c@{}}Arry\\ thmia\end{tabular} & Wine & Yeast \\ \toprule
\multirow{6}{*}{\begin{tabular}[c]{@{}c@{}} (a) \\ number \\ of \\ groups\end{tabular}} & DT-5 & 1.00 & 2.00 & 2.00 & 3.00 & 2.00 & 10.00 & 9.00 & 1.00 & 6.00 \\  
 & DT-4 & 1.00 & 2.00 & 2.00 & 3.00 & 2.00 & 8.00 & 9.00 & 1.00 & 6.00 \\  
 & DT-3 & 1.00 & 2.00 & 2.00 & 3.00 & 2.00 & 6.00 & 6.00 & 1.00 & 4.00 \\  
 & DT-2 & 1.00 & 2.00 & 2.00 & 3.00 & 2.00 & 4.00 & 4.00 & 1.00 & 3.00 \\  
 & DT-1 & 1.00 & 2.00 & 2.00 & 2.00 & 2.00 & 2.00 & 2.00 & 1.00 & 2.00 \\  
 & RuleFit & 3.00 & 15.00 & 10.00 & 13.00 & 8.00 & 24.00 & NA & 11.00 & NA \\
 & Ripper & 1.00 & 2.00 & 2.00 & 2.00 & 2.00 & 3.00 & 5.00 & 1.00 & 4.00 \\
 & \method & 1.00 & 2.00 & 3.00 & 1.00 & 1.00 & 5.00 & 7.00 & 1.00 & 2.00 \\ \hline
\multirow{6}{*}{\begin{tabular}[c]{@{}c@{}} (b) \\ avg. \\ length \\ of rules\end{tabular}} & DT-5 & 1.00 & 2.00 & 1.50 & 3.00 & 2.50 & 4.40 & 4.33 & 4.00 & 4.17 \\  
 & DT-4 & 1.00 & 2.00 & 1.50 & 3.00 & 2.50 & 3.63 & 3.78 & 4.00 & 3.67 \\  
 & DT-3 & 1.00 & 2.00 & 1.50 & 2.67 & 2.50 & 3.00 & 3.00 & 3.00 & 3.00 \\  
 & DT-2 & 1.00 & 2.00 & 1.50 & 2.00 & 2.50 & 2.00 & 2.00 & 2.00 & 2.00 \\  
 & DT-1 & 1.00 & 1.00 & 1.00 & 1.00 & 1.00 & 1.00 & 1.00 & 1.00 & 1.00 \\  
 & RuleFit & 1.00 & 1.40 & 1.80 & 1.80 & 1.92 & 2.67 & NA & 1.81 & NA \\
 & Ripper & 1.00 & 1.00 & 1.50 & 2.00 & 1.50 & 1.67 & 1.80 & 2.00 & 1.25 \\
 & \method & 1.00 & 1.00 & 1.00 & 2.00 & 4.00 & 1.40 & 1.00 & 4.00 & 3.50 \\ \hline
\multirow{6}{*}{\begin{tabular}[c]{@{}c@{}} (c) avg. \\ fraction \\ of \\ normal \\ points\end{tabular}} & DT-5 & 0.00 & 0.00 & 0.00 & 0.00 & 0.00 & 0.00 & 0.10 & 0.00 & 0.11 \\  
 & DT-4 & 0.00 & 0.00 & 0.00 & 0.00 & 0.00 & 0.11 & 0.11 & 0.00 & 0.16 \\  
 & DT-3 & 0.00 & 0.00 & 0.00 & 0.00 & 0.00 & 0.16 & 0.16 & 0.07 & 0.24 \\  
 & DT-2 & 0.00 & 0.00 & 0.00 & 0.32 & 0.01 & 0.25 & 0.25 & 0.08 & 0.33 \\  
 & DT-1 & 0.00 & 0.50 & 0.50 & 0.50 & 0.50 & 0.50 & 0.50 & 0.15 & 0.50 \\  
 & RuleFit & 0.00 & 0.02 & 0.01 & 0.02 & 0.05 & 0.01 & NA & 0.05 & NA \\
 & Ripper & 0.00 & 0.01 & 0.00 & 0.00 & 0.00 & 0.01 & 0.10 & 0.03 & 0.01 \\
 & \method & 0.00 & 0.00 & 0.00 & 0.00 & 0.00 & 0.00 & 0.13 & 0.00 & 0.01 \\ \hline
\multirow{6}{*}{\begin{tabular}[c]{@{}c@{}} (d) \\ avg. \\ interval \\ width\end{tabular}} & DT-5 & 0.29 & 0.50 & 0.49 & 0.53 & 0.26 & 0.56 & 0.53 & 0.68 & 0.46 \\  
 & DT-4 & 0.29 & 0.50 & 0.49 & 0.53 & 0.26 & 0.54 & 0.53 & 0.68 & 0.51 \\  
 & DT-3 & 0.29 & 0.50 & 0.49 & 0.53 & 0.26 & 0.50 & 0.54 & 0.72 & 0.51 \\  
 & DT-2 & 0.29 & 0.50 & 0.49 & 0.51 & 0.29 & 0.50 & 0.50 & 0.79 & 0.52 \\  
 & DT-1 & 0.29 & 0.50 & 0.50 & 0.50 & 0.50 & 0.50 & 0.50 & 0.73 & 0.50 \\  
 & RuleFit & 0.28 & 0.25 & 0.39 & 0.53 & 0.47 & 0.61 & NA & 0.72 & NA \\
 & Ripper & 0.24 & 0.15 & 0.31 & 0.32 & 0.28 & 0.7 & 0.33 & 0.79 & 0.38 \\
 & \method & 0.21 & 0.15 & 0.16 & 0.31 & 0.19 & 0.20 & 0.39 & 0.21 & 0.28 \\ \hline\hline
\multirow{6}{*}{AUPRC} & DT-5 & 0.49 & 0.38 & 0.39 & 0.79 & 0.95 & 0.89 & 0.50 & 0.55 & 0.68 \\  
 & DT-4 & 0.49 & 0.38 & 0.39 & 0.79 & 0.95 & 0.86 & 0.50 & 0.55 & 0.55 \\  
 & DT-3 & 0.49 & 0.38 & 0.39 & 0.79 & 0.95 & 0.87 & 0.42 & 0.69 & 0.60 \\  
 & DT-2 & 0.49 & 0.38 & 0.39 & 0.79 & 0.95 & 0.86 & 0.47 & 0.61 & 0.67 \\  
 & DT-1 & 0.49 & 0.27 & 0.39 & 0.74 & 0.81 & 0.80 & 0.46 & 0.61 & 0.51 \\  
 & RuleFit & 0.93 & 0.93 & 0.51 & 0.98 & 0.97 & 0.90 & NA & 0.71 & NA \\
 & Ripper & 1.00 & 0.35 & 0.43 & 0.97 & 0.98 & 0.91 & 0.43 & 0.70 & 0.75 \\
 & \method & 1.00 & 0.92 & 0.99 & 0.99 & 0.98 & 0.95 & 0.56 & 0.80 & 0.71 \\ \bottomrule
\end{tabular}}
\end{table*}

\subsubsection{Ablation Study}
We study the importance of the refinement step discussed in \S\ref{ssec:refine} by performing an ablation study. To this end, we omit the refinement of hyper-rectangles into hyper-ellipsoids in \method~(denoted as ablated~\method). Recall that the primary reason we perform the refinement step is to cover more anomalous points and reduce the number of normal points in the packs (See Fig. \ref{fig:refine}). Hence, to showcase the benefit, in Table \ref{tab:ablation}, we compare the proportion of anomalous (higher is better) and normal points (lower is better) covered in the final packs obtained using \method~and the ablated~\method. In addition, we also report the \%-savings (higher is better) achieved in both cases. In \method, the summarization step (See \S\ref{ssec:summarize}) transmits the center and the diagonal matrix of the packs \S\ref{sec:mdlcoding}. To accommodate hyper rectangles, we modify this to transmit the upper and lower bounds of the hyper rectangles in the ablated \method. 

\begin{table}[!htbp]
\centering
\caption{Ablation Study: \method~vs. ablated~\method~(no refinement to ellipsoids).  Coverage of anomalous points (higher is better), coverage of normal points (lower is better), and \% savings (higher is better).}
\label{tab:ablation}
\scalebox{0.75}{\begin{tabular}{ll||ccccccccc}
\hline
 & \textbf{Method} & \multicolumn{1}{c}{\textbf{\begin{tabular}[c]{@{}c@{}}Images\\      I\end{tabular}}} & \multicolumn{1}{c}{\textbf{\begin{tabular}[c]{@{}c@{}}Images\\      II\end{tabular}}} & \multicolumn{1}{c}{\textbf{\begin{tabular}[c]{@{}c@{}}Images\\ III\end{tabular}}} & \multicolumn{1}{c}{\textbf{\begin{tabular}[c]{@{}c@{}}Digit\\ I\end{tabular}}} & \multicolumn{1}{c}{\textbf{\begin{tabular}[c]{@{}c@{}}Digit\\ II\end{tabular}}} & \multicolumn{1}{c}{\textbf{\begin{tabular}[c]{@{}c@{}}Br\\ Cancer\end{tabular}}} & \multicolumn{1}{l}{\textbf{\begin{tabular}[c]{@{}l@{}}Arry\\ thmia\end{tabular}}} & \multicolumn{1}{l}{\textbf{Wine}} & \multicolumn{1}{l}{\textbf{Yeast}} \\ \hline
\multirow{2}{*}{\begin{tabular}[c]{@{}l@{}}Coverage of\\ anom. points\end{tabular}} & x-PACS & \textbf{1.00} & \textbf{0.89} & \textbf{1.00} & \textbf{1.00} & \textbf{1.00} & \textbf{0.95} & \textbf{0.93} & \textbf{0.96} & \textbf{0.68} \\ \cline{2-11} 
 & \begin{tabular}[c]{@{}l@{}}ablated\\ x-PACS\end{tabular} & \textbf{1.00} & \textbf{0.89} & \textbf{1.00} & 0.96 & 0.89 & 0.88 & 0.78 & 0.92 & 0.62 \\ \hline
\multirow{2}{*}{\begin{tabular}[c]{@{}l@{}}Coverage of\\ normal points\end{tabular}} & x-PACS & \textbf{0} & \textbf{0} & \textbf{0} & \textbf{0} & \textbf{0} & \textbf{0.03} & \textbf{0.35} & \textbf{0.11} & \textbf{0.10} \\ \cline{2-11} 
 & \begin{tabular}[c]{@{}l@{}}ablated\\ x-PACS\end{tabular} & \textbf{0} & \textbf{0} & \textbf{0} & 0.01 & 0.01 & 0.05 & 0.53 & 0.18 & 0.13 \\ \hline
\multirow{2}{*}{\%-savings} & x-PACS & \textbf{99.75} & \textbf{88.53} & \textbf{99.51} & \textbf{99.83} & \textbf{99.72} & \textbf{93.74} & \textbf{92.92} & \textbf{97.04} & \textbf{98.04} \\ \cline{2-11} 
 & \begin{tabular}[c]{@{}l@{}}ablated\\ x-PACS\end{tabular} & \textbf{99.75} & \textbf{88.53} & \textbf{99.51} & 92.11 & 87.21 & 85.68 & 78.16 & 91.42 & 90.51 \\ \hline
\end{tabular}}
\end{table}

From Table \ref{tab:ablation}, we observe that \method~is indeed able to cover more anomalous points, while avoiding normal points in the final packs for all the datasets.
These results demonstrate the utility of the refinement step.


\subsection{Detection Performance}
\label{ssec:detection}

While not our primary focus, \method~can also be used to \textit{detect} anomalies.
Specifically, given the \ps~identified from historical/training data, a future test instance that falls in any one of the \ps~(i.e., enclosed within any hyper-ellipsoid in the \pg) can be flagged as an anomaly.\footnote{Note that, like any supervised method, \method~could only detect future instances of anomalies of known types.}

To measure detection quality, we compare \method~to 7 competitive baselines on all datasets.

\begin{enumerate}
\setlength{\itemsep}{0.1in}
 
 \item Mixture of {\sc $K$-Gaussians} on the anomalous points. $K\in \{1,2,\ldots, 9\}$ chosen at the ``knee'' of likelihood. Anomaly score of test instance: maximum of the probabilities of being generated from each cluster.

 \item {\sc KDE} on the normal points. 
 Gaussian kernel bandwidth chosen by cross-validation.  Anomaly score:  negative of the density at test point.

\item  {\sc NN}. Anomaly score: distance of test point to its nearest neighbor (nn) normal point in training set, divided by the  distance of that nn point to its own nearest normal point in training set. 

 \item  {\sc PCA+SVDD} on all points \cite{journals/ml/TaxD04}. A \textit{single} hyperball that aims to enclose anomalous points in the \textit{PCA-reduced} space\footnote{SVDD optimization diverged for some high dimensional datasets, therefore, we performed PCA as a preprocessing step.}, for which the embedding dimensionality is chosen at the ``knee'' of the scree plot. 
 Anomaly score: distance of test point from the hyperball's center.
 
\item {\sc DT} on all points, where we balance the data for training and 
regularize by tree-depth, chosen from $\{1,2,\ldots,30\}$ via cross-validation.
Anomaly score: number of anomalous samples in the leaf the test point falls into divided by leaf size. 
  
\item[6-7)] {\sc SVM-lin} \& {\sc SVM-RBF} on all points. Hyperparameters set by cross-validation.
Anomaly score: ``confidence'', i.e., distance from decision boundary.

\end{enumerate}

	

\begin{table*}[!t]
	\setlength{\tabcolsep}{3pt}
	\caption{Area under precision-recall curve (AUPRC) on anomaly detection.} 
	{
	\begin{center}
		 \vspace{-0.2in}
		\begin{tabular}{l|ccccccccc} 
			\toprule
			{\bf Method} & \hspace{-0.08in} \imageo  & \hspace{-0.1in} \imaget  & \hspace{-0.1in} \imageth & \hspace{-0.1in} \digito & \hspace{-0.1in} \digitt &\hspace{-0.1in}  \cancer & \hspace{-0.05in}\arr& \wine & \yeast \\\toprule
			{\sc $K$-Gaussians}	&\hspace{-0.08in} 0.182  &\hspace{-0.1in} 0.239  & 0.184 & 0.162 & 0.333 & 0.613  & 0.227  & 0.258  &  0.265 \\\hline
			{\sc KDE} 			&\hspace{-0.08in} 0.952  &\hspace{-0.1in} 0.978  & 0.987 & 0.989 & 0.997 & 0.981   & 0.571 & 0.667  &  0.681 \\\hline
			{\sc NN}			&\hspace{-0.08in} 0.491  &\hspace{-0.1in} 0.472  & 0.659 & 0.967 & 0.821 & 0.520  & 0.546  & 0.562  &  0.348  	\\\hline
			{\sc PCA+SVDD}		&\hspace{-0.08in} 0.286  &\hspace{-0.1in} 0.217  & 0.212  & 0.331 & 0.529 & 0.861  & 0.295  & 0.566  &	 0.606  	 \\\hline
			{\sc DT}			&\hspace{-0.08in} 0.802  &\hspace{-0.1in} 0.764  & 0.812 & 0.831 & 0.961 & 0.884  & 0.516  & 0.637  &  0.673 \\\hline
			{\sc SVM-Lin}		&\hspace{-0.08in} {\textbf{1.000}} &\hspace{-0.1in} {\textbf{1.000}} & {\textbf{1.000}}& 0.999 & 0.999 & {\textbf{0.984}}  & 0.755  & {\textbf{0.994}}  &  0.823  \\\hline
			{\sc SVM-RBF}		&\hspace{-0.08in} {\textbf{1.000}}& \hspace{-0.1in} {\textbf{1.000}} & {\textbf{1.000}} & {\textbf{1.000}} & {\textbf{1.000}} & 0.964  & {\textbf{0.810}}  & 0.984  &  {\textbf{0.861}}\\
		\hline\hline
			\method &\hspace{-0.08in} {\textbf{1.000}}  &\hspace{-0.1in} 0.921  & 0.990 & 0.993 & 0.976 & 0.951  & 0.564 &  0.799  &  0.701\\\bottomrule			 	
		\end{tabular}
	
		\label{tab:auprc}
	\end{center}
}
\end{table*}


We create 3 folds of each dataset, and in turn use 2/3 for training and 1/3 for testing, except the {\tt Images} datasets with the fewest anomalies for which we do leave-one-out testing. 
All points receive an anomaly score by each method as described above. 
\method's anomaly score for a test instance $\mx$ is the maximum $h_k(\mx) = \mx^T\mU_k\mx + \mw_k^T\mx + w_{0k}$ among all $p_k$'s in the \pg~resulting from training data. 
We rank points in decreasing order of their score, and report the area under the precision-recall curve in Table \ref{tab:auprc}.

{\sc SVM}s  achieve the highest detection rate, as might be expected.
However, kernel SVM cannot be interpreted.
Linear SVM, like LR, does not identify anomalous patterns nor does it produce any explicit feature rules. 
Notably, \method~outperforms all other baselines considerably across datasets, including DT, which produces the most interpretable output among the baselines as discussed in \S\ref{ssec:dt}.

\subsection{Scalability}

Finally, we quantify the scalability of \method~empirically.
To this end, we implement a synthetic data generator, parameterized by data size, total dimensionality, maximum pack size and dimensionality and number of anomalous packs. Anomalies are sampled from a small range per feature within a subspace, and normal points are sampled from the reverse of the histogram densities derived from the anomalous points.

Fig. \ref{fig:runtime} shows the running time w.r.t. data size $m$, 
dimensionality $d$, average pack dimensionality $d_{\text{avg}}$, and total number of anomalies $a$. All plots demonstrate near-linear scalability. Recall that we showed  exponential complexity w.r.t. maximum pack dimensionality $d_{\max}$. Notably, we observe linear time growth on average.

\begin{figure}[h]
	\centering
\begin{tabular}{cp{0.05in}c} 
	\includegraphics[width=0.4\textwidth]{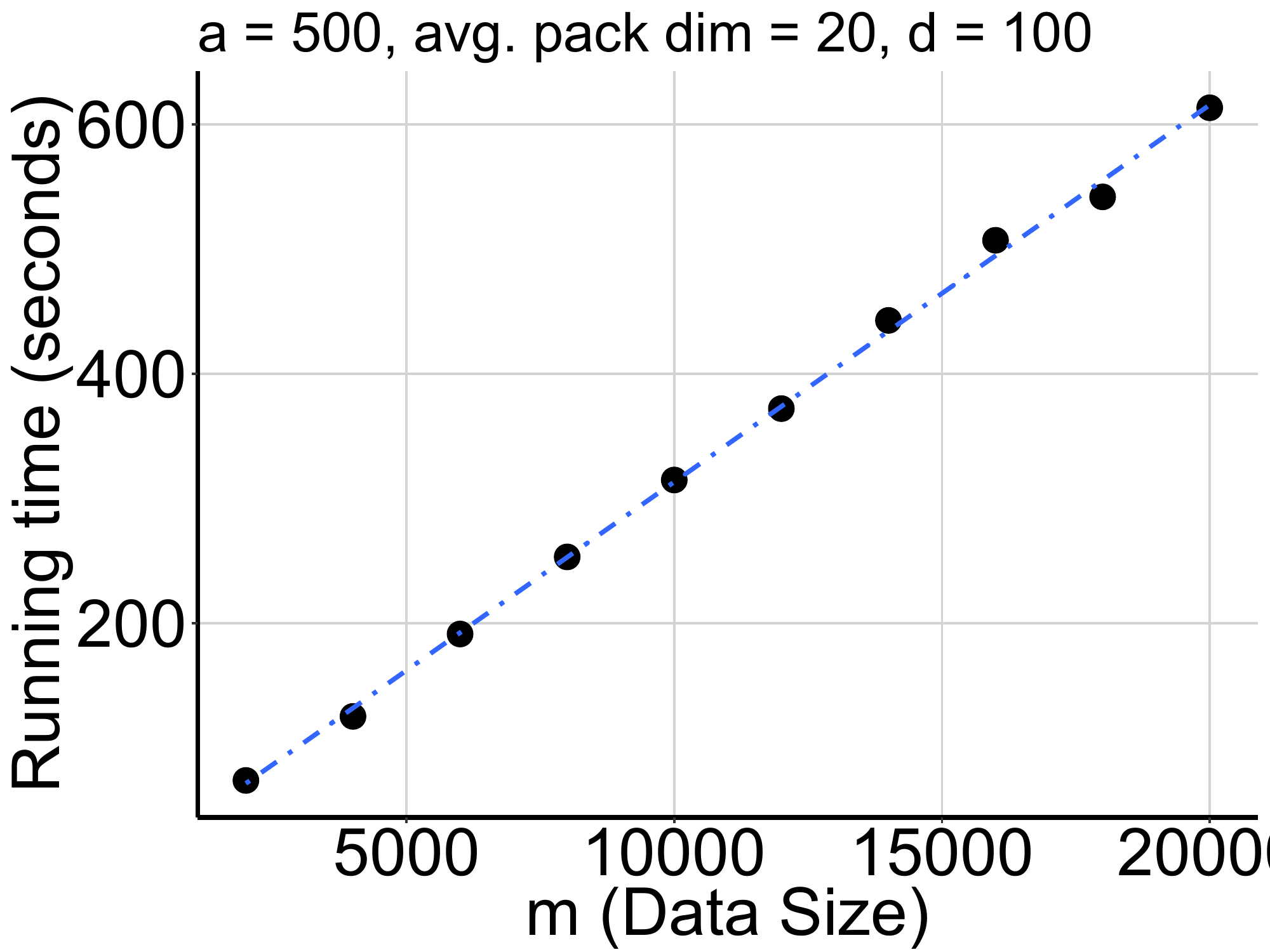} &&
	\includegraphics[width=0.4\textwidth]{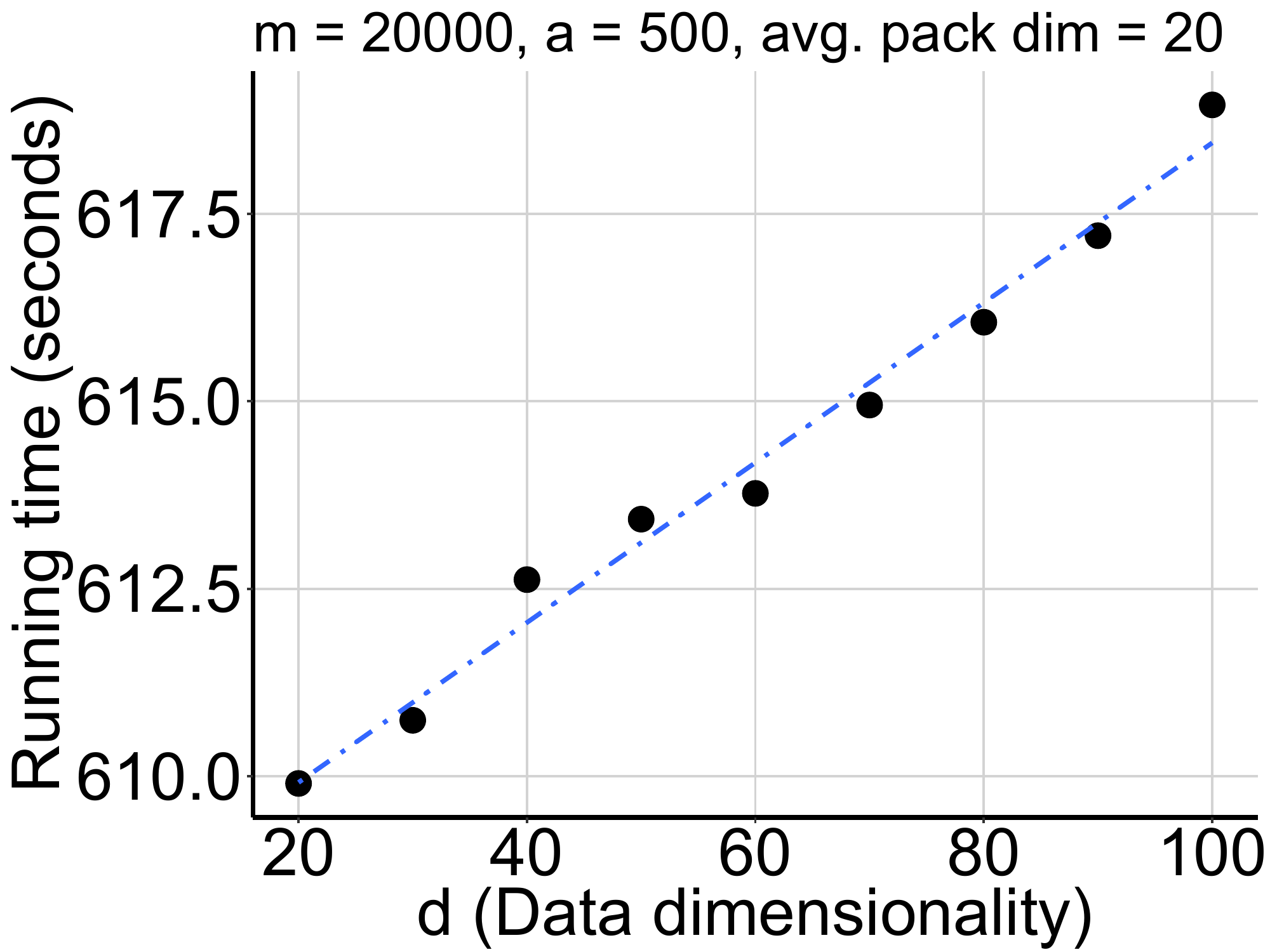}\\\\
 	\includegraphics[width=0.4\textwidth]{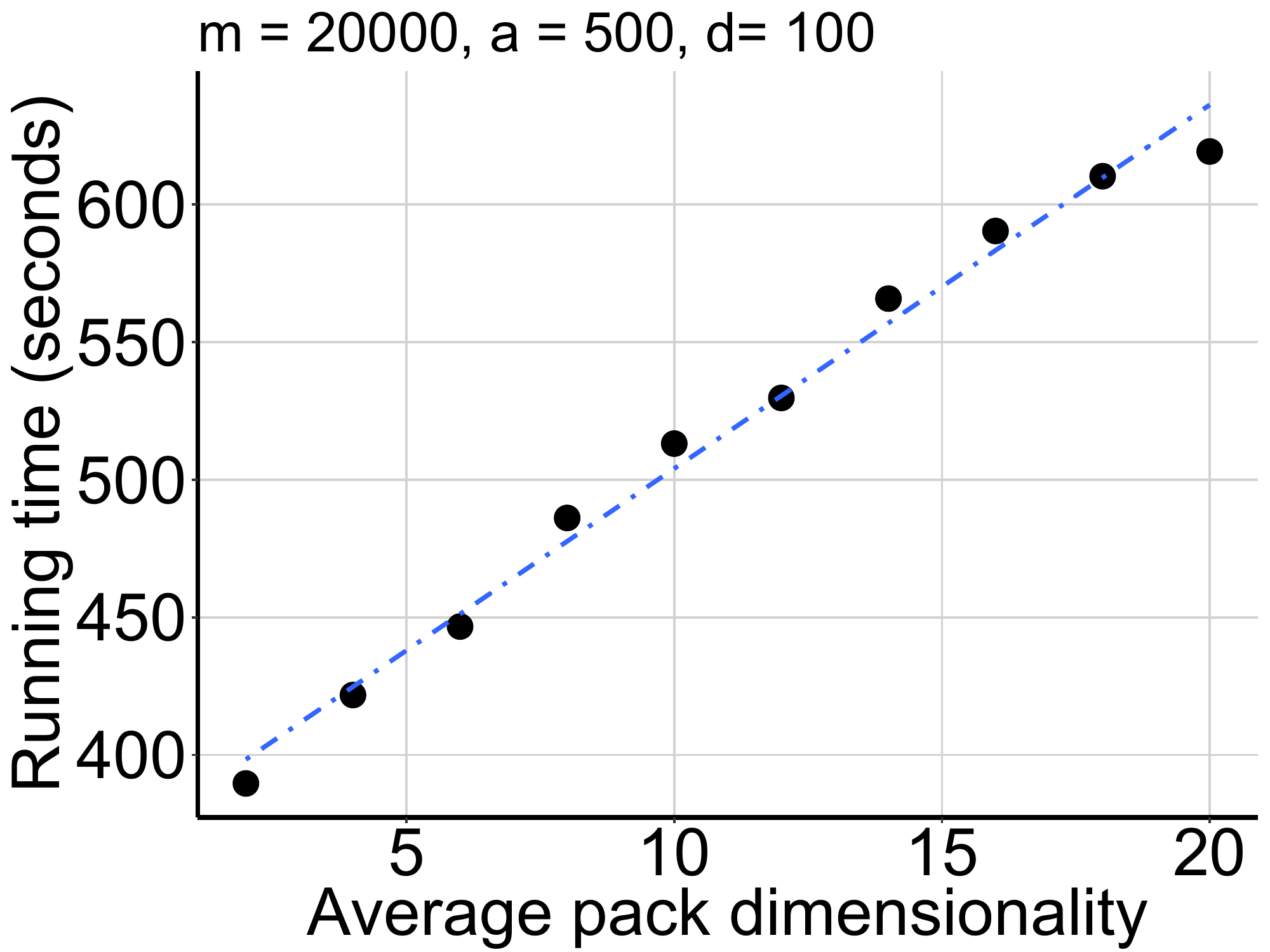}&&
	\includegraphics[width=0.4\textwidth]{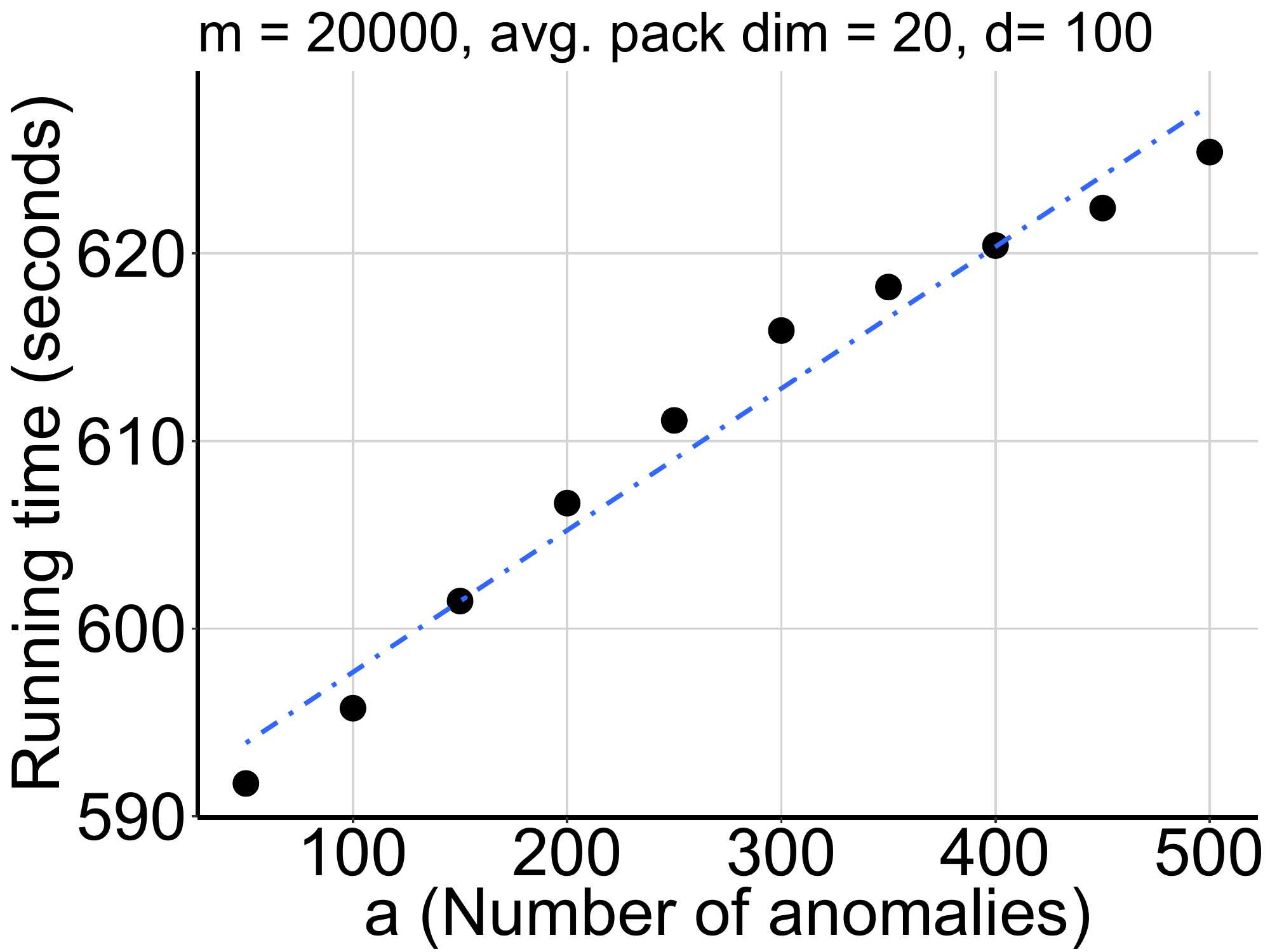} \\
\end{tabular}
\caption{\method~scales linearly with  input size.}
\label{fig:runtime}
 \vspace{-0.25in}
\end{figure}
\section{Related Work}
\label{sec:related}

Related areas of study span across outlier explanation, subspace clustering and subspace outlier detection, data description, subgroup discovery, rule learning, rare class discovery and approaches aiming to explain black box classifiers. 
We show the highlights of related work in the context of our desiderata in Table \ref{tab:related}.

\begin{table*}[!t]
	\caption{{Comparison of related work in terms of properties D1--D5 in reference to our Desiderata  (see \S\ref{ssec:des}).} } 
	\vspace{-0.15in}
	\begin{center}
		\scalebox{0.83}{\begin{tabular}{r||cccccc} 
			\toprule
			\begin{sideways}Property\end{sideways}  & \rotatebox{25}{Explain as-a-group?} & \hspace{-0.6in}\rotatebox{25}{Multiple groups?} & 
			\hspace{-0.4in}\rotatebox{25}{Find subspace?} & 
			\hspace{-0.4in}\rotatebox{25}{Rules on features?} & 
			\hspace{-0.5in}\rotatebox{25}{Discriminative?} & 
			\hspace{-0.4in}\rotatebox{25}{Minimal?} \\\toprule
			Subspace clustering \cite{conf/sigmod/AgrawalGGR98,conf/kdd/ChengFZ99,conf/icdm/SequeiraZ04,conf/icdm/KriegelKRW05,conf/icdm/MullerAGKS09} & \hspace{-0.9in} \checkmark & \hspace{-1.4in} \checkmark& \hspace{-1.1in} \checkmark& \hspace{-1.2in} \checkmark&  & \\
			Projected clustering \cite{conf/sigmod/AggarwalPWYP99,conf/icdm/MoiseSE06} & \hspace{-0.9in} \checkmark & \hspace{-1.4in} \checkmark& \hspace{-1.1in} \checkmark& \hspace{-1.2in} &  & \\\hline
			(data descr.)	SVDD  \cite{journals/ml/TaxD04}, 
			SSSVDD \cite{conf/pkdd/GornitzKB09}
			& \hspace{-0.9in} \checkmark & \hspace{-1.4in} & \hspace{-1.1in} & \hspace{-1.2in} \checkmark& \hspace{-1.2in} \checkmark & \\
			(rare category) RACH \cite{conf/icdm/HeTC10} 
			& \hspace{-0.9in} \checkmark & \hspace{-1.4in} & \hspace{-1.1in} & \hspace{-1.2in} \checkmark& \hspace{-1.2in} \checkmark & \\
			(rare category) PALM \cite{conf/sdm/HeC10} 
			& \hspace{-0.9in} \checkmark & \hspace{-1.4in} & \hspace{-1.1in}\checkmark & \hspace{-1.2in} \checkmark& \hspace{-1.2in} \checkmark & \\ \hline
			Knorr and Ng \cite{conf/vldb/KnorrN99} 
			& \hspace{-0.9in}  & \hspace{-1.4in} & \hspace{-1.1in}\checkmark & \hspace{-1.2in} & \hspace{-1.2in} \checkmark & \hspace{-0.8in} \checkmark\\ \hline
			RefOUT \cite{conf/cikm/KellerMWB13}, 
			CP \cite{conf/aaai/KuoD16}		
			LODI \cite{conf/pkdd/DangMAN13}, 
			LOGP \cite{conf/icde/DangANZS14} 
			& \hspace{-0.9in}  & \hspace{-1.4in} & \hspace{-1.1in}\checkmark & \hspace{-1.2in} & \hspace{-1.2in} \checkmark & \hspace{-0.8in} \\ \hline
			EXPREX \cite{journals/tkde/AngiulliFP13} 
			& \hspace{-0.9in}\checkmark  & \hspace{-1.4in} & \hspace{-1.1in}\checkmark & \hspace{-1.2in}\checkmark & \hspace{-1.2in} \checkmark & \hspace{-0.8in} \\ \hline
			(Explaining black box classifiers) LIME \cite{ribeiro2016should} 
			& \hspace{-0.9in}  & \hspace{-1.4in} & \hspace{-1.1in}\checkmark & \hspace{-1.2in} & \hspace{-1.2in} \checkmark & \hspace{-0.8in} \\ \hline
			EXstream \cite{conf/edbt/ZhangDM17}& \hspace{-0.9in}\checkmark  & \hspace{-1.4in} & \hspace{-1.1in}\checkmark & \hspace{-1.2in}\checkmark & \hspace{-1.2in} \checkmark & \hspace{-0.8in}\checkmark \\ \hline
			Explainer \cite{pevny2014interpreting} & \hspace{-0.94in}  & \hspace{-1.4in} & \hspace{-1.1in}\checkmark & \hspace{-1.24in} \checkmark& \hspace{-1.2in} \checkmark & \hspace{-0.8in} \\ \hline
			SRF \cite{kopp2014interpreting}, Krimp \cite{vreeken2011krimp}, RuleFit \cite{friedman2008predictive}, Ripper \cite{cohen1995fast}
			& \hspace{-0.94in} \checkmark & \hspace{-1.4in} \checkmark& \hspace{-1.1in}\checkmark & \hspace{-1.24in} \checkmark& \hspace{-1.2in} \checkmark & \hspace{-0.8in} \\ 

			\hline\hline
			\method~[this paper] 	& \hspace{-0.9in}\checkmark  & \hspace{-1.4in}\checkmark & \hspace{-1.1in}\checkmark & \hspace{-1.2in}\checkmark & \hspace{-1.2in} \checkmark & \hspace{-0.8in}\checkmark \\ \bottomrule
		\end{tabular}}
		\label{tab:related}
	\end{center}
\end{table*}

\vspace{0.025in}
\textbf{\em Outlier explanation:}
The seminal work \cite{conf/vldb/KnorrN99} provides what they call ``intensional knowledge'', per outlier, by identifying  the minimal subspaces in which it deviates.
To find the optimal subset of features that differentiate the outliers from normal points,  \cite{conf/aaai/KuoD16} formulates a constraint programming problem and \cite{conf/cikm/KellerMWB13} takes a subspace search route. Similarly, \cite{conf/icde/DangANZS14,conf/pkdd/DangMAN13,conf/icdm/MicenkovaNDA13}  aim to explain one outlier at a time by features that participate in projection directions that maximally separate them from normal points. 
\textit{All} existing work in this area assume the outliers are \textit{scattered} and strive to explain  them \textit{individually} rather than in groups. Therefore, they cannot identify anomalous patterns.
Moreover, they do not focus explicitly on 
shortest description, let alone in a principled, information-theoretic way as we address in this work.

Extending earlier work \cite{journals/tods/AngiulliFP09} on explaining single outliers,  \cite{journals/tkde/AngiulliFP13} aims to explain groups of outlier points or what they call sub-populations.
They search for $\langle$context, feature$\rangle$ pairs, where the (single) feature can differentiate as many outliers as possible from normal points that share the same context.
It is important to note that their goal is to explain a group (or set) of outliers collectively and not particularly explaining them with multiple groups.
Similarly, \cite{conf/edbt/ZhangDM17} describes anomalies grouped in time. They construct explanatory Conjunctive Normal Form rules using features with low segmentation entropy, which quantifies how intermixed normal and anomalous points are. They heuristically discard highly correlated features from the rules to get minimal explanations.
Again, they strive to explain all the anomalies as a group, and not in multiple groups.

 We found that SRF (sapling random forest) \cite{kopp2014interpreting} aims to explain and cluster outliers similar to our problem setting. They build on their earlier work \cite{pevny2014interpreting}, which explains outliers one at a time by learning an ensemble of  small decision trees (called saplings) and combining the rules (from root to leaf in which the outlier lies) across the trees. SRF then groups the outliers using k-means clustering based on the similarity of their explanations. However, there is no guarantee on the minimality of their overall description, since grouping is done as a post-processing step and by using a local-optima-prone clustering algorithm. Moreover, there is not much discussion in their paper on the choice of the number of clusters, nor the format of the final description after the anomalies are clustered. We are not aware of a publicly available implementation of SRF to compare with the proposed method and hence omit it from the experimental evaluation.

\vspace{0.025in}
\textbf{\em Subspace clustering and Outlier detection:}
There is a long list of work on subspace clustering \cite{conf/sigmod/AgrawalGGR98,conf/kdd/ChengFZ99,conf/icdm/KriegelKRW05,conf/icdm/MullerAGKS09,conf/icdm/SequeiraZ04}  that aim to find high-density clusters in feature subspaces. (See \cite{Parsons2004,Kriegel09} for reviews.)
Some others are projection based that work in {transformed} feature spaces \cite{conf/sigmod/AggarwalPWYP99,conf/icdm/MoiseSE06}.
However, these are unsupervised methods and their goal is not explaining labeled data, nor they focus on {minimal} explanations.
There is also a long list of subspace-based outlier \textit{detection} methods \cite{conf/icde/KellerMB12,conf/pakdd/KriegelKSZ09,conf/icdm/KriegelKSZ12,conf/icdm/MullerAIMB12,conf/icde/MullerASS08,conf/icde/MullerSS11}, however, they do not address the description problem.

\vspace{0.025in}
\textbf{\em Data description and Rare class discovery:}
Another line of related work is data description \cite{conf/pkdd/GornitzKB09,journals/ml/TaxD04} and rare class (or category) characterization \cite{conf/icdm/HeTC10,conf/sdm/HeC10}.
The main goal behind all of these work is to explain the data (normal and rare-class points, respectively) within a  separating hyperball.
However, all of them assume that those points cluster in a \textit{single} hyperball, and with the exception of \cite{conf/sdm/HeC10}, search for a  \textit{full-dimensional} enclosing hyperball. As such, they do not address the curse-of-dimensionality or identify multiple clusters embedded in different subspaces.

 \vspace{0.025in}
\textbf{\em Subgroup discovery and Rule learning:}
Classification rule learning algorithms have the objective of generating models consisting of a set of rules inducing properties of all the classes of the target variable, while in subgroup discovery the objective is to discover individual rules of interest (See \cite{herrera2011overview} for an overview). The seminal works in rule based learners Ripper \cite{cohen1995fast}, CN2 \cite{clark1989cn2} sequentially mine for rules with high accuracy and coverage. More recently, \cite{friedman2008predictive} propose RuleFit, an ensemble learner where the base learner is a rule generated by a decision tree. A regression/classification is setup using the base learners to identify the rules that are important in discriminating the different classes. Few other works in ensemble learners \cite{deng2014interpreting,hara2016making} build ensemble trees that are interpretable. While the rules are interpretable, they are learnt with an aim to achieve generalization. This is different from our work where we primarily focus on describing the under represented class (anomalies) without emphasizing the generalizability.

SubgroupMiner \cite{klosgen2002census} extends seminal works in subgroup discovery (MIDOS \cite{wrobel1997algorithm}, Explora \cite{klosgen1996explora}) to handle numerical and categorical attributes. SD \cite{gamberger2002expert} propose an interactive subgroup discovery technique based on the variation of beam search algorithms guided by expert knowledge. Krimp \cite{vreeken2011krimp} propose a greedy MDL based approach to mine frequent item sets to describe points of a given class. Further, a classifier is proposed by mining frequent item sets on various classes independently. Discriminative constrast pattern mining techniques \cite{conf/cikm/LoekitoB08} assume nominal features and aim to extract contrast patterns (itemsets) with large support difference across categories.

A key difference in the techniques discussed above to our work is the summarization scheme discussed in \S\ref{ssec:summarize}. Our MDL based encoding scheme leads to a submodular rule selection with theoretical guarantees that the current subgroup discovery or rule learning algorithms do not explore. The improvement of the summarization scheme is evident from the experiments (See Table \ref{tab:big}) comparing \method~to rule learners on various interpretability measures.


\vspace{0.025in}
\textbf{\em Explaining black-box classifiers:}
Approaches such as \cite{fong2017interpretable,koh2017understanding,montavon2017methods,ribeiro2016should} aim to explain the decision made by a black box predictor. LIME \cite{ribeiro2016should} finds nearest neighbors to single input labeled example to construct a linear interpretable model that is locally faithful to the predictor. Further, authors propose a sub modular optimization framework to pick instances that are representative of the predictions of a classifier. Other works \cite{fong2017interpretable,koh2017understanding} explain the model by perturbing the features to quantify the influence on prediction. All of these works do not aim to explain multiple instances collectively, as such they do not address the summarization of the explanations and are hence not comparable to the proposed method.

All in all, none of the existing related methods provides all of 1) \textit{collective}, rather than individual, explanations,
2) explanations for \textit{multiple} anomalous groups,
3) in characterizing \textit{subspaces}, 4) using {interpretable} \textit{feature rules} that can 5)  \textit{discriminate}  anomalies from normal points, 6) aiming to \textit{minimize} description length.



\section{Conclusion}
\label{sec:conclusion}

We considered the problem of explaining given anomalies in high-dimensional datasets in groups. Our key idea is to describe the data by the patterns it contains. We proposed \method~for
identifying a small number of low-dimensional anomalous patterns that  ``pack'' similar, clustered anomalies and ``compress'' the data most succinctly. 
In designing \method, we combined ideas from data mining (bottom-up algorithms with pruning), optimization
(nonlinear quadratic discrimination), information theory (data encoding with bits), and theory of algorithms (nonmonotone submodular function maximization).
Our notable contributions are listed as follows.
\bit
\setlength{\itemsep}{0.025in}
\item \textbf{A new desiderata} for the anomaly description problem, enlisting five desired properties (D1--D5),

\item \textbf{A new problem formulation}, for explaining a given set of anomalies \textit{in groups} (D1), 

\item \textbf{Description algorithm}  \method, which provides low-dimensional (D2), interpretable (D3), and discriminative (D4) feature rules per anomalous group, 

\item \textbf{A new anomaly encoding scheme}, based on the minimum description length (MDL) principle, 
that lends itself to efficient optimization to produce \textbf{minimal explanations} (D5) with guarantees.

%
%
\eit

Through experiments on real-world datasets, we showed the effectiveness of \method~both in explanation and detection and superiority to competitive baselines.
For reproducibility, all of our source code and datasets
are publicly released   at {{\url{https://github.com/meghanathmacha/xPACS}}}.

\section*{Acknowledgments}{\vspace{-0.1in}\small
	This research is sponsored by 
	NSF CAREER 1452425 and IIS 1408287, 
	ARO Young Investigator Program under Contract No. W911NF-14-1-0029, 
and the PwC Risk and Regulatory Services Innovation Center at Carnegie Mellon University.
	Any conclusions expressed in this material are of the authors and do not necessarily reflect the views, either expressed or implied, of the funding parties.}

\bibliographystyle{abbrv}
\bibliography{refs}

%

\end{document}